\newcommand{\bbR}{\mathbb{R}}
\newcommand{\bbN}{\mathbb{N}}
\newcommand{\gammaShape}{\gamma_{\mathrm{shp}}}
\newcommand{\gammaSetShape}{\Gamma_{\mathrm{shp}}}
\newcommand{\gammaNoise}{\gamma_{\mathrm{n}}}
\newcommand{\gammaSetSphere}[1]{\mathbb{B}^{#1}}
\newcommand{\gammaRotNoise}{\gamma_{\mathrm{rn}}}
\newcommand{\gammaTransNoise}{\gamma_{\mathrm{tn}}}
\newcommand{\nominalOptVar}{z}
\newcommand{\uncOptVar}{\Sigma}
\newcommand{\llConstrIndex}{l}
\newcommand{\finiteIndex}{i}
\newcommand{\staticObsSet}{\mathcal{O}}
\newcommand{\bigO}{\mathcal{O}}
\newcommand{\closestObs}{\mathrm{CO}}
\newcommand{\mapRToW}{\rho}
\newcommand{\nWorld}{n_{\mathrm{w}}}
\newcommand{\rotMat}{R}
\newcommand{\Compactcdots}{\mathinner{\cdotp\mkern-2mu\cdotp\mkern-2mu\cdotp}}
\newcommand{\ellipsoidE}[2]{\mathcal{E}\left({#1}, {#2}\right)}
\DeclareMathSymbol{\shortminus}{\mathbin}{AMSa}{"39}
\DeclareMathOperator*{\argmin}{arg\,min}
\DeclareMathOperator{\bdSet}{bd}
\DeclareMathOperator{\discSet}{disc}
\DeclareMathOperator{\signedDist}{SD}
\DeclareMathOperator{\vecMat}{vec}
\DeclareMathOperator{\so3Exp}{exp}
\newtheorem{theorem}{Theorem}
\newtheorem{remark}[theorem]{Remark}
\newtheorem{lemma}[theorem]{Lemma}
\newtheorem{definition}[theorem]{Definition}
\newtheorem{corollary}[theorem]{Corollary}
\newacronym{sip}{SIP}{semi-infinite programming}
\newacronym{nlp}{NLP}{nonlinear programming}
\newacronym{mpc}{MPC}{model predictive control}
\newacronym{qp}{QP}{quadratic programming}
\newacronym{sqp}{SQP}{sequential quadratic programming}
\newacronym{zoro}{zoRO}{zero-order robust optimization}
\newacronym{ocp}{OCP}{optimal control problem}
\newacronym{ode}{ODE}{ordinary differential equation}
\newacronym{qcqp}{QCQP}{quadratically constrained quadratic programming}
\newacronym{esdt}{ESDT}{Euclidean signed-distance transform}
\newacronym{lqr}{LQR}{linear quadratic regulator}
\newacronym{mppi}{MPPI}{model predictive path integral}
\newacronym{mip}{MIP}{mixed integer programming}
\newacronym{dwa}{DWA}{dynamic window approach}
\newacronym{teb}{TEB}{timed elastic band}
\algnewcommand{\LeftComment}[1]{\Statex \({\color{gray}\triangleright}\) \textcolor{gray}{\textit{#1}}}
	\DeclareRobustCommand\footnote[1]{%
		\footnotemark
		\expanded{\AddToHookNext{env/figure/after}%
			{\noexpand\setcounter{footnote}{\thefootnote}%
				\noexpand\footnotetext\unexpanded{{#1}}}}}}
\begin{document}

	\title{Semi-Infinite Programming for Collision-Avoidance in Optimal and Model Predictive Control}

	\author{Yunfan Gao$^{1, 2*}$, Florian Messerer$^{2}$, Niels van Duijkeren$^{1, 3}$, Rashmi Dabir$^{1, 4}$, Moritz Diehl$^{2,5}$
		\thanks{$^{1}$~Robert Bosch GmbH, Corporate Research, Stuttgart, Germany.
		$^{2}$~Department of Microsystems Engineering (IMTEK), University of Freiburg, Germany.
		$^{3}$~Presently with Fernride GmbH, Munich, Germany.
        $^{4}$~Presently with Institute of Automatic Control, RWTH Aachen University, Germany.
        $^{5}$~Department of Mathematics, University of Freiburg, Germany.}
        \thanks{* Corresponding author. Email: yunfan.gao@imtek.uni-freiburg.de}
		\thanks{The research that led to this paper was funded by Robert Bosch GmbH.
		This work was also supported by DFG via projects 504452366 (SPP 2364) and 525018088, by BMWK via 03EI4057A and 03EN3054B, and by the EU via ELO-X 953348.}
		\thanks{Source code is available at https://doi.org/10.5281/zenodo.19116921.}
	}

	% The paper headers
	% \markboth{IEEE Transactions on Robotics}%
	% {Shell \MakeLowercase{\textit{et al.}}: A Sample Article Using IEEEtran.cls for IEEE Journals}

	% FIXME:
	% \IEEEpubid{0000--0000/00\$00.00~\copyright~2021 IEEE}
	% Remember, if you use this you must call \IEEEpubidadjcol in the second
	% column for its text to clear the IEEEpubid mark.

	\maketitle

	\begin{abstract}
		% 1200 characters
		This paper presents a novel approach for collision avoidance in optimal and model predictive control,
			in which the environment is represented by a large number of points
			and the robot as a union of padded polygons.
		The conditions that none of the points shall collide with the robot can be written in terms of an infinite number of constraints per obstacle point.
		We show that the resulting semi-infinite programming (SIP) optimal control problem (OCP) can be efficiently tackled through a combination of two methods:
			local reduction and an external active-set method.
		Specifically, this involves iteratively identifying the closest point obstacles,
		determining the lower-level distance minimizer among all feasible robot shape parameters,
		and solving the upper-level finitely-constrained subproblems.

		In addition,
		    this paper addresses robust collision avoidance in the presence of ellipsoidal state uncertainties.
		Enforcing constraint satisfaction over all possible uncertainty realizations extends the dimension of constraint infiniteness.
		The infinitely many constraints arising from translational uncertainty are handled by local reduction together with the robot shape parameterization,
		    while rotational uncertainty is addressed via a backoff reformulation.

		A controller implemented based on the proposed method
		    is demonstrated on a real-world robot running at 20\,Hz,
		    enabling fast and collision-free navigation in tight spaces.
		An application to 3D collision avoidance is also demonstrated in simulation.
	\end{abstract}

	\begin{IEEEkeywords}
		Optimization and optimal control,
        collision avoidance,
        motion and path planning,
        mobile robots.
	\end{IEEEkeywords}

	\section{Introduction}

Optimization-based collision avoidance for robots has been a highly active area of research and engineering.
This paper presents an approach using \gls{sip},
    which is characterized by a finite number of decision variables and an infinite number of constraints,
	typically parameterized by variables ranging in a compact set~\cite{Hettich1993, Gramlich1995, Stein2003, Lopez2007}.

In the context of robotic applications,
    safety conditions require that every point of a robot body should maintain at least a predefined distance from every point within a specified obstacle set.
Formulating these conditions as infinitely many constraints provides great flexibility in robot shape description, enabling a precise modeling of robot shapes.
Compared to collision penalty methods~\cite{Ratliff2009, Kalakrishnan2011},
    the constraint formulation allows the robot to operate in closer proximity to obstacles while guaranteeing satisfaction of the safety conditions.
While the resulting \gls{sip} problem may appear challenging to solve,
    \gls{sip} problems can, under certain regularity assumptions, be efficiently solved through a sequence of \gls{nlp} problems~\cite{Hettich1993}.

\begin{figure}[t]
	\centering
	\includegraphics[ width=0.88\linewidth]{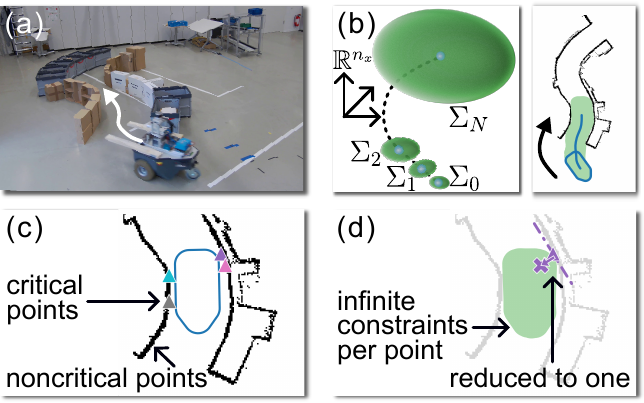}
	\caption{
	Method overview:
	(a)~The robot is driven by an SIP-based robust MPC controller running at 20\,Hz.
	(b)~State uncertainty sets are modeled as ellipsoids and are updated in a zero-order scheme.
	The green tube on the right shows the union of the occupied space of the uncertain robot over all discrete time steps.
	(c)~The environment is modeled with sampled points over the obstacle surfaces (black points).
	For each time step~$k$, a subset of critical points is maintained (colored triangles).
	(d)~The uncertainties and the robot-shape parameterization lead to infinitely many constraints, which are locally reduced to one linearized constraint per critical point, based on the lower-level maximizer (colored cross).
	The algorithm iterates through the steps in (b)--(d).}
	\label{fig:method-overview}
\end{figure}

With respect to obstacles,
    commonly-used shape descriptions include unions of convex shapes~\cite{Schulman2014, Zimmermann2022, Zhang2021, Dietz2023, Thirugnanam2022a, Tracy2023} and the \gls{esdt}~\cite{Ratliff2009, Oleynikova2016}.
\mbox{TrajOpt}~\cite{Schulman2014} identifies the signed distance minimizer between convex shapes
    and uses the normalized vector between the closest points to approximate the gradient of the collision-avoidance constraints.
To address the numerical issue that in general the signed distance minimizer is not a continuous function of the robot pose,
    the authors of~\cite{Zimmermann2022} add an additional regularization term to the objective function of the distance minimization problem.
Alternatively, collision avoidance between convex objects can be enforced by separating-plane constraints\cite{Zhang2021, Dietz2023},~\cite[Chapter 8]{Boyd2004},
    which require additional optimization variables and constraints that scale with the number of obstacles.
The \gls{esdt} encodes the minimum distance to all obstacles in one scalar function,
    allowing collision-avoidance constraints to be imposed by a single inequality against a prescribed threshold.
However, it suffers from nondifferentiability at the Voronoi diagram, where multiple obstacles are equidistant.

This paper models the robot as a (union of) padded polygon(s) and represents the quasi-static environment as a set of point obstacles.
Unlike the convex-shape environment representation,
    this environment representation introduces minimal approximation, especially for environments with many nonconvex obstacles.
Additionally, the representation can be obtained directly from LiDAR point clouds, thereby simplifying the perception module for environment segmentation into shape primitives and facilitating faster online feedback.
More importantly,
    this representation ensures the satisfaction of certain regularity conditions required for solving \gls{sip} problems by local reduction~\cite{Gramlich1995},
    which are not satisfied when representing environments by unions of general convex shapes or using the \gls{esdt}.
The infinitely constrained problem can thereby be efficiently solved
    by iteratively solving finitely constrained subproblems,
    in which the constraint gradients are computed exactly in a straightforward manner.
Although modeling the environment by point clouds results in a considerable number of constraints on individual obstacles,
    our proposed approach employs an external active set method~\cite{Chung2009} and iteratively solves subproblems that only consider selected critical point obstacles.
Similar methods are also explored \mbox{in~\cite{Hauser2021, Zhang2025, Liang2025}}.

In practical applications,
    obstacle positions may contain uncertainties and change over time.
The realization of these uncertainties is often only measured when the robot reaches a position where its sensors have a suitable point-of-view.
Feedback control provides an appropriate solution for such challenges through continuous update of the control policy based on real-time updated point clouds.
While the previously discussed papers mostly focus on path planning and trajectory planning,
this paper addresses collision avoidance in the setting of optimal control and \gls{mpc}.

While the planning and control problems share the objective of collision-free navigation through narrow spaces,
    the control problems have distinct features.
In our context, incorporating an accurate dynamics model in the \gls{ocp}, capturing effects that kinematic models cannot represent, such as overshoot, facilitates safe high-speed navigation.
Moreover, \gls{mpc} requires solving \glspl{ocp} within a short time frame---often on the order of tens of milliseconds.
Frequent updates of the control inputs allow the robot to adapt to the changes of the environment and to the disturbances.
Although this paper enforces collision avoidance only at discrete time grids,
   the \glspl{ocp} employ a comparatively finer discretization than typical planning problems, providing a good approximation to continuous-time collision avoidance.

Up to this point, the focus has been on nominal collision avoidance under the assumption of perfect modeling and ideal state estimation.
Real-world applications, however, are subject to disturbances and uncertainties.
As optimal solutions are often at the edge of constraints,
   even small disturbances can cause constraint violation.
This impairs safety, which is a particularly critical concern in robotic applications.

In addition to nominal constraint satisfaction,
    this paper addresses constraint robustification in the presence of ellipsoidal state uncertainties.
Compared to precomputed robust invariant sets~\cite{Gao2014} and scalar tubes~\cite{Koehler2021},
    modeling the uncertainties as ellipsoidal sets and solving the uncertainty trajectories jointly with the nominal trajectories in the \gls{ocp} provides a more accurate and less conservative modeling of the state uncertainties, especially for nonlinear systems.
The introduced computational burden is relieved by adopting the \gls{zoro} method~\cite{Feng2020Adjoint, Zanelli2021zoRO} and updating the uncertainty trajectories in a zero-order fashion.

To guarantee constraint satisfaction across all possible uncertainty realizations within the propagated ellipsoidal uncertainty sets, we extend the \gls{sip} formulation to include state uncertainties.
While the extension to robust constraint satisfaction is straightforward for our robot shape representation and constraint formulation,
    it can be challenging for other robot shape representations, such as those using the \gls{esdt}~\cite{Hauser2021}.
Unlike prior SIP-based methods that address either shape or state uncertainty alone, our approach accounts for both simultaneously.
Translational uncertainty is addressed via local reduction together with the shape parameterization, and rotational uncertainty is incorporated through a backoff bounding its greatest influence on the constraint values.

An implementation of the proposed method achieves real-time feasibility on compute hardware typically found in industrial mobile robots.
Experiments conducted on a real mobile robot demonstrate the efficacy of the approach in several illustrative settings.
The results show that the new algorithm and implementations lead to a controller that successfully navigates through corridors formerly very troublesome to achieve.
A demonstration of a 3D car-seat placement task in simulation illustrates the applicability of the proposed approach to 3D collision avoidance,
    and to complex shapes.

\vspace{-3pt}
\subsection{Contributions}
\label{sec:intro-contributions}
The main contributions of this paper are as follows:
\begin{enumerate}[label=(\alph*)]
	\item An \gls{sip}-based formulation for collision-free optimal control and \gls{mpc}.
	The formulation features accurate robot modeling and an easily-adaptable point-cloud environment representation, and can be applied to various tasks such as navigation and manipulation.
	\item An efficient method for solving the nominal \gls{sip}-\gls{ocp} without further approximations.
	\item An extension of (a) and (b) to a robust setting in which the state uncertainty leads to an additional dimension of constraint infiniteness.
	\item An efficient open-source implementation of the algorithm enabling an autonomous mobile robot to navigate through narrow spaces in real-world experiments.
\end{enumerate}

\subsection{Notation}
Let $\mathcal{B}$ and $\mathcal{D}$ be two sets.
Their Minkowski sum is given by $\mathcal{B} \oplus \mathcal{D} := \left\lbrace b + d \mid b\in \mathcal{B}, d \in  \mathcal{D}  \right\rbrace $.
Let
\begin{equation}
	\gammaSetSphere{n} := \left\lbrace \tau \left\vert \tau \in \bbR^{n}, \left\| \tau\right\|_2 \leq 1 \right.\right\rbrace
	\label{eq:gamma-set-sphere-def}
\end{equation}
denote a unit ball of dimension $n$.
An ellipsoidal set in $\bbR^{n}$ is defined by
\begin{equation}
	\ellipsoidE{b}{M} := \left\lbrace \left.M^{\frac{1}{2}}\tau + b \right\vert \tau \in \gammaSetSphere{n} \right\rbrace,
	\label{eq:ellipsoid-def}
\end{equation}
where $b \in \bbR^{n} $ is the center of the ellipsoid and $M \in\mathbb{S}^{n}_{+}$ a positive semi-definite matrix.
The matrix $M^{\frac{1}{2}}\in\mathbb{S}^{n}_{+}$ denotes the symmetric square-root of $M$.
For a matrix $M$,
the stacking of its columns, i.e., the vectorization, is given by $ \vecMat(M).$
Let $\left(\tau_1, \Compactcdots, \tau_N \right)$ denote the concatenation of
$N$ vectors $\tau_1, \Compactcdots, \tau_N$.
The block diagonal matrix of $N$ matrices $M_1, \Compactcdots, M_N$ is denoted by
$\mathrm{blk\hbox{-}diag} \left( M_1, \Compactcdots, M_N \right)$.
The Cartesian product of the two sets is given by
$\mathcal{B} \times \mathcal{D} :=
\left\lbrace \left( b, d \right) \mid b\in \mathcal{B}, d \in  \mathcal{D}  \right\rbrace $.

The $l_2$ norm and the $l_{\infty}$ norm of a vector $\tau\in\bbR^{n_{\tau}}$ are denoted by $\left\| \tau \right\|_2 $ and $\left\| \tau \right\|_{\infty}$, respectively.
The quadratic norm of a vector $\tau$ with respect to a positive semidefinite matrix $Q$ is denoted by $\left\| \tau \right\|_Q^2 := \tau^{\top} Q \tau $.
Consider a function $g:\bbR^{n_{\tau}}\to\bbR^{n_m}, \tau \mapsto g\left( \tau\right) $.
We define the Jacobian matrix $\frac{\partial g }{\partial \tau} \!\left( \tau\right) $ as a matrix in $\bbR^{n_m \times n_{\tau}}$.
For scalar functions ($n_m=1$), we denote the gradient vector by $\nabla g(\tau) \in\bbR^{n_{\tau}}$.
A set of natural numbers possibly containing 0 in the interval
$[b, \tau]$ is denoted by $\bbN_{\left[ b, \tau\right] }$.

{
	Let $\text{SO}\!\left(n\right)$ denote the special orthogonal group in $n$ dimensions.
	For a vector $\tau\in\bbR^{3}$, $[\tau]_{\times}$ denotes the corresponding skew-symmetric matrix
	\begingroup	\setlength{\arraycolsep}{2pt} \renewcommand*{\arraystretch}{0.9}
	$[\tau]_{\times} = -[\tau]_{\times}^\top = \begin{bmatrix}
		0 & -\tau_z & \tau_y \\
		\tau_z & 0 & -\tau_x \\
		-\tau_y & \tau_x & 0
	\end{bmatrix}$\endgroup;
    for $\theta \in \bbR$, $[\theta]_{\times}$ is defined as $\begin{bmatrix} 0 & -\theta \\ \theta & 0 \end{bmatrix}$.
	The exponential map that transfers elements of the Lie algebra to the Lie group is denoted by $\so3Exp([\cdot]_{\times})$
	with $\so3Exp(\cdot)$ being the matrix exponential.
}
The key symbols used in the paper are listed in Table~\ref{tab:symbols}.

\begin{table}[ht]
	\vspace{-10pt}
	\centering
	\caption{Key symbols used in this paper.}
	\label{tab:symbols}
	\vspace{-2pt}
	\begin{tabular}{@{}l l r@{}}
		\hline \hline
		Symbol & Description & Introd. in \\
		\hline \hline
		$x$ & Nominal system state &  \eqref{eq:intro-robot-system-ode}\\
		$\tilde{x}$ & Disturbed system state &  \eqref{eq:disturbed-system-discrete-time-dynamics}\\
		$u$ & Control input &  \eqref{eq:intro-robot-system-ode}\\
		$z$ & Upper-level opt. var. & \eqref{eq:general-SIP-def} \\
		$\gamma$ & Lower-level opt. var. & \eqref{eq:general-SIP-def}\\
		$\Gamma$ & Feasible set of $\gamma$ & \eqref{eq:general-SIP-def}\\
		$\gammaShape$ & Robot shape param. & \eqref{eq:polygon-def}\\
		$h(z, \gamma^*(z))$ & Upper-level constr. & \eqref{eq:general-reduction}\\
		$\check{h}(z)$ & Linearized constr. & \eqref{eq:general-linerized-constr-simplified}\\
		$\mathcal{I}$ & Finite index set & \eqref{eq:general-NLP-many-constr-def} \\
		$\finiteIndex$ & Index of finite index set & \eqref{eq:general-NLP-many-constr-def}\\
		$j$ & Iteration index & \\
		\hline \hline
	\end{tabular}
\end{table}

    \subsection{Related Work}
\label{sec:related-work}
This subsection reviews optimization-based collision avoidance, outlines sampling-based and learning-based methods for robot control, as well as robust collision avoidance and \gls{sip}-based methods for safety-critical systems.

\paragraph{Constrained-Based Collision Avoidance}
Collision-free conditions between convex shape primitives can be formulated as separating-plane constraints, derived using the strong duality theorem~\cite{Zhang2021},~\cite[Chapter 8]{Boyd2004}.
For objects described as polyhedra,
    the authors of~\cite{Richards2002} present a \gls{mip} formulation.
When the shape primitives are ellipsoids,
    the collision-free condition can be formulated using a parametric over-approximation of the Minkowski sum of the ellipsoids~\cite{Gao2024}.
Methods such as~\cite{Marcucci2023, Li2024, Deits2015} construct polytopic safety corridors and require robots to stay in these regions.
Although such safety corridors result in convex feasible regions, these inner approximations may eliminate feasible solutions.
A method for computing differentiable collision-free parametric corridors is presented in~\cite{Arrizabalaga2024}.
For environments with obstacles of unstructured shapes,
   the \gls{esdt}, which provides minimum distance information to all obstacles,
   is widely used for collision avoidance~\cite{Hauser2021, Oleynikova2016, Li2020}.

Collision-free trajectory (or path) optimization can be formulated as an \gls{sip} problem.
This formulation is especially useful for continuous-time collision avoidance as the time (or path) parameterization can be treated as one dimension of constraint infiniteness~\cite{Wang2022, Zhang2024a, Hauser2021}.
Representing the environment as a point cloud and the robot using the \gls{esdt},
    the authors of~\cite{Hauser2021} detect the points and time instants with the deepest penetration into the robot and solve subproblems with constraints for these detected, finitely many, cases.
In~\cite{Wang2022}, the authors enforce constraints to guarantee that the point at the minimum distance along the entire path is collision-free.
The minimum-distance condition is formulated by embedding the first-order necessary and second-order sufficient conditions of the lower-level problem into the upper-level problem.
The paper~\cite{Zhang2024a} presents a subdivision-based method.
Iterative subdivision progressively tightens the motion bounds for each interval, producing a locally optimal solution.

\paragraph{Penalty-Based Collision Avoidance}
Including collision penalties in cost functions is also widely used for collision avoidance.
The artificial potential field is proposed in~\cite{Khatib1985}.
In CHOMP~\cite{Ratliff2009} and STOMP~\cite{Kalakrishnan2011},
    the objective function combines an obstacle cost and performance terms that, for example, encourage smooth trajectories.
CHOMP uses a covariant gradient descent method while STOMP employs a derivative-free stochastic optimization to minimize the cost functions.
The \gls{teb} method~\cite{Roesmann2017} optimizes the trajectory with respect to execution time and separation from obstacles, penalizing constraint violations quadratically.
The authors of~\cite{Liang2024} use separating hyperplanes to keep robot links and obstacles apart.
Barrier functions that penalize penetration relative to these planes are added to the objective function.
By leveraging an efficient matrix factorization,
    the Hessian matrix can be computed efficiently.
While unconstrained optimization problems are typically easier to solve,
    setting a large penalty weight can lead to numerical ill-conditioning~\cite[Section 17.1]{Nocedal2006}.
Moreover, smooth penalty functions provide only (soft) approximations of the constraints.
Quadratic penalties, for example, would allow constraint violations~\cite{Han1979},
    and log-barrier functions often lead to unnecessary additional clearances.

\paragraph{Sampling-Based and Learning-Based Control}
Sampling-based methods generate candidate trajectories,
evaluate their feasibility,
and select the optimal sample(s).
A predictive sampling approach based on MuJoCo physics for real-time \gls{mpc} is presented in~\cite{Howell2022}.
The \gls{dwa}~\cite{Fox1997} evaluates feasible velocity samples
and chooses the one maximizing a weighted score based on path alignment, proximity to the goal, and clearance from obstacles.
The \gls{mppi} approach solves finite-horizon \glspl{ocp} using model-based random rollouts~\cite{Williams2016, Williams2017}.
In highly cluttered environments,
the sampling-based methods often require a very large number of random samples to obtain a near-optimal collision-free trajectory~\cite{Williams2017}.

Many learning-based approaches, particularly end-to-end controllers, demonstrate promising performance~\cite{Chib2024, Chen2024, Yan2024, Bojarski2016, Liu2021}.
In~\cite{Bojarski2016}, a convolutional neural network (CNN) is trained to map raw pixels from a single front-facing camera directly to steering commands.
The authors of~\cite{Liu2021} propose an end-to-end deep learning model for autonomous vehicle control based on raw 3D LiDAR measurements.

\paragraph{Robust Collision Avoidance in Optimal Control}
Tube-based MPC ensures robust constraint satisfaction against disturbances by surrounding the nominal trajectory with a tube that accounts for all possible disturbances within a predefined set.
In\mbox{\cite{Gao2014, Lee2022, Majumdar2017, Singh2017}}, a robust invariant set is precomputed offline to tighten the collision avoidance constraints with the linearization error treated as an additional disturbance source.
Computing disturbance tubes online can reduce the impact of the linearization error,
but it increases the online computational burden.
To address this,
the authors of~\cite{Garimella2018, Gao2023, Zhang2024} alternate between finding the robust tubes and optimizing over the nominal trajectory while ensuring that the states within these robust tubes remain collision-free.

Scenario-based methods sample a finite number of disturbance trajectory realizations, referred to as scenarios,
and impose constraint satisfaction for all sampled scenarios.
In the context of robust collision avoidance,
the authors of~\cite{Batkovic2021} use a scenario-MPC approach to manage obstacle position uncertainties modeled by multi-modal distributions.
Achieving a high confidence of constraint satisfaction often requires a large number of samples~\cite{Calafiore2006}, leading to a substantial computational burden.
To reduce this cost, the authors of~\cite{Groot2021} prune scenarios based on geometric metrics.
Inspired by the methods used in \gls{sip}, the authors of\mbox{\cite{Zagorowska2023}} generate scenarios by iteratively adding interim worst-case scenarios.
Although scenario-based methods can capture diverse distributions, ellipsoidal tubes provide an effective model for the primary uncertainties in this paper (process noise and initial state uncertainties), while avoiding the sampling-related drawbacks.

\paragraph{SIP in Safety-Critical Systems}
\Gls{sip} naturally arises in robust \glspl{ocp}, where constraints must hold for all possible realizations of the modeled uncertainties.
When the disturbance set and the feasible set are both ellipsoidal,
    the infinite constraints can be reformulated as linear matrix inequalities (LMIs)\mbox{\cite{Jia2005}}.
Formulations with state-dependent uncertainties yield generalized \glspl{sip}\mbox{\cite{Wehbeh2025}}.
Enforcing control barrier functions (CBFs) under uncertainties in the control matrices of control-affine systems leads to convex \glspl{sip}, which can be solved efficiently using cutting-plane methods\mbox{\cite{Wei2023}}.
Enforcing safety over a continuous-time horizon also yields infinite constraints\mbox{\cite{Hauser2021}}.
In\mbox{\cite{Zhang2024b}}, a semi-infinitely constrained Markov decision process (MDP) is formulated as a linear \gls{sip} problem.

\subsection{Outline}

This paper is organized as follows.
Section~\ref{sec:prob-statement} presents the problem statement.
In Section~\ref{sec:numerical-methods}, we introduce the numerical foundations of our approach.
Section~\ref{sec:methods-solve-robust-OCP} presents the method for solving the nominal and robust~\glspl{ocp},
    including treatments for infinite constraints, numerous obstacles, and ellipsoidal state uncertainties in robust constraint satisfaction.
The specific method and implementation for optimal control and \gls{mpc} of a mobile robot are presented in Section~\ref{sec:mobile-robot-ocp-mpc},
    followed by numerical evaluations and real-world \gls{mpc} experiments in Section~\ref{sec:results}, with comparison to the state-of-the-art methods.
Section~\ref{sec:car-seat-placement} demonstrates the approach on a car-seat placement task.
The parameter settings and experiment configurations are provided in the Appendix.

	\section{Collision-Avoidance Optimal Control}
\label{sec:prob-statement}

This section first describes the robot shape parameterization utilized in this paper.
We then present the collision-avoidance constraint and the \textit{nominal} \gls{ocp} formulation.
Subsequently, we detail the disturbance modeling considered in this paper.
This section concludes with the \textit{robust} \gls{ocp} formulation.

\begin{figure}[t]
	\centering
	\subfloat[Padded polygon in 2D]{\includegraphics[width=0.45\linewidth, trim={0.5cm 1.0cm 0 2.0cm}, clip]{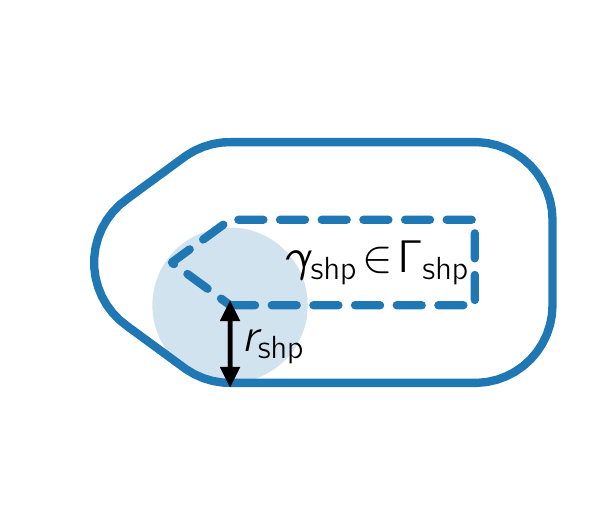}
		\label{fig:robot-dilated-polygon} }
	\hfill
	\subfloat[Padded polyhedron in 3D]{\includegraphics[width=0.45\linewidth, trim={4.6cm 2.6cm 1.7cm 2.8cm}, clip]{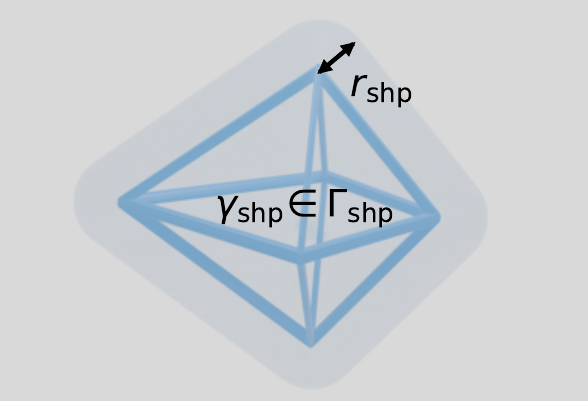}
		\label{fig:robot-capsule}}
	\caption{Robot shapes considered in this paper.}
	\label{fig:robot-shape}
	\vspace{0.3\baselineskip}
\end{figure}

\vspace{-0.2\baselineskip}
\subsection{Robot Shape Parameterization}
\label{sec:robot-shape-description}
\vspace{-0.1\baselineskip}
We model the robot shape as a polygon (or a polyhedron) that is padded by a circle (or a sphere), see Fig.~\ref{fig:robot-dilated-polygon}.
Define the polygon as
\begin{equation}
	\gammaSetShape := \left\lbrace \gammaShape \in \bbR^{\nWorld} \mid A \gammaShape + b \leq 0  \right\rbrace,
	\label{eq:polygon-def}
\end{equation}
where $A \in \bbR^{n_{\mathrm{h}}\times \nWorld}$ and $b \in \bbR^{n_{\mathrm{h}}}$.
Here, $n_{\mathrm{h}}$ denotes the number of half planes,
  and $\nWorld\in\{2, 3\}$ is the dimensionality of the workspace.
The region occupied by the robot is a function of the robot state $x \in \mathbb{R}^{n_x}$.
Let $ p_{\mathrm{c}} : \bbR^{n_x}\rightarrow \bbR^{\nWorld}$ and $\rotMat:\bbR^{n_x} \rightarrow \text{SO}\!\left(\nWorld\right)$ denote the mapping from the state to a translation vector and a rotation matrix, respectively.
The occupied space can then be parameterized by
\begin{equation}
	\left\lbrace p_{\mathrm{c}}(x)  + \rotMat(x) \gammaShape \mid \gammaShape \in \gammaSetShape \right\rbrace  \oplus \left\lbrace p \mid \|p\|_2 \leq r_{\mathrm{shp}}  \right\rbrace,
	\label{eq:polygon-occupied-space}
\end{equation}
where $r_{\mathrm{shp}} > 0$ is the radius of the padding.
Although the formulation in~\eqref{eq:polygon-occupied-space} considers the shape model of one padded polygon,
    the method presented in this paper can be straightforwardly extended to the objects that are a union of multiple padded polygons, as will be demonstrated in Section~\ref{sec:car-seat-placement}.

\subsection{Environment Modeling and Collision Avoidance}
\label{sec:env-modeling}
We represent the environment using a set of points on the surface of larger obstacles.
Let the coordinates of a point obstacle be denoted by $p_{\mathrm{o}} \in \mathbb{R}^{\nWorld}$, and the set of all point obstacles by~$\mathcal{O}$.
The collision-free condition is formulated such that no point obstacle is within a Euclidean distance of $r_{\mathrm{shp}}$ from the polygon (or the polyhedron):
\begin{equation}
	\left\|  p_{\mathrm{c}}(x) + \rotMat(x) \gammaShape - p_{\mathrm{o}} \right\|_2 \geq  r_{\mathrm{shp}}, \forall \gammaShape\! \in \!\gammaSetShape, p_{\mathrm{o}} \!\in \!\mathcal{O}.
\end{equation}

Directly representing obstacles as a set of points leads to a large number of constraints,
    but it also offers several advantages.
First, aside from the discretization errors in sampling the obstacle surface,
    we achieve a precise representation of the obstacle shape.
Second, the points are easily accessible in real-world applications, for example, through LiDAR measurements, and can be easily adjusted to changes in the environment.
Third, as will be presented in Section~\ref{sec:single-state-single-obstacle},
    the resulting lower-level optimization problems can be efficiently solved
    and have the numerical properties that enable us to reduce the infinite-constraint formulations into finite ones.

\subsection{Nominal OCP Formulation}
Consider a \textit{nominal} robot system that is described by an explicit \gls{ode}:
\begin{equation}
	\frac{\mathrm{d}x(t) }{\mathrm{d}t}= f(x(t), u(t)),
	\label{eq:intro-robot-system-ode}
\end{equation}
with state $x(t)\colon\bbR \to \bbR^{n_x}$ and control $u(t)\colon\bbR \to \bbR^{n_u}$.
Both the differential kinematics of the robot and the system dynamics can be modeled by \gls{ode}.
As will be demonstrated in the experiment results,
    incorporating an accurate dynamic model within the~\gls{ocp} facilitates safe trajectory execution for control of robots operating at high speeds.

Consider a prediction horizon of length $T > 0$.
The prediction horizon $\left[0, T\right] $ is split into $N$ (possibly non-uniform) discretization intervals.
Assuming that the control input is a zero-order-hold signal for each discretization interval,
the continuous-time system can be discretized as~\cite{Bock1984}:
\begin{equation}
	{x}_{k+1} = \psi_k({x}_k, u_k), k\in\mathbb{N}_{\left[0 , N\shortminus 1 \right]},
	\label{eq:intro-robot-system-discrete-time}
\end{equation}
where $x_k$ denotes the robot state at time $t_k$.
To model the limitations of the system actuators and states,
state-input constraints in the form of
$ h_{\mathcal{XU}}\!\left(x_k, u_k \right) \leq 0$ and terminal constraints in form of  $h_{\mathcal{X}_N}\left(x_N \right) \leq 0$  are imposed.
For the purpose of our discussion, these constraints do not include the collision-avoidance conditions.

Let ${L}_k$ and ${L}_N$ be the stage cost and terminal cost functions, both of which are twice continuously differentiable.
Let ${\bar{x}}_{0}$ denote the robot state at the beginning of the prediction horizon.
The nominal \gls{ocp} formulation, concerned with \textit{discrete-time} collision-avoidance, is given by
\begin{mdframed}[
	linecolor=black,linewidth=0.8pt,
	frametitlerule=false,
	innertopmargin=-1pt
	]
\begin{mini!}|s|
    {\substack{{x}_0, \Compactcdots, {x}_N,\\
            {u}_0, \Compactcdots, {u}_{N\!\shortminus \!1}}}
    {\sum_{k=0}^{N\!-\!1} {L}_k\left({x}_k, {u}_k\right) + {L}_N\left({x}_N\right)}
    {\label{eq:nominal-ocp-def}}{}
    \addConstraint{{x}_{0}}{={\bar{x}}_{0},}{}
    \addConstraint{{x}_{k+1}}{=\psi_k({x}_k,\! {u}_k),}{  k \!\in \! \mathbb{N}_{\left[0, N\!\shortminus\! 1 \right]}, \label{eq:ocp-state-input-constraints}}{}
    \addConstraint{0}{\geq h_{\mathcal{XU}}\!\left(x_k, u_k \right),}{ k \!\in \! \mathbb{N}_{\left[0, N\!\shortminus\! 1 \right]},
        \label{eq:nominal-ocp-state-input-constraints}}{}
    \addConstraint{0}{\geq h_{\mathcal{X}_N}\!\left(x_N \right), \label{eq:ocp-state-terminal-constraints}  }
    \addConstraint{r_{\mathrm{shp}}}{ \leq \left\| p_{\mathrm{c}}(x_k)  +  \rotMat(x_k) \gammaShape - p_{\mathrm{o}} \right\|_2, \nonumber}{}
    \addConstraint{}{\forall \gammaShape \! \in \! \gammaSetShape,\, p_{\mathrm{o}} \! \in \!\mathcal{O}, \, }{k \! \in \!\mathbb{N}_{\left[1, N\right]}. \label{eq:nominal-ocp-collision-constraints}}
\end{mini!}
\end{mdframed}

Treating each sampled point on the obstacle surface as a separate obstacle leads to a large cardinality of the obstacle set $\staticObsSet$ and, consequently, a large number of constraints on individual point obstacles.
Moreover, resulting from the robot shape parameter $\gammaShape$ ranging continuously within the set $\gammaSetShape$,
    the \gls{ocp}~\eqref{eq:nominal-ocp-def} has an infinite number of constraints, see~\eqref{eq:nominal-ocp-collision-constraints}.
{The \gls{ocp} formulation can be extended to robot shapes as a union of padded polygons (or polyhedra) by imposing~\eqref{eq:nominal-ocp-collision-constraints} for each polygon (or polyhedron).}

\subsection{Disturbance Modeling and Robust \gls{ocp} Formulation}
Suppose the robot system dynamics is subject to disturbance $w_k \in \bbR^{n_x}$:
\begin{equation}
	\tilde{x}_{k+1} = \tilde{\psi}_k(\tilde{x}_k, u_k, w_k), \ k\in\mathbb{N}_{\left[0 , N\shortminus 1 \right]},
	\label{eq:disturbed-system-discrete-time-dynamics}
\end{equation}
where $\tilde{x}_{k}$ denotes the disturbed state at time grid $t_k$.
Note that when ${w_k = 0}$, the formulation of the disturbed dynamics~\eqref{eq:disturbed-system-discrete-time-dynamics} is equivalent to the nominal system dynamics~\eqref{eq:intro-robot-system-discrete-time}.

The disturbance sequence is considered unknown but assumed to be contained in a compact set $(w_0, \Compactcdots, w_{N\shortminus 1}) \in \mathcal{W}$,
leading to a set of possible state trajectories.
In this paper, we are concerned with the case that, at each time point $t_k$, the set is an ellipsoid centered at the nominal state $x_k$:
\begin{equation}
	\tilde{x}_k \in \mathcal{E}\left(x_k, \Sigma_k\right), \ k\in\mathbb{N}_{\left[0 , N\right]},
\end{equation}
where ${\Sigma_k \in\mathbb{S}^{n_x}_{+}}$ denotes the ellipsoid shape matrix.
Consider dynamics of the ellipsoidal uncertainty sets of the form
\begin{subequations}
	\label{eq:intro_dynamics_ellipsoid}
	\begin{alignat}{2}
		x_{k+1} &= \psi_k(x_k, u_k), \\
		\Sigma_{k+1} &= \Phi_k( x_k,  u_k, \Sigma_k), \ && k\in\mathbb{N}_{\left[0 , N\shortminus 1 \right]}, \label{eq:intro_dynamics_sigma}
	\end{alignat}
\end{subequations}
where the dynamics of the ellipsoid center are those of the nominal system.
The method proposed in this paper is, in principle, compatible with all differentiable ellipsoid dynamics of the form~\eqref{eq:intro_dynamics_ellipsoid}.
In the following, we present a specific linearization-based variant~\cite{Nagy2003a, Diehl2006c, Messerer2021} of the shape dynamics~\eqref{eq:intro_dynamics_sigma}, which we use throughout the remainder of this paper. %, Gillis2013
Assuming that the disturbance sequence is contained within a high-dimensional ellipsoid
\begin{equation}
	\mathcal{W} = \mathcal{E}\left(0,  \mathrm{blk\hbox{-}diag} \left( W_0, W_1, \cdots, W_{N\shortminus 1} \right) \right),
	\label{eq:disturbance-seq-ellipsoidal-tube}
\end{equation}
the ellipsoid shape matrices $\Sigma_k$ can be propagated based on a linearization of the dynamics at the  nominal trajectory as
\begin{equation}
	\begin{split}
	\Sigma_0  =  {\bar{\Sigma}}_{0}, \
	\Sigma_{k+1} & = A_k  \Sigma_k {A_k}^{\! \top} + C_k W_k C_k^\top\\
	& =: \Phi_k({x}_k, {u}_k, \Sigma_k),  \ k\in\mathbb{N}_{\left[0 , N\shortminus 1 \right]},
\end{split}
\label{eq:dynamics_ellipsoid_lin}
\end{equation}
where ${\bar{\Sigma}}_{0}$ denotes the state uncertainty matrix at the beginning of the prediction horizon,
    and $A_k$ and $C_k$ are obtained from the sensitivities of the discrete-time system simulation:
\begin{subequations}
	\begin{align}
		A_k & \coloneqq\frac{\partial \tilde{\psi}_k(\tilde{x}_k, u_k, w_k)}{\partial \tilde{x}_k}\left|{}_{(\tilde{x}_k, w_k)=\left( {x}_k, 0 \right) } \right., \\
		C_k & \coloneqq\frac{\partial \tilde{\psi}_k(\tilde{x}_k, u_k, w_k)}{\partial w_k}\left|{}_{(\tilde{x}_k, w_k)=\left( {x}_k, 0 \right) } \right..
	\end{align}
\end{subequations}
While the uncertainty dynamics \eqref{eq:dynamics_ellipsoid_lin} are exact for linear systems,
    for nonlinear systems, a truncation error is introduced.
This error can be bounded and  \eqref{eq:dynamics_ellipsoid_lin} can be modified to obtain a conservative over-approximation~\cite{Koller2018,Houska2011, Leeman2023}.
The discrete-time robust collision-avoidance \gls{ocp} can be stated as

\begin{mdframed}[
	linecolor=black,linewidth=0.8pt,
	frametitlerule=false,
	innertopmargin=-1pt
	]
	\begin{mini!}|s|
		{\substack{{x}_0, \Compactcdots, {x}_N,\\
				{u}_0, \Compactcdots, {u}_{N\!\shortminus \!1}, \\
				\Sigma_{0}, \Compactcdots, \Sigma_{N}}}
		{\sum_{k=0}^{N\!-\!1} {L}_k\left({x}_k, {u}_k\right) + {L}_N\left({x}_N\right)}
		{\label{eq:robust-ocp-def}}{}
		\addConstraint{{x}_{0}}{={\bar{x}}_{0},\ \Sigma_0= {\bar{\Sigma}}_{0},}{}
		\addConstraint{{x}_{k+1}}{=\psi_k({x}_k,\! {u}_k),\  k \!\in \! \mathbb{N}_{\left[0,  N\!\shortminus\! 1 \right]}, \label{eq:nominal-state-dynamics}}{}
		\addConstraint{\Sigma_{k+1}}{=\Phi_k({x}_k,\! {u}_k, \! \Sigma_k),\  k \!\in \!\mathbb{N}_{\left[0, N\!\shortminus\! 1 \right]}, \label{eq:Sigma-function-of-state-trajectory}}{}
		\addConstraint{0}{\geq h_{\mathcal{XU}}\!\left(x_k+\Sigma_k^{\frac{1}{2}}\gammaNoise, u_k \right), \nonumber}{}
		\addConstraint{}{\forall \gammaNoise \in \gammaSetSphere{n_x}, k\in\mathbb{N}_{\left[0,  N\!\shortminus\! 1 \right]}, \label{eq:robust-ocp-stage-constraints}}
		\addConstraint{0}{\geq h_{\mathcal{X}_N}\!\left(x_N \!+\! \Sigma_N^{\frac{1}{2}}\gammaNoise \right), \nonumber   }
		\addConstraint{}{\forall \gammaNoise \in \gammaSetSphere{n_x}, \label{eq:robust-ocp-teminal-constraints} }
		\addConstraint{0}{ \geq r_{\mathrm{shp}} - \Bigl\|  \rotMat\left({x}_k  \! + \! \Sigma_k^{\frac{1}{2}}\gammaNoise\right) \! \gammaShape  \nonumber }
		\addConstraint{}{ \quad  \quad +  p_{\mathrm{c}}\!\left({x}_k \! + \! \Sigma_k^{\frac{1}{2}}\gammaNoise\right) \!- \!p_{\mathrm{o}} \Bigl\|_2, \nonumber }
		\addConstraint{}{\forall \gammaShape \! \in \! \gammaSetShape,\
			\gammaNoise \in \gammaSetSphere{n_x}  , \nonumber}
		\addConstraint{}{p_{\mathrm{o}} \in\staticObsSet, \ k\in\mathbb{N}_{\left[1, N\right]}. \label{eq:robust-ocp-collision-constr}}
	\end{mini!}
\end{mdframed}

In addition to the challenges in solving the nominal \gls{ocp},
    the dimension of the ellipsoid shape matrices $\Sigma_k$ is quadratic in the dimension of the system states, resulting in a substantial increase in computational burden~\cite{Zanelli2021zoRO}.
Moreover, the state uncertainty $\gammaNoise \in \gammaSetSphere{n_x}$ introduces another dimension of constraint infiniteness.

	\section{Preliminaries on Numerical Methods}
\label{sec:numerical-methods}

The collision-avoidance constraints and the state uncertainties pose great challenges to numerical solvers.
In the following subsections,
we present the local reduction method for tackling infinitely many constraints, the external active-set method for handling a significant number of constraints, and the \gls{zoro} method for efficiently solving robust \glspl{ocp}.

\subsection{Local Reduction Method}
\label{sec:prelim-local-reduction}
\glsreset{sip}
A \gls{sip} problem is an optimization problem with a finite number of optimization variables $z\in\bbR^{n_z}$ and an infinite number of constraints:
\begin{mini!}|s|
	{z\in\bbR^{n_z}}{L(z)}{\label{eq:general-SIP-def}}{}
	\addConstraint{0}{\geq h(z, \gamma),\ \forall \gamma \in \Gamma \label{eq:general-SIP-constr}},
\end{mini!}
where $\gamma \in \bbR^{n_{\gamma}}$, $h:\bbR^{n_z} \times \bbR^{n_{\gamma}} \to \bbR$.
The set $\Gamma$ is an infinite index set and is \textit{independent} of the upper-level optimization variables $z$.

The authors of~\cite{Gramlich1995, Hettich1993, Lopez2007} show that the \gls{sip} problem~\eqref{eq:general-SIP-def}, under some regularity assumptions,
    can be locally reduced to a finite-dimensional programming problem.
Here we briefly describe the assumptions and refer to~\cite{Hettich1993} for details.
Suppose the infinite index set $\Gamma$ is a compact set and is defined via differentiable constraint functions $g_l:\bbR^{n_{\gamma}} \to \bbR$:
\begin{equation}
	\Gamma:=\left\lbrace \gamma \mid g_{\llConstrIndex}(\gamma) \leq 0, \ \llConstrIndex \in\bbN_{[1, n_g]}  \right\rbrace.
\end{equation}
Ensuring $h(z, \gamma) \leq 0$ for all $\gamma \in \Gamma$ is equivalent to ensuring that the maximum of $h(z, \gamma)$ over $\gamma \in \Gamma$ is not greater than zero.
The lower-level optimization problem that seeks to maximize the function $h(z, \gamma)$ with respect to $\gamma$ is given by
\begin{maxi!}|s|
	{\gamma \in \bbR^{n_{\gamma}} }{h({z}, \gamma)}{\label{eq:general-ll-prob-def}}{\mathrm{LL \hbox{-} P}({z}):}
	\addConstraint{ g_{\llConstrIndex}(\gamma)}{\leq 0,\ \llConstrIndex \in\bbN_{[1, n_g]}.}
\end{maxi!}
Let $\lambda_{\mathrm{LL}, \llConstrIndex} \in \bbR$ be the Lagrange multiplier of constraint ${g_\llConstrIndex(\gamma)\leq 0}$
and $\lambda_{\mathrm{LL}} := \begin{bmatrix}
	\lambda_{\mathrm{LL},1} & \lambda_{\mathrm{LL}, 2} & \cdots & \lambda_{\mathrm{LL}, {n_g}}
\end{bmatrix}^{\top}$ be the vector that concatenates all multipliers.
The Lagrangian of \mbox{LL-P$(z)$} is given by
\begin{equation}
	\mathcal{L}_{\mathrm{LL}}\left(\gamma, \lambda_{\mathrm{LL}}; z\right)  := h\left({z}, {\gamma}\right) - \sum_{\llConstrIndex=1}^{n_g}\lambda_{\mathrm{LL} , \llConstrIndex} g_{\llConstrIndex}(\gamma).
	\label{eq:general-Lagrangian}
\end{equation}

\begin{definition}[Nondegenerate local solution]
A solution pair $\left(\acute{\gamma}, \acute{\lambda}_{\mathrm{LL}} \right)$ is called a nondegenerate local solution of $\mathrm{LL \hbox{-} P}(\acute{z})$ if the following conditions are satisfied:
	\begin{enumerate}
		\item the KKT conditions,
		\item the linear independence constraint qualification (LICQ),
		\item the second order sufficient condition (SOSC),
		\item the strict complementary slackness (SCS).
	\end{enumerate}
\end{definition}

Let $\acute{\gamma}$ and the Lagrange multipliers $\acute{\lambda}_{\mathrm{LL}}$ be a nondegenerate local solution of $\mathrm{LL \hbox{-} P}(\acute{z})$.
Then within a neighborhood of $\acute{z}$ and $\acute{\lambda}$,
	there exists a continuously differentiable function ${\gamma}^*\!\left({z}\right)$ with ${\gamma}^*\!\left( \acute{z} \right) = \acute{\gamma}$ and ${\lambda}^*\!\left({z}\right)$ with ${\lambda}^*\!\left( \acute{z} \right) = \acute{\lambda}$.
Assuming that all local solutions are nondegenerate,
    the set of local solutions for a given $\acute{z}$ is finite given that the index set $\Gamma$ is a compact set.
Let $\iota$ denote the index of local solutions and $n_{\iota}$ represent the total number of local solutions.
Within a neighborhood of $\acute{z}$, the \gls{sip} problem~\eqref{eq:general-SIP-def} can be locally replaced by
\begin{mini!}|s|
	{z}{L(z)}{\label{eq:general-reduction}}{}
	\addConstraint{0\geq }{\, h\left(\acute{z}, \gamma^*_{\iota} \left(\acute{z}\right)  \right) + \left. \frac{\mathrm{d} h \! \left( {z}, \gamma_{\iota}^*(z)\right) } {\mathrm{d} z} \right\vert_{z=\acute{z}} \left( z \! - \! \acute{z} \right) , \nonumber }
	\addConstraint{}{ {\iota} \in \bbN_{[1, n_{\iota}]}\label{eq:general-reduction-total-derivatives} }.
\end{mini!}

The following lemma clarifies a property of the derivatives of the function $h\left( z, \gamma_{\iota}^*(z) \right)$.
\begin{lemma}
    The total derivatives $\frac{\mathrm{d} h}{\mathrm{d} z}\left( z, \gamma_{\iota}^*(z) \right)$ are equal to the partial derivatives $\frac{\partial h}{\partial z}\left( z, \gamma_{\iota}^*(z) \right)$.
    \label{lemma:partial-derivatives-equal-to-total}
\end{lemma}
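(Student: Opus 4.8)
The plan is to apply the chain rule to $h(z, \gamma_\iota^*(z))$ and show that the extra term coming from the dependence of $\gamma_\iota^*$ on $z$ vanishes. Writing out the total derivative,
\begin{equation*}
	\frac{\mathrm{d} h}{\mathrm{d} z}\left( z, \gamma_{\iota}^*(z) \right)
	= \frac{\partial h}{\partial z}\left( z, \gamma_{\iota}^*(z) \right)
	+ \nabla_\gamma h\left( z, \gamma_{\iota}^*(z) \right)^{\!\top} \frac{\partial \gamma_\iota^*}{\partial z}(z),
\end{equation*}
where $\frac{\partial \gamma_\iota^*}{\partial z}(z)$ exists and is continuous by the implicit-function-theorem argument recalled just before the lemma (the nondegeneracy of the local solution guarantees the differentiable local parameterization $\gamma_\iota^*(z)$). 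So the claim reduces to showing that the second term is zero, i.e. that $\nabla_\gamma h$, evaluated at the lower-level solution, lies in a subspace annihilated by $\frac{\partial \gamma_\iota^*}{\partial z}(z)$.

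First I would invoke the stationarity condition~\eqref{eq:kkt-dLambda-dgamma} at the solution: $\nabla_\gamma h(z, \gamma_\iota^*(z)) = \sum_{\llConstrIndex=1}^{n_g} \lambda_{\mathrm{LL},\llConstrIndex}(z)\, \nabla_\gamma g_{\llConstrIndex}(\gamma_\iota^*(z))$, where $\lambda_{\mathrm{LL}}(z)$ is the (also continuously differentiable, by SCS and LICQ) multiplier associated with the branch $\gamma_\iota^*$. Substituting this into the extra term, it becomes $\sum_{\llConstrIndex} \lambda_{\mathrm{LL},\llConstrIndex}(z)\, \nabla_\gamma g_{\llConstrIndex}(\gamma_\iota^*(z))^{\!\top} \frac{\partial \gamma_\iota^*}{\partial z}(z)$. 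Next I would differentiate the active-constraint identities: for every index $\llConstrIndex$ with $\lambda_{\mathrm{LL},\llConstrIndex}(z) > 0$ (strict complementarity), the constraint is active, $g_{\llConstrIndex}(\gamma_\iota^*(z)) = 0$ holds identically in a neighborhood of $\acute z$, and differentiating in $z$ gives $\nabla_\gamma g_{\llConstrIndex}(\gamma_\iota^*(z))^{\!\top} \frac{\partial \gamma_\iota^*}{\partial z}(z) = 0$. For indices with $\lambda_{\mathrm{LL},\llConstrIndex}(z) = 0$ the term drops regardless. Hence every summand vanishes, the extra term is zero, and the total and partial derivatives coincide.

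The one technical point that requires care — and which I would state explicitly rather than gloss over — is \emph{why the active set does not change} in a neighborhood of $\acute z$, since the identity $g_{\llConstrIndex}(\gamma_\iota^*(z)) \equiv 0$ is only valid if index $\llConstrIndex$ stays active. This is exactly what strict complementary slackness buys us: at $\acute z$ the active multipliers are strictly positive and the inactive constraints are strictly negative, so by continuity of $\gamma_\iota^*(\cdot)$ and $\lambda_{\mathrm{LL}}(\cdot)$ these strict inequalities persist on a neighborhood, freezing the active set. I expect this to be the main (though mild) obstacle: the bookkeeping of which constraints are active and the appeal to SCS is the substantive content, while the chain-rule manipulation itself is routine. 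An alternative, slicker route I might mention is to observe that this is simply an instance of the envelope theorem (Danskin-type result) applied to the parametric maximization $\max_{\gamma \in \Gamma} h(z,\gamma)$ along the branch $\gamma_\iota^*$, for which the derivative of the optimal value with respect to the parameter $z$ equals the partial derivative of $h$ evaluated at the maximizer; but spelling out the KKT-based argument above is more self-contained given what has already been set up in the excerpt.
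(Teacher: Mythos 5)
Your proof is correct and uses essentially the same argument as the paper: both are the standard envelope-theorem computation relying on lower-level stationarity, complementarity, and SCS to freeze the active set. The only cosmetic difference is that the paper differentiates the Lagrangian identity $\mathcal{L}_{\mathrm{LL}}\left(z, \gamma^*(z), \lambda_{\mathrm{LL}}^*(z)\right) = h\left(z, \gamma^*(z)\right)$ and kills the $\gamma$- and $\lambda$-terms separately, whereas you substitute the stationarity condition directly into the chain rule for $h$ and then differentiate the active-constraint identities; the ingredients and conclusion are identical.
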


\begin{proof}
For simplicity of notation,
    here we consider one single solution, i.e., one constraint in~\eqref{eq:general-reduction}, and drop the index $\iota$.
At local solutions of LL-P,
   the value of $\lambda_{\mathrm{LL} , \llConstrIndex}^*\left(z\right)  g_{\llConstrIndex}\!\left(\gamma^*\!\left(z\right)\right)$ is a constant zero for all values of $z$ due to the complementarity conditions of the KKT conditions,
   and therefore $\mathcal{L}_{\mathrm{LL}}\!\left( z, \gamma^*(z), \lambda_{\mathrm{LL}}^*(z) \right)\! =\! h\!\left( z, \gamma^*(z) \right)$.
Taking the derivatives with respect to~$z$, we obtain
\begin{subequations}
	\begin{align}
		&\frac{\mathrm{d} h}{\mathrm{d} z} \left( z, \gamma^*(z) \right) = \frac{\mathrm{d} \mathcal{L}_{\mathrm{LL}}}{\mathrm{d} z} \left( z, \gamma^*(z), \lambda_{\mathrm{LL}}^*(z) \right) \\
		% + \frac{\mathrm{d} 0}{\mathrm{d} z} \\
		= &\frac{\partial \mathcal{L}_{\mathrm{LL}}}{\partial z}
		+ \frac{\partial \mathcal{L}_{\mathrm{LL}}}{\partial \gamma} \frac{\partial \gamma^*}{\partial z}
		+ \frac{\partial \mathcal{L}_{\mathrm{LL}}}{\partial  \lambda_{\mathrm{LL}}} \frac{\partial  \lambda_{\mathrm{LL}}^*}{\partial z} \\
        = &\frac{\partial \mathcal{L}_{\mathrm{LL}}}{\partial z}
		+ \frac{\partial \mathcal{L}_{\mathrm{LL}}}{\partial \gamma} \frac{\partial \gamma^*}{\partial z}
		- \sum_{\llConstrIndex=1}^{n_g} g_{\llConstrIndex}\!\left( {\gamma}^*(z) \right) \frac{\partial  \lambda_{\mathrm{LL}, \llConstrIndex}^*}{\partial z}.
        \label{eq:general-sip-first-order-derivatives}
	\end{align}
\end{subequations}
We have $\frac{\partial \mathcal{L}_{\mathrm{LL}}}{\partial \gamma} = 0$ from the KKT conditions.
We have ${g_{\llConstrIndex}\!\left( {\gamma}^*(z)\right)=0}$ for active constraints and $\frac{\partial  \lambda_{\mathrm{LL}, \llConstrIndex}^*}{\partial z} = 0$ for strictly inactive constraints.
Therefore, we have
\begin{equation}
    \frac{\mathrm{d} h}{\mathrm{d} z} \left( z, \gamma^*(z) \right)
    = \frac{\partial \mathcal{L}_{\mathrm{LL}}}{\partial z} + 0 + 0\\
    = \frac{\partial h}{\partial z}\left( z, \gamma^*(z) \right),
\end{equation}
where the second equation is derived because the constraints on the infinite index set $\Gamma$ are not functions of $z$.
\end{proof}

\begin{remark}
	\label{rem:SCS-relaxation}
	The assumption of satisfaction of the SCS condition can be relaxed.
	If the strong second-order sufficient condition\mbox{\cite{Jittorntrum1984}} is satisfied, the lower-level solution map $\gamma^*(z)$ and its associated multiplier $\lambda_{\mathrm{LL}}^*(z)$ are Lipschitz continuous, and Lemma~\ref{lemma:partial-derivatives-equal-to-total} remains valid\mbox{\cite{Hettich1993}}.
\end{remark}

As the partial derivatives provide the exact gradient information,
the linearized constraint~\eqref{eq:general-reduction-total-derivatives} can be simplified to
\begin{equation}
	 0 \geq \check{h}_{\iota}(z;\acute{z}) := h\left(\acute{z}, \gamma^*_{\iota} \left(\acute{z}\right)\right) + \left. \frac{\partial h \! \left( {z}, \gamma_{\iota}^*(z)\right) } {\partial z} \right\vert_{z=\acute{z}} \left( z \! - \! \acute{z} \right).
\label{eq:general-linerized-constr-simplified}
\end{equation}
The SIP problem~\eqref{eq:general-SIP-def} can be solved by iteratively solving subproblems in which the constraints are locally reduced and linearized (see Algorithm~\ref{alg:conceptual-reduction}).
\begin{algorithm}[t]
	\caption{Local reduction for solving \gls{sip} problem~\eqref{eq:general-SIP-def}}
	\label{alg:conceptual-reduction}
	\begin{algorithmic}
		\Require $z^{(0)}$
		\For{ $ j = 0, \Compactcdots, \text{MAXITER} $}
		\State $\gamma^* \!\left(z^{(j)}\right) \!\gets\!$ solve lower-level problems
        \Comment{\eqref{eq:general-ll-prob-def}}
		\State $z^{(j+1)}\!\gets\!$ (partially) solve the reduced subprob. with the linearized constr. using \gls{nlp} solver
		\Comment{\eqref{eq:general-reduction}, \eqref{eq:general-linerized-constr-simplified}}
		\If{$ \left\| z^{(j+1)} - z^{(j)} \right\|_{\infty}  \leq \epsilon_{\text{cvg}}$}
		\State CONVERGED $\gets 1$, \Return $z^{(j+1)}$
		\EndIf
		\EndFor
		\State \Return $z^{(\text{MAXITER}+1)}$
	\end{algorithmic}
\end{algorithm}

\subsection{External Active-Set Method}
Consider the \gls{nlp} problem
\begin{mini!}|s|
	{z}{L(z)}{\label{eq:general-NLP-many-constr-def}}{}
	\addConstraint{0}{\geq h_{\finiteIndex}(z), \ \finiteIndex \in \mathcal{I}},
\end{mini!}
where $\finiteIndex$ denotes the index of the constraints and $\mathcal{I}$ is a finite (but very large) set.
    The optimization problems considered in this paper have a large number of constraints and a relatively low dimension of optimization variables, and typically only a few constraints are active at the solution.
    % the number of constraints that are active at the solution typically does not exceed the dimension of the optimization variable $z$.
While the active set method~\cite[Section 16.5]{Nocedal2006} is efficient for optimization problems with a moderate number of constraints,
    the expense of identifying the active set becomes high when the number of constraints is large.
In~\cite{Schittkowski1992, Chung2009}, an \textit{external} active-set method is proposed to reduce the computation time of solving such problems.

The authors of~\cite{Chung2009} iteratively solve \gls{nlp} subproblems that contain an increasingly larger subset of the constraints of the original \gls{nlp}:
\begin{mini!}|s|
	{z}{L(z)}{\label{eq:general-NLP-many-constr-subset}}{}
	\addConstraint{0}{\geq h_{\finiteIndex}(z), \ \finiteIndex \in \mathcal{I}}_{\mathrm{s}}^{(j)},
\end{mini!}
where $\mathcal{I}_{\mathrm{s}}^{(j)}$ denotes the index subset at iteration $j$.
Define the maximum constraint violation by
\begin{equation}
    H_{+}(z) := \max\left( 0, \max_{\finiteIndex \in \mathcal{I}} h_{\finiteIndex}(z) \right).
\end{equation}
At each iteration,
    the index of the constraints whose values are greater than $H_{+}(z) - \epsilon$ with $\epsilon > 0$ is added to the index subset:
\begin{equation}
	\mathcal{I}_{\mathrm{s}}^{(j)} = \mathcal{I}_{\mathrm{s}}^{(j\shortminus 1)} \cup \left\lbrace \finiteIndex \mid h_{\finiteIndex}(z) \geq H_{+}(z) - \epsilon  \right\rbrace.
	\label{eq:update-index-subset}
\end{equation}
The algorithm is detailed in Algorithm~\ref{alg:conceptual-external-active}.
\begin{algorithm}[t]
	\caption{External active-set method for solving~\eqref{eq:general-NLP-many-constr-def}}
	\label{alg:conceptual-external-active}
	\begin{algorithmic}[1]
		\Require $\mathcal{I}_{\mathrm{s}}^{(\shortminus 1)}$
		\For{ $ j = 0, \Compactcdots, \text{MAXITER} $}
		\State $\mathcal{I}_{\mathrm{s}}^{(j)} \!\gets\!$ update index subset $\mathcal{I}_{\mathrm{s}}^{(j \shortminus 1)}$
		\Comment{\eqref{eq:update-index-subset}}
		\State $z^{(j+1)}\!\gets\!$ (partially) solve the subprob.~\eqref{eq:general-NLP-many-constr-subset} \label{alg-line:external-active-solve-subproblem}
		\If{$ \left\| z^{(j+1)} - z^{(j)} \right\|_{\infty}  \leq \epsilon_{\text{cvg}}$}
		\State CONVERGED $\gets 1$, \Return $z^{(j+1)}$
		\EndIf
		\EndFor
		\State \Return $z^{(\text{MAXITER}+1)}$
	\end{algorithmic}
\end{algorithm}

\begin{remark}
	\label{rem:external-active-set-SIP-subproblem}
	{
	Note that each of the constraints in~\eqref{eq:general-NLP-many-constr-def} may in fact be infinite, i.e., of the form~\eqref{eq:general-SIP-constr}.
	In this case, the optimization problem~\eqref{eq:general-NLP-many-constr-def} and the subproblem~\eqref{eq:general-NLP-many-constr-subset} are \gls{sip} problems.
	In Line~\ref{alg-line:external-active-solve-subproblem} of Algorithm~\ref{alg:conceptual-external-active},
	    the subproblem~\eqref{eq:general-NLP-many-constr-subset} can be (partially) solved using \gls{sip} methods, e.g., Algorithm~\ref{alg:conceptual-reduction}.
	}
\end{remark}

\subsection{Zero-Order Robust Optimization (zoRO)}
\label{sec:zoRO-preliminary}

The \gls{zoro} method is proposed to relieve the computational burden of modeling the uncertainty dynamics in the robust \glspl{ocp}~\cite{Feng2020Adjoint, Zanelli2021zoRO}.
To be able to describe the method on a higher level of abstraction,
we illustrate with a compact notation in this subsection.
Let $\nominalOptVar$ and $\uncOptVar$ summarize the nominal input-state trajectory and the uncertainty shape matrices, respectively:
\begin{equation}
	\begin{split}
		\nominalOptVar := & \left( {u}_0, \Compactcdots, {u}_{N \shortminus 1}, {x}_0, \Compactcdots, {x}_N \right), \\
		\uncOptVar := & \left(
		\vecMat\left( \Sigma_0\right) ,  \vecMat\left( \Sigma_1\right), \Compactcdots, \vecMat\left( \Sigma_N\right) \right) .
	\end{split}
	\label{eq:barz-tildez-def}
\end{equation}
The robust \gls{ocp}~\eqref{eq:robust-ocp-def} in the compact notation is given by:
\begin{mini!}|s|
	{\nominalOptVar, \uncOptVar}{L(\nominalOptVar)}{\label{eq:robust-ocp-abstract-wo-collision-constr}}{}
	\addConstraint{0}{=\kappa_{\psi}(\nominalOptVar),}
	\addConstraint{0}{=\kappa_{\Phi}\left(\nominalOptVar,\uncOptVar\right), }
	\addConstraint{0}{\geq h_{i}\left( \nominalOptVar \!+ \! D(\uncOptVar)\gammaNoise \right), \nonumber}
	\addConstraint{}{\quad \quad \forall \gammaNoise \in \underbrace{\gammaSetSphere{n_x} \! \times \! \Compactcdots \! \times \gammaSetSphere{n_x}}_{N+1},\ i  \! \in \! \mathcal{I}, \label{eq:robust-prelim-affine-constraint}}
\end{mini!}
where the constraint $\kappa_{\psi}(\nominalOptVar) = 0$ contains the nominal system dynamics~\eqref{eq:nominal-state-dynamics}
and $\kappa_{\Phi}(\nominalOptVar, \uncOptVar) = 0$ the dynamics of the ellipsoidal uncertainty sets~\eqref{eq:Sigma-function-of-state-trajectory}.
The uncertainty $\gammaNoise$ is bounded by the Cartesian product of $N+1$ unit balls.
The index set $\mathcal{I}$ summarizes the constraint indices for constraints of all time indices~$k \in \mathbb{N}_{\left[0, N\right]}$.
The mapping $D(\uncOptVar)$ is given by
\begin{equation}
	\setlength\arraycolsep{1pt}
	D(\uncOptVar) \! := \! \begin{bmatrix}
		0^{Nn_u \times (N+1)n_x} \\
		\mathrm{blk\hbox{-}diag}\left(\Sigma_{0}^{\frac{1}{2}}, \Sigma_{1}^{\frac{1}{2}}, \Compactcdots, \Sigma_{N}^{\frac{1}{2}} \right) &
	\end{bmatrix} \!.
\end{equation}

The robust convex optimization approach described in~\cite{BenTal1998} provides a method for ensuring robust constraint satisfaction.
For an affine constraint $h_{\text{aff}}\!\left( \nominalOptVar \!+ \! D(\uncOptVar)\gammaNoise \right) \leq 0$,
    the maximum deviation of the constraint value from its nominal value can be derived analytically.
The infinitely many constraints can thereby be equivalently reformulated as a single constraint tightened by a backoff:
\begin{equation}
	0 \geq h_{\text{aff}}(z) + \left\|\frac{\partial h_{\text{aff}}(z)}{\partial z}  D(\Sigma) \right\|_2.
\end{equation}
When a robust \gls{ocp} is subject to nonaffine inequality constraints,
	a linearization-based approximation of the constraint backoff,
	which introduces a truncation error~\cite{Gao2023, Diehl2006c}, is commonly used:
\begin{equation}
	\beta_i\left(\nominalOptVar,  \uncOptVar \right):= \left\|\frac{\partial h_i(\nominalOptVar)}{\partial \nominalOptVar}  D(\Sigma) \right\|_2.
	\label{eq:compute-backoff}
\end{equation}
The resulting robust \gls{ocp} reformulation is given by
\begin{mini!}|s|
	{\nominalOptVar, \uncOptVar}{L(\nominalOptVar)}{\label{eq:robust-ocp-abstract-affine-reduced}}{}
	\addConstraint{0}{=\kappa_{\psi}(\nominalOptVar),}
	\addConstraint{0}{=\kappa_{\Phi}\left(\nominalOptVar,\uncOptVar\right), \label{eq:robust-unc-dynamics-affine-reduced}}
	\addConstraint{0}{\geq h_{i}\! \left( \nominalOptVar \right) \!+ \! \beta_i\left(\nominalOptVar,  \uncOptVar \right), \ i  \! \in \! \mathcal{I}, \label{eq:robust-ocp-tightened-constr-affine-reduced}}
\end{mini!}
which is a finitely-constrained optimization problem.
Later sections will present a method for robustifying nonaffine collision-avoidance constraints using a combination of the backoff reformulation and the local reduction method.

The computational cost of solving the finitely-constrained robust \gls{ocp}~\eqref{eq:robust-ocp-abstract-affine-reduced} is significantly higher than that of a nominal \gls{ocp}.
Consider the augmented state $\breve x_k = \left( x_k, \vecMat\left( \Sigma_k\right)\right) \in \bbR^{\breve n_x}$ with dimension $\breve n_x = n_x + \frac{1}{2}n_x(n_x + 1)$ corresponding to the state of \gls{ocp}~\eqref{eq:disturbed-system-discrete-time-dynamics}.
Using standard \gls{ocp} structure exploiting algorithms, the robust \gls{ocp}~\eqref{eq:robust-ocp-abstract-affine-reduced} incurs a computational cost of $\bigO(\breve n_x^3)$ per iteration~\cite{Rawlings2017}, i.e., $\bigO(n_x^6)$ with respect to the original state dimension.

The \gls{zoro} method~\cite{Zanelli2021zoRO, Feng2020Adjoint,Frey2024a} reduces the computational complexity of solving robust \glspl{ocp}.
The core idea of \gls{zoro} is to alternate between a forward simulation of the uncertainty dynamics and the solution of a nominal \gls{ocp} with fixed backoffs,
whose values are obtained using the current iterate of the nominal trajectory $\nominalOptVar^{(j)}$ and the uncertainty matrices $\uncOptVar^{(j)}$.
The \gls{ocp} with the fixed backoffs is given by
\begin{mini!}|s|
	{\nominalOptVar}{L\left(\nominalOptVar\right)}{\label{eq:robust-zoRO-subproblem}}{}
	\addConstraint{0}{=\kappa_{\psi}\left( \nominalOptVar\right), }
	\addConstraint{0}{\geq h_{i}\left( \nominalOptVar \right) + {\beta}_i\!\left(\nominalOptVar^{(j)}, \uncOptVar^{(j)} \right) , \ i  \! \in \! \mathcal{I}, \label{eq:robust-zoRO-subproblem-constr}}
\end{mini!}
which can be solved by standard \gls{ocp} solvers.
The solution of~\eqref{eq:robust-zoRO-subproblem} yields a new iterate of the nominal trajectory,
at which the uncertainty dynamics~\eqref{eq:robust-unc-dynamics-affine-reduced} is simulated again and
the backoff terms are recomputed.
The process is repeated until convergence (Algorithm~\ref{alg:robust-zoRO}).
As this algorithm neglects the sensitivity
$\frac{\partial \kappa_{\Phi}}{\partial \nominalOptVar}$, it does not converge exactly to a solution of~\eqref{eq:robust-ocp-abstract-affine-reduced}, but to a suboptimal, yet feasible, point in its neighborhood \cite{Zanelli2021zoRO}.
If desired, optimality can be achieved by adding an appropriate gradient correction term to the objective~\cite{Feng2020Adjoint}.

\begingroup
\begin{algorithm}[h]
	\caption{Zero-order robust optimization (zoRO)}
	\label{alg:robust-zoRO}
	\begin{algorithmic}
		\Require $\nominalOptVar^{(0)}$
		\For{ $ j = 0, \Compactcdots, \text{MAXITER} $}
		\State $\uncOptVar^{(j)} \gets $ propagate unc. dyn. based on $\nominalOptVar^{(j)}$
		\Comment{\eqref{eq:robust-unc-dynamics-affine-reduced}}
		\For{$i  \! \in \! \mathcal{I}$}
		\State  $\beta_i \!\left(\nominalOptVar^{(j)}, \uncOptVar^{(j)} \right) \gets $ compute backoff terms
		\Comment{\eqref{eq:compute-backoff}}
		\EndFor
		\State $\nominalOptVar^{(j+1)} \! \gets \! $ solve subprob.~\eqref{eq:robust-zoRO-subproblem}
		\If{$ \left\|\nominalOptVar^{(j+1)} - \nominalOptVar^{(j)}\right\|_{\infty}  \leq \epsilon_{\text{cvg}}$}
		\State CONVERGED $\gets 1$, \Return $\nominalOptVar^{(j+1)}$
		\EndIf
		\EndFor
		\State \Return $\nominalOptVar^{(\text{MAXITER}+1)}$
	\end{algorithmic}
\end{algorithm}
\vspace*{-5pt}
\endgroup

	\section{Numerical Method for Solving Nominal and Robust Collision-Avoidance OCPs}
\label{sec:methods-solve-robust-OCP}
In this section, the first two subsections focus on the collision avoidance for a single nominal state.
Next, we consider a nominal trajectory and present a numerical method for solving the nominal \gls{ocp}~\eqref{eq:nominal-ocp-def}.
The fourth subsection introduces state uncertainties and presents a tight approximation of the robust collision-avoidance constraints~\eqref{eq:robust-ocp-collision-constr}, allowing the constraint infiniteness from the uncertainties and shape parameterization to be handled within a single framework.
Finally, we present a numerical method for solving the robust~\gls{ocp}.

\subsection{Local Reduction for Infinite Constraints Due to Robot Shape Parameterization}
\label{sec:single-state-single-obstacle}
In order to isolate the key concepts,
in this subsection we present the algorithm for collision avoidance for one nominal state with respect to a given point obstacle ${p_{\mathrm{o}}\in\bbR^{\nWorld}}$.
An infinite number of constraints are imposed because of the robot shape parameterization.
The optimization problem is to minimize an objective function $L(x)$ over the robot state that does not incur collision:
\begin{mini!}|s|
	{x}{L(x) }{\label{eq:single-nominal-state-opt-def}}{}
	\addConstraint{0\geq }{ \,  r_{\mathrm{shp}} - \left\|p_{\mathrm{c}}\!\left({x} \right)  +  R\!\left({x}\right) \gammaShape - p_{\mathrm{o}} \right\|_2, \nonumber}
	\addConstraint{}{\forall \gammaShape \in \gammaSetShape.}
\end{mini!}

The lower-level optimization problem associated with the constraint in~\eqref{eq:single-nominal-state-opt-def} is to find the maximizer of the negative Euclidean distance between the polygon and the point $p_{\mathrm{o}}$:
\begin{alignat}{2}
	h_{\text{coll}}(x;p_{\mathrm{o}}) \!:= & \max_{\gammaShape} \quad && r_{\mathrm{shp}} \!- \! \left\| p_{\mathrm{c}}\!\left({x} \right) + R\!\left({x}\right) \gammaShape - p_{\mathrm{o}} \right\|_2 \nonumber\\
    & \ \mathrm{s.t.} && \gammaShape \in \gammaSetShape. \label{eq:simple-example-lower-level}
\end{alignat}
By omitting the constant terms and squaring the $l_2$ norm, the optimization problem~\eqref{eq:simple-example-lower-level} becomes a \gls{qp} problem,
    which can be efficiently solved by numerical solvers such as Clarabel~\cite{Goulart2024}.

\begin{remark}
	\label{rem:our-sip-degenerate-cases}
	The lower-level problem~\eqref{eq:simple-example-lower-level} has one strict local maximizer that varies continuously with the robot state.
	The infinitely many constraints can be reduced and Lemma~\ref{lemma:partial-derivatives-equal-to-total} can be applied except when the vector $R^{-1}\!\left({x}\right)\left(p_{\mathrm{o}} - p_{\mathrm{c}}\!\left({x} \right) \right)$ lies on the boundary of the set $\gammaSetShape$.
	As our formulation enforces a minimum separation distance of $r_{\text{shp}} > 0$ between the polygon and the obstacle,
	    this degenerate case will not occur in the neighborhood of a solution.
	In contrast, if we use the Euclidean distance transform or consider the full shape of the obstacle, e.g., polygons,
	we may encounter the situation that there exist an infinite number of distance minimizers or that the minimizer is not a continuous function of the robot state.
\end{remark}

Let $\gammaShape^*({x};p_{\mathrm{o}})$ denote the lower-level maximizer.
Following Algorithm~\ref{alg:conceptual-reduction},
the optimization problem~\eqref{eq:single-nominal-state-opt-def} can be solved by iteratively computing the lower-level maximizer, linearizing the constraint, and solving the reduced subproblem.
Denote
$ \eta \! \left({x}, \gammaShape \right) \!:= \!p_{\mathrm{c}}\left({x} \right) + R\left({x}\right) \gammaShape  - p_{\mathrm{o}}$.
Using Lemma~\ref{lemma:partial-derivatives-equal-to-total},
we can derive that the derivatives of the constraint function in~\eqref{eq:simple-example-lower-level}
\begin{equation}
	\frac{\mathrm{d} {h}_{\text{coll}}}{\mathrm{d} x} \!=\! - \frac{\eta \! \left({x}  , \gammaShape^*(x)\right)^{\top}}{ \left\| \eta \! \left({x}  , \gammaShape^*(x)\right) \right\|_2  } \left. \frac{\partial \eta (x, \gammaShape)}{\partial x}\right\vert_{\gammaShape=\gammaShape^*(x)}\!.
	\label{eq:coll-derivatives}
\end{equation}
{
In the 2D scenario,
the partial derivative is given by
\begin{equation}
	\setlength{\arraycolsep}{3pt}
	\frac{\partial \eta (x, \gammaShape)}{\partial x}  = \rotMat\!\left( x\right) J_{\mathrm{t}}\!\left( x\right) + \rotMat \!\left( x\right)\begin{bmatrix}
		0 & -1 \\ 1 & 0
	\end{bmatrix}\gammaShape J_{\mathrm{r}}\!\left( x\right) ,
\end{equation}
where $J_{\mathrm{t}}\!\left( x\right) \in \bbR^{2\times n_x} $ and $ J_{\mathrm{r}}\!\left( x\right) \in \bbR^{1\times n_x}$ denote the translational and rotational Jacobians, respectively,
    mapping the time derivatives of the state variable to the linear and angular velocities of the robot in its local frame.
The partial derivative in the 3D scenario is given by
\begin{equation}
	\frac{\partial \eta (x, \gammaShape)}{\partial x}  = \rotMat\!\left( x\right) J_{\mathrm{t}}\left( x\right) - \rotMat \!\left( x\right)\left[ \gammaShape\right]_{\times} J_{\mathrm{r}}\!\left( x\right) ,
\end{equation}
with $J_{\mathrm{t}}\!\left( x\right)  \in \bbR^{3\times n_x} $ and $ J_{\mathrm{r}}\!\left( x\right)\in \bbR^{3\times n_x}$.
We refer to~\cite[Section 3.1]{Siciliano2009} for a detailed explanation of the Jacobian matrices $J_{\mathrm{t}}$ and $J_{\mathrm{r}}$ and~\cite{Sola2018} for the derivatives on Lie groups.
}

At iteration $j$, the linearized function is given by
\begin{equation}
	\check{h}_{\text{coll}}\left({x};{x}^{(j)}, p_{\mathrm{o}}\right) := {h}_{\text{coll}}({x}^{(j)}; p_{\mathrm{o}})
	+ \frac{\mathrm{d} {h}_{\text{coll}}}{\mathrm{d} x} \left(x \!- \!x^{(j)} \right).
	\label{eq:single-nominal-linearized-constr}
\end{equation}
The iterative subproblem is in the form of
\begin{mini!}|s|
	{x}{L(x) }{\label{eq:single-nominal-iterative-subproblem}}{}
	\addConstraint{0 }{ \geq \check{h}_{\text{coll}}\left({x};{x}^{(j)}, p_{\mathrm{o}}\right).}
\end{mini!}
The convergence criterion is that the change in the robot state $\left\| x^{(j+1)} - x^{(j)} \right\|_{\infty}$ falls below a predefined threshold $\epsilon_{\text{cvg}}$.

\begin{remark}
	When the point obstacle $p_{\mathrm{o}}$ is inside the polygon,
	the value of the Euclidean distance is constantly zero.
	The gradient information of the Euclidean distance cannot let the polygon get out of the obstacle.
	In this case we will fall back to the gradient information provided by the signed distance.
\end{remark}

\subsection{External Active-Set Method for a Large Number of Obstacles}
\label{sec:many-obstacles}

\begin{figure}
	\center
	\includegraphics[width=\linewidth]{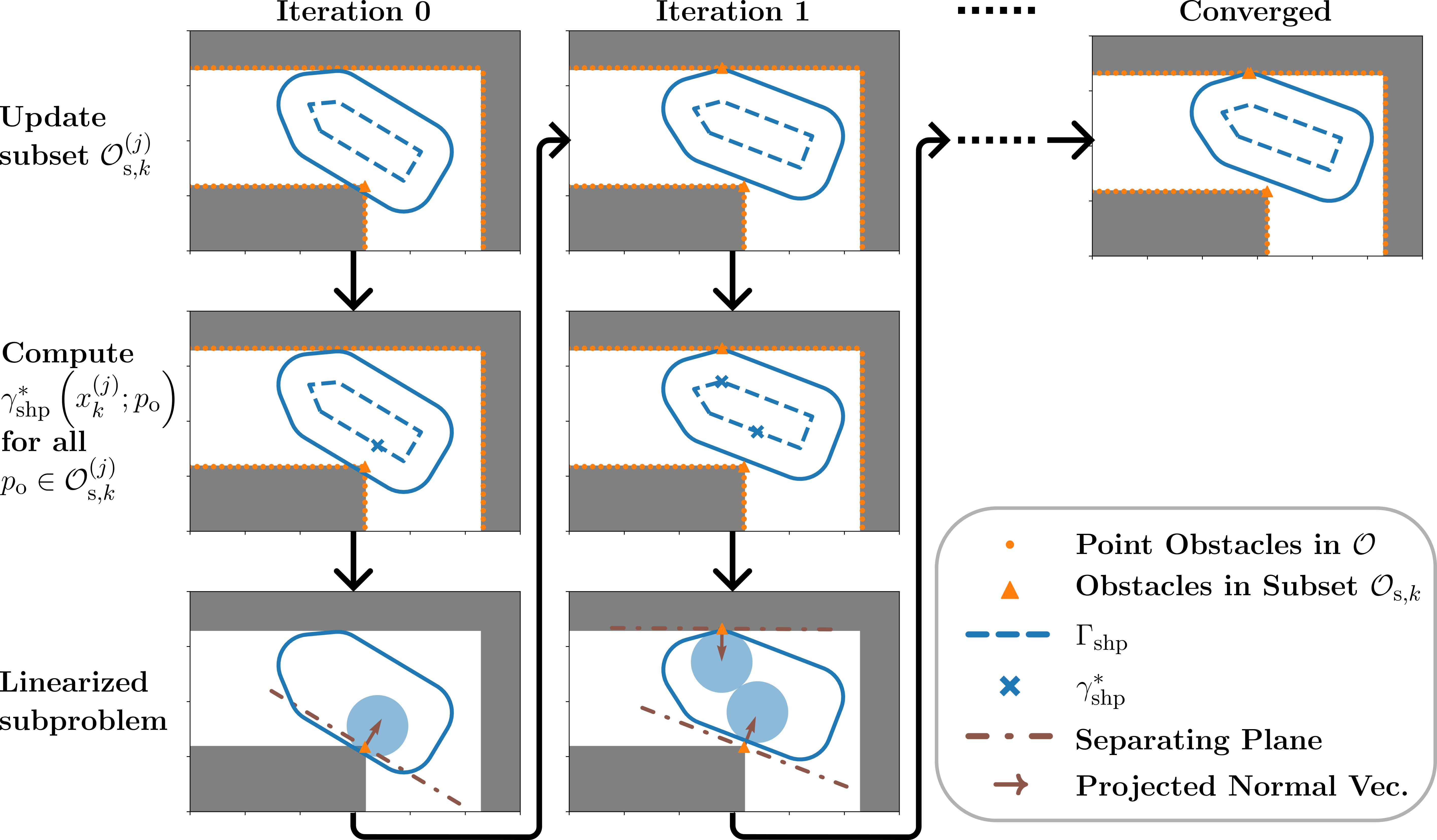}
	\caption{Illustration of Algorithm~\ref{alg:nominal-collision-free-trajectory}.
		For clarity, only the robot state and the obstacle subset at one time step $k$ are plotted.
		The closest obstacles are identified among all point obstacles (see the orange dots), and the obstacle subset $\mathcal{O}_{\mathrm{s}, k}^{(j)}$ is updated (see the orange triangles).
		For each obstacle $p_{\mathrm{o}} \in \mathcal{O}_{\mathrm{s}, k}^{(j)}$,
		the lower-level optimization problem~\eqref{eq:simple-example-lower-level} is solved,  yielding the lower-level maximizer $\gammaShape^{*}\!\left(x^{(j)}_k;p_{\mathrm{o}}\right)$ and one linearized constraint is imposed,
		which can be interpreted as a plane (see the brown dash-dotted lines and the brown arrows) to separate the obstacle and the circle corresponding to the maximizer.
		The procedure is repeated until convergence.}
	\label{fig:graphic_illustration}
\end{figure}

Here we look into the method for handling a set of point obstacles,
the cardinality of which can be very large.
The optimization problem is given by
\begin{mini!}|s|
	{{x}}{L\left({x} \right)} {\label{eq:multiple-obs-single-state-opt-def}}{}
	\addConstraint{0 }{ \geq r_{\mathrm{shp}} \!-\! \left\|p_{\mathrm{c}}\!\left({x} \right) \!+\!  R\!\left({x}\right)\!\gammaShape  \!-\! p_{\mathrm{o}}  \right\|_2 \! , \nonumber }
	\addConstraint{}{\forall \gammaShape \in \gammaSetShape, \ p_{\mathrm{o}} \in \mathcal{O}.}
\end{mini!}
{
The optimization problem~\eqref{eq:multiple-obs-single-state-opt-def} is a specific case of~\eqref{eq:general-NLP-many-constr-def} with each constraint $i$ being infinite.
Thus,~\eqref{eq:multiple-obs-single-state-opt-def} can be solved by Algorithm~\ref{alg:conceptual-external-active}, 
   with the subproblems being \gls{sip} problems solvable by Algorithm~\ref{alg:conceptual-reduction}, cf. Remark~\ref{rem:external-active-set-SIP-subproblem}.
The finite index set~$\mathcal{I}$ in~\eqref{eq:general-NLP-many-constr-def} specifically corresponds to the obstacle set~$\mathcal{O}$.}
The algorithm is to iteratively update an obstacle subset $\mathcal{O}_{\mathrm{s}}^{(j)} \subset \mathcal{O}$ and
solve subproblems in which each constraint is associated with an obstacle within the obstacle subset $\mathcal{O}_{\mathrm{s}}^{(j)}$:
\begin{mini!}|s|
	{{x}}{L\left({x} \right)} {\label{eq:multiple-obs-single-state-opt-obs-subset}}{}
	\addConstraint{0 }{ \geq r_{\mathrm{shp}} \!-\! \left\|p_{\mathrm{c}}\!\left({x} \right) \!+\!  R\!\left({x}\right)\!\gammaShape  \!-\! p_{\mathrm{o}}  \right\|_2 \! , \nonumber }
	\addConstraint{}{\forall \gammaShape \in \gammaSetShape, \ p_{\mathrm{o}} \in \mathcal{O}_{\mathrm{s}}^{(j)}.}
\end{mini!}

The subproblem~\eqref{eq:multiple-obs-single-state-opt-obs-subset} can then be solved using the local reduction method.
The subproblem of the subproblem~\eqref{eq:multiple-obs-single-state-opt-obs-subset} is of the form
\begin{mini!}|s|
	{{x}}{L({x})}{\label{eq:multiple-obs-single-state-lin-subprob}}{}
	\addConstraint{0\geq }{ \,  \check{h}_{\text{coll}}\left({x};{x}^{(j)}, p_{\mathrm{o}}\right)\!, \ p_{\mathrm{o}} \in \mathcal{O}_{\mathrm{s}}^{(j)} .}
\end{mini!}
In practice, the local-reduction iteration is performed only once after an update of the obstacle subset.
Otherwise, it would be computationally inefficient for the relatively time-intensive local-reduction iteration to converge while the outer external-active-set iteration is far from convergence.

The obstacle subset $\mathcal{O}_{\mathrm{s}}^{(j)}$ is updated by adding those obstacles which are the closest to the polygon,
if the obstacles are within a predefined neighborhood of the polygon.
For the converged solution, all obstacles of the set $\mathcal{O}$ satisfy the collision-avoidance constraints within the convergence criteria.
Consider a converged solution ${x}^{(j_{\text{cvg}})}$.
By contradiction we suppose that there exists an obstacle $p_{\text{coll}} \in \mathcal{O}$ that violates the collision-avoidance constraint.
Since ${x}^{(j_{\text{cvg}})}$ is a converged solution,
the solution ${x}^{(j_{\text{cvg}})}$ is sufficiently close to ${x}^{(j_{\text{cvg}}\shortminus 1)}$ and the obstacle $p_{\text{coll}}$ leads to constraint violation for ${x}^{(j_{\text{cvg}}\shortminus 1)}$ as well.
In consequence, $p_{\text{coll}}$ will be added to the subset,
which alters the solution.
If the change in the solution violates the convergence criteria,
    the algorithm did in fact not converge.

\vspace{-0.2\baselineskip}
\subsection{Nominal Collision-Avoidance Optimal Control}
\label{sec:collision-free-trajectory}
\vspace{-0.2\baselineskip}

A collision-free trajectory can be obtained by applying the previously presented method to every state within the prediction horizon.
A summary of the numerical method for solving the \gls{ocp}~\eqref{eq:nominal-ocp-def} without any approximation is presented in Algorithm~\ref{alg:nominal-collision-free-trajectory}.
The function \mbox{\textsc{UpdateObsSubset}} is implemented differently for the specific use cases presented in this paper.
The specific implementation for each case is provided in the corresponding section.
A graphical illustration of the algorithm is provided in Fig.~\ref{fig:graphic_illustration}.

\begin{algorithm}[t]
	\caption{Method for solving the nominal OCP~\eqref{eq:nominal-ocp-def}}
	\label{alg:nominal-collision-free-trajectory}
	\begin{algorithmic}
		\Require Initial guess ${u}^{(0)}_{0}, \Compactcdots, {u}^{(0)}_{N\shortminus 1}$,
		${x}^{(0)}_{0}, \Compactcdots, {x}^{(0)}_{N}$,
		and ${\mathcal{O}}^{(\shortminus 1)}_{\mathrm{s}, 1}, \Compactcdots, {\mathcal{O}}^{(\shortminus 1)}_{\mathrm{s}, N}$
		\For{$ j = 0, \Compactcdots, \text{MAXITER} $}
		\For{$k=1, \Compactcdots, N$}
		\State $\mathcal{O}_{\mathrm{s}, k}^{(j)} \gets$ \Call{UpdateObsSubset}{${x}_k^{(j)} , \mathcal{O}^{(j\shortminus 1)}_{\mathrm{s}, k}$}
		\For{$p_{\mathrm{o}} \in \mathcal{O}^{(j)}_{\mathrm{s}, k}$}
		\State $\gammaShape^*\left({x}^{(j)}_k; p_{\mathrm{o}}\right) \gets$ comp. lower-level maximizer
		\EndFor
		\EndFor
		\State ${u}^{(j+1)}_{0},\!\! \Compactcdots {u}^{(j+1)}_{N\shortminus 1}, {x}^{(j+1)}_{0}, \!\!\Compactcdots{x}^{(j+1)}_{N}  \! \gets \! $ solve subprob.~\eqref{eq:nominal-ocp-subproblem}
		\State $\Delta = \max(\|{u}^{(j+1)}_{0} - {u}^{(j)}_{0}\|_{\infty}, \Compactcdots,  \|{x}^{(j+1)}_{N} - {x}^{(j )}_{N} \|_{\infty})$
		\If{$ \Delta  \leq \epsilon_{\text{cvg}}$}
		\State CONVERGED $\gets 1$, \Return ${u}_0^{(j+1)},\Compactcdots, {u}^{(j+1)}_{N\shortminus 1}$
		\EndIf
		\EndFor
		\State \Return ${u}_0^{(\text{MAXITER}+1)},\Compactcdots, {u}^{(\text{MAXITER}+1)}_{N\shortminus 1}$
	\end{algorithmic}
\end{algorithm}

The subproblem solved in each iteration takes the form
\begin{mini!}|s|
	{\substack{{x}_0, \Compactcdots, {x}_N,\\
			{u}_0, \Compactcdots, {u}_{N\!\shortminus \!1}}}
	{\sum_{k=0}^{N\!-\!1} {L}_k\left({x}_k, {u}_k\right) + {L}_N\left({x}_N\right)}
	{\label{eq:nominal-ocp-subproblem}}{}
	\addConstraint{{x}_{0}}{={\bar{x}}_{0},}{}
	\addConstraint{{x}_{k+1}}{=\psi_k({x}_k,\! {u}_k), \ }{k \!\in \! \mathbb{N}_{\left[0, N\shortminus 1 \right]}, \label{eq:nominal-ocp-dynamic-constraints-nonlinearized}}
	\addConstraint{0}{\geq h_{\mathcal{XU}}\!\left(x_k, u_k \right),\ }{k \!\in \! \mathbb{N}_{\left[0, N\shortminus 1 \right]},}
	\addConstraint{0}{\geq h_{\mathcal{X}_N}\!\left(x_N \right), }
	\addConstraint{0 }{\geq \check{h}_{\text{coll}}\left({x}_k;{x}_k^{(j)}, p_{\mathrm{o}}\right),  \nonumber}
	\addConstraint{}{\text{for all } p_{\mathrm{o}} \in \mathcal{O}_{\mathrm{s}, k}^{(j)},\ }{ k \!\in \! \mathbb{N}_{\left[1, N \right]}}{\label{eq:nominal-ocp-subproblem-linearized-constr}}.
\end{mini!}
The constraints on the system dynamics~\eqref{eq:nominal-ocp-dynamic-constraints-nonlinearized} are not linearized.
The subproblem~\eqref{eq:nominal-ocp-subproblem} is solved by a \gls{sqp} solver, which is well suited for leveraging the solutions obtained in previous iterations, for a fixed number of \gls{qp} iterations.

\vspace{-0.3\baselineskip}
\begin{remark}
	\label{rem: discrete-time-coolision-avoidance}
	{
	This paper considers only discrete-time formulations of the collision-free~\gls{ocp}.
	Future research could investigate an approximation of continuous-time collision avoidance by enforcing collision avoidance in the shooting intervals,
	    e.g., at the collocation points of integration schemes,
	the temporal intervals of which are typically much smaller than the discretization intervals.
	The external active set method can be employed to simultaneously identify the closest obstacles and the worst-case collocation points.
    }
\end{remark}

\subsection{{Approximate Reformulation of the Robust Collision-Avoidance Constraints}}
\label{sec:reformulation-robust-constr}

Starting in this subsection, state uncertainties are taken into account.
This subsection presents an approximate reformulation of the robust collision-avoidance constraints~\eqref{eq:robust-ocp-collision-constr}.

For the infinite collision-avoidance constraints~\eqref{eq:robust-ocp-collision-constr},
    consider the worst-case translational and the worst-case rotational uncertainty independently.
Given one obstacle~$p_{\mathrm{o}}$ and one time step $k$,
    this corresponds to a constraint in the form of
\begin{equation}
	\begin{split}
	r_{\mathrm{shp}} \! \leq & \left\| p_{\mathrm{c}}\!\left({x}_k \! + \! \Sigma_k^{\frac{1}{2}}\gammaTransNoise\right) \! + \!  \rotMat \! \left({x}_k  \! + \! \Sigma_k^{\frac{1}{2}} \gammaRotNoise\right) \! \gammaShape \!- \!p_{\mathrm{o}} \right\|_2 \!, \\
	& \forall \gammaShape \! \in \! \gammaSetShape,
	\gammaTransNoise \in \mathbb{B}^{n_x}, \gammaRotNoise \in \mathbb{B}^{n_x}.
\end{split}
\label{eq:robust-coll-constr-separateRotPos}
\end{equation}
The enlarged infinite index set may lead to a smaller minimum value on the right of the inequality
   and thereby an inner approximation of the feasible set of the robust \gls{ocp}~\eqref{eq:robust-ocp-def}.

At a given state $x_k$, the constraint~\eqref{eq:robust-coll-constr-separateRotPos} can be approximated via linearization with respect to the uncertainties $\gammaTransNoise$ and $\gammaRotNoise$:
\begin{equation}
\begin{split}
	r_{\mathrm{shp}} & \leq \Big\| p_{\mathrm{c}}\!\left( x_k\right) + \rotMat\left( x_k \right) J_{\mathrm{t}}\!\left( x_k\right)\Sigma_k^{\frac{1}{2}}\gammaTransNoise\\
	& \quad  +  \rotMat \!\left( x_k\right)
	\so3Exp\left( \left[J_{\mathrm{r}}\!\left( x_k\right) \Sigma_k^{\frac{1}{2}}\gammaRotNoise\right]_{\times}\!\right) \gammaShape  - p_{\mathrm{o}} \Big\|_2,
\end{split}
\label{eq:robust-coll-constr-linearization}
\end{equation}
for all $\gammaShape \! \in \! \gammaSetShape$, $\gammaTransNoise
 \!\in \!\mathbb{B}^{n_x}$, and $\gammaRotNoise \!\in\! \mathbb{B}^{n_x}$,
    where $J_{\mathrm{t}}\!\left( x_k\right)$ and $J_{\mathrm{r}}\!\left( x_k\right)$
    denote the translational and rotational Jacobians, respectively,
    and $\so3Exp([\cdot]_{\times})$ denotes the exponential map.
The term $\rotMat \!\left( x_k\right)
\so3Exp\Big( \! \left[ J_{\mathrm{r}}\!\left( x_k\right) \Sigma_k^{\frac{1}{2}}\gammaRotNoise\right]_{\times}\!\Big) \gammaShape$,
    which consists of a matrix exponential and multiplication with $\gammaShape$,
    complicates the lower-level optimization problem of determining the worst-case $\gammaShape$, $\gammaTransNoise$, and $\gammaRotNoise$ with respect to the constraint~\eqref{eq:robust-coll-constr-linearization}.
    The proposed method handles the term $\rotMat \!\left( x_k\right)
    \so3Exp\Big( \! \left[ J_{\mathrm{r}}\!\left( x_k\right) \Sigma_k^{\frac{1}{2}}\gammaRotNoise\right]_{\times}\!\Big) \gammaShape$ by
	determining an upper bound of its possible impact on the constraint value and tightening the constraint accordingly.
The remaining constraint infiniteness in~\eqref{eq:robust-coll-constr-linearization} is tackled by the local reduction method.

In more detail, given the triangular inequality,
    the right side of the inequality in~\eqref{eq:robust-coll-constr-linearization} can be lower-bounded by
\begin{equation}
\begin{split}
	&\left\| p_{\mathrm{c}}\!\left( x_k\right) + \rotMat\left( x_k\right) J_{\mathrm{t}}\!\left( x_k\right)\Sigma_k^{\frac{1}{2}}\gammaTransNoise  + \rotMat \!\left( x_k\right)\gammaShape - p_{\mathrm{o}}\right\|_2 \\
	& - \!\left\| \rotMat\!\left( x_k\right) \so3Exp \!\left(\!\left[  J_{\mathrm{r}}\!\left( x_k\right) \Sigma_k^{\frac{1}{2}}\gammaRotNoise\right]_{\times}\!\right)\! \gammaShape \!- \!\rotMat \!\left( x_k\right)\!\gammaShape\right\|_2\!\!.
	\label{eq:robust-coll-triangular-lower-bound}
\end{split}
\end{equation}
Consider the 3D scenario.
Let $\theta\boldsymbol{\tau} = J_{\mathrm{r}}\!\left( x_k\right) \Sigma_k^{\frac{1}{2}}\gammaRotNoise$,
    where $\boldsymbol{\tau}$ is a unit vector, and~$\theta$ represents a rotation angle around $\boldsymbol{\tau}$, and is here considered nonnegative.
The second term in~\eqref{eq:robust-coll-triangular-lower-bound} can then be rewritten as
\begin{equation}
	\left\|\rotMat \!\left( x_k\right)\so3Exp\left( \left[ \theta\boldsymbol{\tau}\right]_{\times}\right) \gammaShape - \rotMat \!\left( x_k\right)\gammaShape \right\|_2.
	\label{eq:lemma-term-to-be-upper-bounded}
\end{equation}

\vspace{0.5\baselineskip}
\begin{lemma}
	For fixed values of $\gammaShape$ and $\gammaRotNoise$,
    \eqref{eq:lemma-term-to-be-upper-bounded} can be upper-bounded by $\theta \|\gammaShape\|_2 $.
\end{lemma}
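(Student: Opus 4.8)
The plan is to first strip off the orthogonal factor $\rotMat(x_k)$. Since $\rotMat(x_k)\in\text{SO}(\nWorld)$ preserves the Euclidean norm, \eqref{eq:lemma-term-to-be-upper-bounded} equals $\bigl\|\so3Exp([\theta\boldsymbol{\tau}]_{\times})\gammaShape-\gammaShape\bigr\|_2$, so it suffices to show that rotating the fixed vector $\gammaShape$ by angle $\theta$ about the \emph{unit} axis $\boldsymbol{\tau}$ moves it by at most $\theta\|\gammaShape\|_2$ (recall that fixing $\gammaRotNoise$, together with $x_k$ and $\Sigma_k$, determines $\theta$ and $\boldsymbol{\tau}$ through $\theta\boldsymbol{\tau}=J_{\mathrm{r}}(x_k)\Sigma_k^{\frac12}\gammaRotNoise$).

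The main step I would carry out is a Lipschitz/integral estimate along the rotation path. Define $v(t):=\so3Exp([t\boldsymbol{\tau}]_{\times})\gammaShape$ for $t\in[0,\theta]$, so that $v(0)=\gammaShape$ and $v(\theta)$ is the rotated vector. Differentiating the matrix exponential gives $\dot v(t)=[\boldsymbol{\tau}]_{\times}v(t)$. Because $\so3Exp([t\boldsymbol{\tau}]_{\times})$ is itself a rotation, $\|v(t)\|_2=\|\gammaShape\|_2$ for all $t$; and since $[\boldsymbol{\tau}]_{\times}v=\boldsymbol{\tau}\times v$ with $\|\boldsymbol{\tau}\|_2=1$, we get $\|\dot v(t)\|_2=\|\boldsymbol{\tau}\times v(t)\|_2\le\|v(t)\|_2=\|\gammaShape\|_2$. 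Hence
\begin{equation*}
\bigl\|v(\theta)-v(0)\bigr\|_2=\Bigl\|\int_0^{\theta}\dot v(t)\,\mathrm{d}t\Bigr\|_2\le\int_0^{\theta}\|\dot v(t)\|_2\,\mathrm{d}t\le\theta\,\|\gammaShape\|_2,
\end{equation*}
which is the claim; note this uses $\theta\ge 0$, consistent with the sign convention fixed just before the lemma.

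Alternatively — and this yields the slightly sharper intermediate bound $2\|\gammaShape\|_2\sin(\theta/2)$ — I would decompose $\gammaShape=a+c$ with $a$ parallel to $\boldsymbol{\tau}$ and $c$ orthogonal to it: the rotation fixes $a$ and rotates $c$ by angle $\theta$ within its plane, so the displacement has length $2\|c\|_2\sin(\theta/2)\le 2\|\gammaShape\|_2\sin(\theta/2)\le\theta\|\gammaShape\|_2$, again using $\sin(\theta/2)\le\theta/2$ for $\theta\ge0$. I do not expect a genuine obstacle here; the only points requiring care are restricting to $\theta\ge0$ and, in the 2D case, reading $[\theta]_{\times}$ as the $2\times2$ rotation generator, for which the same integral argument applies verbatim with the scalar $1$ in place of $\boldsymbol{\tau}$.
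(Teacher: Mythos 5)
Your proposal is correct, but it reaches the bound by a genuinely different route than the paper. The paper's proof expands $\so3Exp\left(\left[\theta\boldsymbol{\tau}\right]_{\times}\right)$ via the Rodrigues formula, observes that the two correction terms $\sin(\theta)[\boldsymbol{\tau}]_{\times}\gammaShape$ and $(1-\cos(\theta))[\boldsymbol{\tau}]_{\times}^2\gammaShape$ are mutually orthogonal, bounds each cross-product factor by $\|\gammaShape\|_2$, and simplifies $\bigl(\sin^2\theta+(1-\cos\theta)^2\bigr)^{1/2}$ to $2\sin(\theta/2)\le\theta$. Your primary argument instead integrates the speed of the rotating endpoint: since $\dot v(t)=\boldsymbol{\tau}\times v(t)$ has norm at most $\|v(t)\|_2=\|\gammaShape\|_2$, the displacement over parameter length $\theta$ is at most $\theta\|\gammaShape\|_2$. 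This is shorter, avoids the Rodrigues formula entirely, and generalizes to any one-parameter group of isometries, at the cost of skipping the paper's sharper intermediate chord-length bound $2\sin(\theta/2)\|\gammaShape\|_2$ (which is immaterial here, since the lemma and the downstream backoff \eqref{eq:robust-coll-backoff-def} only use the final bound $\theta\|\gammaShape\|_2$). Your alternative axis-decomposition argument recovers that intermediate bound geometrically --- in fact slightly sharpened to $2\sin(\theta/2)\|c\|_2$ with $c$ the component of $\gammaShape$ orthogonal to $\boldsymbol{\tau}$ --- and is essentially a coordinate-free restatement of what the paper does algebraically. Both of your arguments correctly handle the two points that need care, namely the convention $\theta\ge 0$ and the reading of $[\cdot]_{\times}$ in the 2D case.
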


\begin{proof}
	Since the vector norm remains unchanged under a rotation transform,
	    \eqref{eq:lemma-term-to-be-upper-bounded} is equal to $\left\|\so3Exp\left(\left[  \theta\boldsymbol{\tau}\right]_{\times}\right) \gammaShape - \gammaShape \right\|_2 $.
	The definition of the exponential map for rotation matrices of $\text{SO}\!\left(3\right)$ is given by
	$\so3Exp\left(\left[  \theta\boldsymbol{\tau}\right]_{\times}\right) \! = \! \mathbb{I} +  \sin\!\left( \theta\right) [\boldsymbol{\tau}]_{\times} \!+ \left( 1 \!- \! \cos\!\left( \theta\right)\right)[\boldsymbol{\tau}]_{\times}^2$,
	and we have
    \begin{subequations}
    	\allowdisplaybreaks
    	\begin{align}
    		& \left\|\so3Exp\left( \left[  \theta\boldsymbol{\tau}\right]_{\times}\right) \gammaShape - \gammaShape \right\|_2 \\
    		= &\left\| \sin\left( \theta\right) [\boldsymbol{\tau}]_{\times} \gammaShape + \left( 1- \cos\left( \theta\right)\right) [\boldsymbol{\tau}]_{\times}^2 \gammaShape\right\|_2
    		\label{eq:lemma-upper-bound-b}\\
    		= &\left( \sin^2\left( \theta\right) \left\| [\boldsymbol{\tau}]_{\times} \gammaShape\right\|_2^2 \!+\!\left( 1 \!-\! \cos\left( \theta\right)\right)^2  \left\| [\boldsymbol{\tau}]_{\times}^2 \gammaShape\right\|_2^2 \right)^{\frac{1}{2}}
    		\label{eq:lemma-upper-bound-c}
    		\\
    		\leq & \left(  \sin^2\left( \theta\right) + \left( 1- \cos\left( \theta\right)\right)^2\right)^{\frac{1}{2}} \|\gammaShape\|_2
    		\label{eq:lemma-upper-bound-d}\\
    		= & 2 \sin\left( \frac{\theta}{2}\right) \|\gammaShape\|_2 
    		\leq  \theta \|\gammaShape\|_2,
    	\end{align}
    \end{subequations}
    where~\eqref{eq:lemma-upper-bound-c} is obtained by using the fact that the two terms in the $l_2$ norm of~\eqref{eq:lemma-upper-bound-b} are orthogonal to each other,
    and the observation that the norm of the cross product between a vector and a unit vector is at most equal to the norm of the vector yields~\eqref{eq:lemma-upper-bound-d}.
\end{proof}

\begin{corollary}
	For all $\gammaShape \in \gammaSetShape$ and $\gammaRotNoise \in \mathbb{B}^{n_x}$,
	\eqref{eq:lemma-term-to-be-upper-bounded} can be upper-bounded by
	\begin{equation}
		\begin{split}
		&\beta_{\mathrm{coll}, k}\!\left(x_k, \Sigma_k\right):=\\
		& \lambda_{\max}^{\frac{1}{2}}\left(J_{\mathrm{r}} \!\left( x_k\right) \Sigma_k J_{\mathrm{r}} \!\left( x_k\right)^{\top} \right) \cdot \max_{\gammaShape \in \gammaSetShape}\! \|\gammaShape\|_2,
	\end{split}
		\label{eq:robust-coll-backoff-def}
	\end{equation}
where $\lambda_{\max}^{\frac{1}{2}}:\mathbb{S}^{n}_{+}\rightarrow \bbR$ denotes the square root of the maximum eigenvalue of a positive semi-definite matrix.
\end{corollary}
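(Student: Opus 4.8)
The plan is to obtain $\beta_{\mathrm{coll},k}$ directly from the preceding lemma by bounding the rotation angle $\theta$ and the shape‑parameter norm $\|\gammaShape\|_2$ separately, each over its own domain. By the lemma, for any fixed $\gammaShape$ and $\gammaRotNoise$ the quantity in~\eqref{eq:lemma-term-to-be-upper-bounded} is at most $\theta\|\gammaShape\|_2$, and by the construction preceding the lemma one has $\theta = \bigl\|J_{\mathrm{r}}(x_k)\,\Sigma_k^{\frac{1}{2}}\gammaRotNoise\bigr\|_2$, since $\boldsymbol{\tau}$ is a unit vector and $\theta$ is taken nonnegative. Because both $\theta \ge 0$ and $\|\gammaShape\|_2 \ge 0$, the product is bounded by the product of the two suprema,
\[
\theta\|\gammaShape\|_2 \;\le\; \Bigl(\sup_{\gammaRotNoise\in\gammaSetSphere{n_x}} \bigl\|J_{\mathrm{r}}(x_k)\,\Sigma_k^{\frac{1}{2}}\gammaRotNoise\bigr\|_2\Bigr)\Bigl(\sup_{\gammaShape\in\gammaSetShape}\|\gammaShape\|_2\Bigr),
\]
so it only remains to evaluate the first supremum.

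First I would recognize $\sup_{\gammaRotNoise\in\gammaSetSphere{n_x}}\bigl\|J_{\mathrm{r}}(x_k)\Sigma_k^{\frac{1}{2}}\gammaRotNoise\bigr\|_2$ as the spectral norm of $J_{\mathrm{r}}(x_k)\Sigma_k^{\frac{1}{2}}$, i.e.\ its largest singular value, since the image of the unit ball under a linear map attains a maximal extent equal to the operator norm of the map. Squaring, this equals the largest eigenvalue of $J_{\mathrm{r}}(x_k)\Sigma_k^{\frac{1}{2}}\bigl(\Sigma_k^{\frac{1}{2}}\bigr)^{\!\top}\!J_{\mathrm{r}}(x_k)^{\top}$, and because $\Sigma_k^{\frac{1}{2}}$ is the symmetric positive‑semidefinite square root we have $\Sigma_k^{\frac{1}{2}}\bigl(\Sigma_k^{\frac{1}{2}}\bigr)^{\!\top} = \Sigma_k$. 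Hence the supremum equals $\lambda_{\max}^{\frac{1}{2}}\bigl(J_{\mathrm{r}}(x_k)\Sigma_k J_{\mathrm{r}}(x_k)^{\top}\bigr)$. Substituting this into the displayed bound reproduces exactly the definition of $\beta_{\mathrm{coll},k}(x_k,\Sigma_k)$ in~\eqref{eq:robust-coll-backoff-def}; since the estimate held for arbitrary admissible $\gammaShape$ and $\gammaRotNoise$, the uniform bound claimed by the corollary follows.

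I do not expect a genuine obstacle here: the singular‑value/eigenvalue identity and the characterization of the operator norm as the supremum over the unit ball are standard. The only points meriting a line of care are that the decoupling of the two suprema is legitimate precisely because both factors are nonnegative, and that $\max_{\gammaShape\in\gammaSetShape}\|\gammaShape\|_2$ is finite (and attained) because $\gammaSetShape$ in~\eqref{eq:polygon-def} is a bounded polytope — an assumption used implicitly throughout. If one wants to cover the planar case as well, the same argument applies verbatim with $J_{\mathrm{r}}(x_k)\in\bbR^{1\times n_x}$, in which case the maximum eigenvalue is simply the scalar $J_{\mathrm{r}}(x_k)\Sigma_k J_{\mathrm{r}}(x_k)^{\top}$.
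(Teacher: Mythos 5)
Your proof is correct and follows essentially the same route as the paper: both bound $\theta=\bigl\|J_{\mathrm{r}}(x_k)\Sigma_k^{\frac{1}{2}}\gammaRotNoise\bigr\|_2$ over the unit ball by the spectral norm of $J_{\mathrm{r}}(x_k)\Sigma_k^{\frac{1}{2}}$ and identify its square with $\lambda_{\max}\bigl(J_{\mathrm{r}}(x_k)\Sigma_k J_{\mathrm{r}}(x_k)^{\top}\bigr)$ before multiplying by $\max_{\gammaShape\in\gammaSetShape}\|\gammaShape\|_2$. The only cosmetic difference is that the paper reaches the eigenvalue identity via the fact that $M^{\top}M$ and $MM^{\top}$ share nonzero eigenvalues, whereas you form $MM^{\top}$ directly through the singular-value characterization.
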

\begin{proof}
	From the definition of $\theta$ and $\boldsymbol{\tau}$, we have
	\begin{equation}
		\theta^2 =\left( J_{\mathrm{r}}\!\left( x_k\right) \Sigma_k^{\frac{1}{2}}\gammaRotNoise\right)^{\top} J_{\mathrm{r}}\!\left( x_k\right) \Sigma_k^{\frac{1}{2}}\gammaRotNoise.
	\end{equation}
	The corollary follows from the definition of the eigenvalue and that $\left( J_{\mathrm{r}}\!\left( x_k\right) \Sigma_k^{\frac{1}{2}}\right)^{\top} \! J_{\mathrm{r}}\!\left( x_k\right) \Sigma_k^{\frac{1}{2}}$ and $ J_{\mathrm{r}}\!\left( x_k\right) \Sigma_k^{\frac{1}{2}} \left( \! J_{\mathrm{r}}\!\left( x_k\right) \Sigma_k^{\frac{1}{2}}\right)^{\!\top}$ have the same nonzero eigenvalues.
\end{proof}

With this, a sufficient condition that guarantees the satisfaction of the constraint~\eqref{eq:robust-coll-constr-linearization} is given by
\begin{equation}
	\begin{split}
	0 & \geq r_{\mathrm{shp}} + \beta_{\mathrm{coll}, k}\!\left(x_k, \Sigma_k \right) - \Big\| p_{\mathrm{c}}\!\left( x_k\right)  + \rotMat \!\left( x_k\right)\gammaShape\\
	&  + \!\rotMat\!\left( x_k\right)\! J_{\mathrm{t}}\!\left( x_k\right)\!\Sigma_k^{\frac{1}{2}}\gammaTransNoise  \! - \! p_{\mathrm{o}}\Big\|_2, \forall \gammaShape \! \in \! \gammaSetShape, \gammaTransNoise \!\in \!\mathbb{B}^{n_x}\!.
\end{split}
\label{eq:robust-constraint-reformulation}
\end{equation}

The 2D counterpart of the backoff simplifies~\eqref{eq:robust-coll-backoff-def} to $\left( J_{\mathrm{r}} \!\left( x_k\right)  \Sigma_k J_{\mathrm{r}}\!\left( x_k\right)^{\top} \right)^{\frac{1}{2}}  \cdot \max_{\gammaShape \in \gammaSetShape} \|\gammaShape\|_2$.
In the application of mobile robot navigation,
    the system state often contains the robot position and the robot heading.
Thereby, the linearization-based approximation~\eqref{eq:robust-coll-constr-linearization} is not subject to a truncation error,
    and the feasible set defined by~\eqref{eq:robust-constraint-reformulation} provides an inner approximation of the feasible set defined by~\eqref{eq:robust-coll-constr-separateRotPos} and also that by~\eqref{eq:robust-ocp-collision-constr}.

For the constraints~\eqref{eq:robust-constraint-reformulation},
    the corresponding lower-level optimization problem is given by
\begin{subequations}
	\begin{alignat}{3}
		&\tilde{h}_{\text{coll}}\!\left( x_k, \Sigma_k; p_{\mathrm{o}}\right) := && \max_{\substack{\gammaShape,\\ \gammaTransNoise}} && \quad r_{\mathrm{shp}} \!-\! \Bigl\|p_{\mathrm{c}}\!\left( x_k\right)  \! + \!  \rotMat \!\left( x_k\right)\!\gammaShape \!-\! p_{\mathrm{o}}\nonumber \\
		& && &&\quad + \rotMat\!\left( x_k\right) J_{\mathrm{t}}\!\left(x_k\right) \Sigma_k^{\frac{1}{2}}\gammaTransNoise \Bigr\|_2 \\
		& &&  \;  \text{s.t.} && \quad \forall \gammaShape  \in  \gammaSetShape, \gammaTransNoise \in \mathbb{B}^{n_x}.
	\end{alignat}
\label{eq:robust-lower-level-qcqp-def}
\end{subequations}

Omitting the constant terms and squaring the $l_2$ norm,
   the lower-level optimization problem becomes a convex \gls{qcqp} problem and can be efficiently solved by numerical solvers such as Clarabel~\cite{Goulart2024}.
The lower-level maximizers are denoted by $\gammaShape^*({x}_k, \Sigma_{k};p_{\mathrm{o}})$ and $\gammaTransNoise^*({x}_k, \Sigma_{k};p_{\mathrm{o}})$, respectively.

With the approximate constraint reformulation,
    we obtain the robust \gls{ocp} formulation as follows:
\begin{mini!}|s|
	{\substack{{x}_0, \Compactcdots, {x}_N,\\
			{u}_0, \Compactcdots, {u}_{N\!\shortminus \!1}, \\
			\Sigma_{0}, \Compactcdots, \Sigma_{N}}}
	{\sum_{k=0}^{N\!-\!1} {L}_k\left({x}_k, {u}_k\right) + {L}_N\left({x}_N\right)}
	{\label{eq:robust-ocp-def-wReformulation}}{}
	\addConstraint{{x}_{0}}{={\bar{x}}_{0},\ \Sigma_0= {\bar{\Sigma}}_{0},}{}
	\addConstraint{{x}_{k+1}}{=\psi_k({x}_k,\! {u}_k),\  k \!\in \! \mathbb{N}_{\left[0,  N\!\shortminus\! 1 \right]}, }{}
	\addConstraint{\Sigma_{k+1}}{=\Phi_k({x}_k,\! {u}_k, \! \Sigma_k),\  k \!\in \!\mathbb{N}_{\left[0, N\!\shortminus\! 1 \right]}, \label{eq:robust-ocp-def-wReformulation-Sigma}}{}
	\addConstraint{}{\text{Stage and terminal constraints}~\eqref{eq:robust-ocp-stage-constraints}, \eqref{eq:robust-ocp-teminal-constraints} \nonumber}
	\addConstraint{0}{ \geq \tilde{h}_{\text{coll}}\!\left( x_k, \Sigma_k; p_{\mathrm{o}}\right)\!+ \!\beta_{\mathrm{coll}, k}\!\left(x_k,\Sigma_k \right)\! , \nonumber}
	\addConstraint{}{p_{\mathrm{o}} \in\staticObsSet, \ k\in\mathbb{N}_{\left[1, N\right]},}
\end{mini!}
which involves the following approximations:
\begin{enumerate}[label=(\alph*)]
	\item The infinite index set is over-approximated by independently considering the worst-case translational and the worst-case rotational uncertainties (see~\eqref{eq:robust-coll-constr-separateRotPos});
	\item The disturbed translation and rotation are approximated via linearization around the nominal state (see~\eqref{eq:robust-coll-constr-linearization}), which may result in loss of guarantees;
	\item The backoff for the rotational uncertainty~\eqref{eq:robust-coll-backoff-def} is not necessarily tight, contributing to conservativeness;
	\item Using the linearization-based model~\eqref{eq:dynamics_ellipsoid_lin} for the uncertainty dynamics~\eqref{eq:robust-ocp-def-wReformulation-Sigma} introduces a truncation error.
\end{enumerate}

%\vspace{-\baselineskip}
\subsection{{Robust Collision-Avoidance Optimal Control}}
\label{sec:solve-robust-ocp}
This subsection presents a numerical method for solving the robust \gls{ocp}~\eqref{eq:robust-ocp-def-wReformulation}.
We adopt the \gls{zoro} method (see Section~\ref{sec:zoRO-preliminary}) and update the uncertainty matrices in a zero-order fashion to mitigate the complexity introduced by modeling the uncertainty dynamics in the \gls{ocp}.
The constraints in the subproblems of the \gls{zoro} iteration take the form of
\begin{equation}
	0 \geq \tilde{h}_{\text{coll}}\!\left( x_k, \Sigma_k^{(j)}; p_{\mathrm{o}}\right)\!+ \!\beta_{\mathrm{coll}, k}\!\left(x_k^{(j)},\Sigma_k^{(j)} \right)\! , p_{\mathrm{o}} \in\staticObsSet,
\end{equation}
where ${x}^{(j)}_k$ and $\Sigma_{k}^{(j)}$ denote the state variables and the uncertainty shape matrices at the current iteration $j$, respectively.
Analogous to the standard \gls{zoro} method, our subproblem treats uncertainty matrices as parameters and uses fixed backoffs.
However, while the subproblem in the standard \gls{zoro} method is an \gls{nlp} problem,
our subproblem involves infinitely many constraints per obstacle and a large obstacle set.
To address this, we apply local reduction and external active-set methods as in the nominal case.
The locally-reduced, linearized robust collision-avoidance constraint is given by
\begin{equation}
	\begin{split}
		\check{\tilde{h}}_{\text{coll}}& \left({x}_k; {x}_k^{(j, m)},\Sigma_{k}^{(j)}, p_{\mathrm{o}}\right) \coloneqq \\
		& \tilde{h}_{\text{coll}}({x}^{(j, m)}_k, \Sigma_{k}^{(j)}; p_{\mathrm{o}}) + \frac{\mathrm{d} \tilde{h}_{\text{coll}}}{\mathrm{d} x_k} \left({x}_k \!- \!{x}_k^{(j, m)} \right), \\
		0 \geq & \, \check{\tilde{h}}_{\text{coll}}\left({x}_k; {x}_k^{(j, m)},\Sigma_{k}^{(j)}, p_{\mathrm{o}}\right) \!+ \!\beta_{\mathrm{coll}, k}\!\left(x_k^{(j)},\Sigma_k^{(j)} \right). \\
	\end{split}
	\label{eq:robust-linearized-constr}
\end{equation}
where ${x}^{(j, m)}_k$ denote the state variables at the outer \gls{zoro} iteration $j$ and the inner iteration $m$.
As in solving the nominal \gls{ocp}, the derivatives can be straightforwardly computed without computing the sensitivities of the lower-level maximizers $\gammaShape^*$ and $\gammaTransNoise^*$ with respect to $x_k$.

The algorithm for solving the robust \gls{ocp}~\eqref{eq:robust-ocp-def-wReformulation} by approximation is summarized in Algorithm~\ref{alg:robust-collision-free-trajectory}.
Note that the inner iteration for solving the \gls{sip} subproblem (Line~\ref{alg-line:robust-ocp-sip-subproblem-start} to~\ref{alg-line:robust-ocp-sip-subproblem-end}) is performed only once after every uncertainty propagation (Line~\ref{alg-line:robust-ocp-propagation-start} and~\ref{alg-line:robust-ocp-propagation-end}).
Although it is possible to solve for more iterations,
it is computationally inefficient to let the inner, relatively time-intensive, iteration converge when the outer \gls{zoro} iteration is still far from convergence.
The \gls{nlp} subproblem,
which considers the uncertainty matrices as parameters, focuses on a subset of critical obstacles, and includes the locally-reduced constraints, is given by
\begin{mini!}|s|
	{\substack{{x}_0, \Compactcdots, {x}_N,\\
			{u}_0, \Compactcdots, {u}_{N\!\shortminus \!1}}}
	{\sum_{k=0}^{N\!-\!1} {L}_k\left({x}_k, {u}_k\right) + {L}_N\left({x}_N\right)}
	{\label{eq:robust-ocp-subproblem}}{}
	\addConstraint{{x}_{0}}{={\bar{x}}_{0},}{}
	\addConstraint{{x}_{k+1}}{=\psi_k({x}_k,\! {u}_k),\  k \!\in \! \mathbb{N}_{\left[0, N\shortminus 1 \right]},}{}
	\addConstraint{0 }{\geq h_{\mathcal{XU}}({x}_k, u_k) \! + \! \beta_{\mathcal{XU}} \!\left({x}_k^{(j)},{u}_k^{(j)}, \Sigma_k^{(j)} \right)\!, \nonumber }{}
	\addConstraint{}{\quad \; k \!\in \! \mathbb{N}_{\left[0, N\shortminus 1 \right]},}
	\addConstraint{0}{ \geq  h_{\mathcal{X}_N}({x}_N) + \beta_{\mathcal{X}_N}\!\left({x}_N^{(j)}, \Sigma_N^{(j)} \right), }{}
	\addConstraint{0\geq }{ \,   \check{\tilde{h}}_{\text{coll}}\!\left({x}_k; {x}_k^{(j)},\Sigma_{k}^{(j)}, p_{\mathrm{o}}\right) \!+ \!\beta_{\mathrm{coll}, k}\!\left(\! x_k^{(j)},\Sigma_k^{(j)} \right)\!, \nonumber}
	\addConstraint{}{ \, p_{\mathrm{o}} \in \mathcal{O}_{\mathrm{s}, k}^{(j)}, \ k \!\in \! \mathbb{N}_{\left[1, N \right]}}{\label{eq:robust-ocp-subproblem-linearized-constr}},
\end{mini!}
where the index $m$ for the inner iteration is dropped for compactness.
State-input and terminal constraints are robustified via a backoff reformulation (see~\eqref{eq:robust-ocp-tightened-constr-affine-reduced}) with backoff denoted by $\beta_{\mathcal{XU}}\!\left(\cdot\right)$ and $\beta_{\mathcal{X}_N}\!\left(\cdot\right)$, respectively.
Many alternative approaches for constraint robustification exist beyond the backoff reformulation~\cite{BenTal1998}, but a detailed discussion of these methods falls outside the scope of this paper.

\begin{algorithm}[t]
	\caption{Method for solving the robust \gls{ocp}~\eqref{eq:robust-ocp-def-wReformulation}}
	\label{alg:robust-collision-free-trajectory}
	\begin{algorithmic}[1]
		\Require Initial guess ${u}^{(0)}_{0}, \Compactcdots, {u}^{(0)}_{N\shortminus 1}$,
		${x}^{(0)}_{0}, \Compactcdots, {x}^{(0)}_{N}$,
		and ${\mathcal{O}}^{(\shortminus 1)}_{\mathrm{s}, 1}, \Compactcdots, {\mathcal{O}}^{(\shortminus 1)}_{\mathrm{s}, N}$
		\For{$ j = 0, \Compactcdots, \text{MAXITER} $}
		\BeginBox[draw=gray, dashed, line width=0.8pt, inner sep=1.5pt, outer sep=2pt]
		\For{$k=0, \Compactcdots, N$} \label{alg-line:robust-ocp-propagation-start}
		\State $\Sigma_{k+1} \gets \Phi_k({x}_k,\! {u}_k, \! \Sigma_k) $ \label{alg-line:robust-ocp-propagation-end}
		\EndFor
		\EndBox
		\For{$k=1, \Compactcdots, N$} \label{alg-line:robust-ocp-sip-subproblem-start}
		\State $\mathcal{O}_{\mathrm{s}, k}^{(j)} \gets$ \Call{UpdateObsSubset}{${x}_k^{(j)} , \mathcal{O}^{(j\shortminus 1)}_{\mathrm{s}, k}$}
		\BeginBox[draw=gray, dashed, line width=0.8pt, inner sep=1.5pt, outer sep=2pt]
		\State $\beta_{\mathcal{XU}} \gets$ comp. state-input constr. backoff \Comment{\eqref{eq:compute-backoff}}
		\State $\beta_{\mathrm{coll}, k} \gets$ comp. coll. avoid. constr. backoff
		\Comment{\eqref{eq:robust-coll-backoff-def}}
		\For{$p_{\mathrm{o}} \in \mathcal{O}^{(j)}_{\mathrm{s}, k}$}
		\State $\gammaShape^*\left(\cdot\right), \gammaTransNoise^*\left(\cdot \right) \gets$ solve lower-level problem~\eqref{eq:robust-lower-level-qcqp-def}
		\EndFor
		\EndBox
		\EndFor
		\State ${u}^{(j+1)}_{0},\! \Compactcdots, {u}^{(j+1)}_{N\shortminus 1}, {x}^{(j+1)}_{0},\! \Compactcdots, {x}^{(j+1)}_{N}  \! \gets \! $ solve~\eqref{eq:robust-ocp-subproblem}  \label{alg-line:robust-ocp-sip-subproblem-end}
		\State $\Delta = \max(\|{u}^{(j+1)}_{0} - {u}^{(j)}_{0}\|_{\infty}, \Compactcdots,  \|{x}^{(j+1)}_{N} - {x}^{(j )}_{N} \|_{\infty})$
		\If{$ \Delta  \leq \epsilon_{\text{cvg}}$}
		\State CONVERGED $\gets 1$, \Return ${u}_0^{(j+1)},\Compactcdots, {u}^{(j+1)}_{N\shortminus 1}$
		\EndIf
		\EndFor
		\State \Return ${u}_0^{(\text{MAXITER}+1)},\Compactcdots, {u}^{(\text{MAXITER}+1)}_{N\shortminus 1}$
		\State \LeftComment{The gray boxes highlight the additional or differing elements compared to the algorithm for the nominal \gls{ocp}.}
	\end{algorithmic}
\end{algorithm}
The approximations introduced in Algorithm~\ref{alg:robust-collision-free-trajectory} for solving the robust \gls{ocp}~\eqref{eq:robust-ocp-def-wReformulation} arise from the following aspects:
\begin{enumerate}[label=(\alph*)]
	\item The zero-order update of the uncertainty trajectory approximates certain gradients to be zero (see Sec.~\ref{sec:zoRO-preliminary}). While this impacts optimality, it does not affect feasibility and can be remedied by a gradient correction~\cite{Feng2020Adjoint};
	\item The obstacle subset update does not take state uncertainties into account,
	i.e., it identifies obstacles closest to the nominal robot rather than those nearest to the space potentially occupied by the uncertain robot.
	This choice compromises the constraint satisfaction guarantee.
\end{enumerate}

	\section{Real-Time-Feasible Optimal and Model Predictive Control of a Mobile Robot in 2D}
\label{sec:mobile-robot-ocp-mpc}

In this section, we first define the objective function of the \gls{ocp}.
We then detail the method for identifying the closest obstacles and updating the obstacle subsets.
Subsequently, the model of the mobile robot in the \gls{ocp}, which includes an identified model of the robot dynamics, is described.
Finally, we present several software implementation details, including techniques that accelerate computation when solving the \glspl{ocp} in an \gls{mpc} scheme.

\subsection{Reference Trajectory Tracking}
\label{sec:ref-traj-tracking}
In this robot navigation task,
    the \gls{ocp} is formulated as a reference trajectory tracking problem.
Note that the reference trajectories are only used in the objective function.
The proposed collision-avoidance method is independent of the reference trajectories.
The reference trajectory is generated by first fitting a spline to a series of waypoints and then computing a kinematically time-optimal trajectory that accounts for the path geometry and robot actuator constraints~\cite{Bobrow1985}.
Note that the reference trajectories may lead to collisions,
    as the spline fitting step ignores obstacle information and path-finding algorithms that generate the sequence of waypoints often assume a simplified circular shape that under-approximates the actual robot shape for improved tractability and reduced conservativeness.

The controller tracks the reference trajectory while taking into account the state-input and collision-avoidance constraints.
The driven path of the robot and the velocity by which the robot follows the path are modified when necessary.
To track the reference trajectories, the stage and terminal costs of the OCP are defined as weighted squared tracking errors.
The deviations in the robot kinematic state and the velocities, as well as the control efforts, are penalized.

\subsection{Closest Obstacle Identification and Obstacle Subset Update}
To efficiently identify the closest obstacles and update the obstacle subset,
    the Euclidean \textit{signed} distance transform is employed.
This transform can be efficiently computed using standard algorithms such as the dead reckoning method~\cite{Grevera2004}.
In addition to the signed distance,
    it also provides a mapping from each grid coordinate to the location of the corresponding closest obstacle.
The key idea of the closest-obstacle identification is to find the set of $\gammaShape$ that has the minimum value of the signed distance and then obtain the corresponding closest obstacles using the closest-obstacle map.

For the definition of the Euclidean {signed} distance,
    let us consider the actual shape of an obstacle, denoted by $\mathcal{O}_{\text{AS}}$, alongside the point obstacle set $\mathcal{O}$.
The definition is given by
\begin{equation}
	\signedDist(p; \mathcal{O}, \mathcal{O}_{\text{AS}}) :=
	\left\lbrace
	\begin{matrix*}[l]
		-\!\min_{p_{\mathrm{o}} \in \mathcal{O}}\left\| p-p_{\mathrm{o}} \right\|_2, & \text{if}\; p \in \mathcal{O}_{\text{AS}},\\
		\min_{p_{\mathrm{o}} \in \mathcal{O} }\left\| p-p_{\mathrm{o}} \right\|_2,  & \text{otherwise}.
	\end{matrix*}
	\right.
\end{equation}
The closest obstacle to the point $p\in \bbR^2$ is defined by
\begin{equation}
	\closestObs(p; \mathcal{O} ) := \arg\min_{p_{\mathrm{o}} \in \mathcal{O} } \left\| p-p_{\mathrm{o} } \right\|_2.
\end{equation}

The algorithm for identifying the closest obstacle and updating the obstacle subset is outlined in Algorithm~\ref{alg:identify-min-dist-obs}.
For simplicity of notation, the time index $k$ for the state and the obstacle set is omitted in the remainder of this subsection.
Let
\begin{equation}
	\mapRToW(x, \gammaShape) := p_{\mathrm{c}} \!\left({x}\right) + R\left( {x} \right) \gammaShape.
\end{equation}
The first step of the algorithm is to perform a grid search along the boundary of the set $\gammaSetShape$:
\begin{equation}
	\underline{\text{sd}}_{\mathrm{GS}}(x) := \min_{\gammaShape \in \discSet \left( \bdSet\gammaSetShape \right)  }
	\signedDist \left( \mapRToW(x, \gammaShape) \right),
	\label{eq:min-sd-grid-search}
\end{equation}
where $\discSet $ and $\bdSet$ denote the boundary of a set and a discretization of a set, respectively.
The values of $\gammaShape$
whose corresponding distances are no greater than $\underline{\text{sd}}_{\mathrm{GS}}(x)$ plus a small positive value $\epsilon_{\text{GS}} > 0$ are collected in a set:
\begin{equation}
	\begin{split}
		{\Gamma}_{\mathrm{shp}, \mathrm{GS}}^*(x) \! := \!
		\bigl\lbrace  & \gammaShape  \mid \gammaShape \!\in\! \discSet  \left( \bdSet  \gammaSetShape \right) , \\
		& \; \signedDist \left(\mapRToW(x, \gamma_{\mathrm{shp}}) \right) \leq \underline{\text{sd}}_{\mathrm{GS}}(x) +\epsilon_{\text{GS}} \bigr\rbrace.
	\end{split}
	\label{eq:worst-gamma-grid-search}
\end{equation}

\begin{algorithm}[t]
	\caption{Method for updating the obstacle subset}
	\label{alg:identify-min-dist-obs}
	\begin{algorithmic}
		\Require Predefined threshold $\epsilon_{\text{inside}} \geq 0$
		\Function{UpdateObsSubset}{${x}, \mathcal{O}^{(j\shortminus 1)}_{\mathrm{s}}$}
		\State $\underline{\text{sd}}_{\mathrm{GS}}(x), {\Gamma}_{\mathrm{shp}, \mathrm{GS}}^*(x) \gets$ grid search \Comment{\eqref{eq:min-sd-grid-search}, \eqref{eq:worst-gamma-grid-search}}
		\If{$\underline{\text{sd}}_{\mathrm{GS}}(x) \geq \epsilon_{\text{inside}}$}
		% \State OBSINPOLYGON $\gets 0$
		\State $\mathcal{O}^{(j)}_{\mathrm{s}} \gets$ add close-by obstacles
		\Comment{\eqref{eq:min-dist-obs-fine-search}, \eqref{eq:update-obstacle-subset}}
		\State \Return $\mathcal{O}^{(j)}_{\mathrm{s}}$
		\Else
		\State impose linearized signed distance constraints
		\State \Return $ \mathcal{O}^{(j\shortminus 1)}_{\mathrm{s}}$
		\EndIf
		\EndFunction
		\State \LeftComment{see Table~\ref{table:mobile-robot-params-split} for the value of the parameters used.}
	\end{algorithmic}
\end{algorithm}

Under the assumption that no obstacle is entirely contained in the polygon,
   the value of $\underline{\text{sd}}_{\mathrm{GS}}(x)$ being positive indicates that all point obstacles are outside of the polygon,
   and the minimum value of $\signedDist( \mapRToW(x, \gammaShape))$ for $\gammaShape \in \gammaSetShape$ is achieved on the polygon edges.
Performing the grid search over the boundary of the polygon eliminates the need of the gradient computation of the distance function and also avoids the risk of getting stuck at local minima.
With the set ${\Gamma}_{\mathrm{shp}, \mathrm{GS}}^*$ obtained, we query the closest obstacles:
\begin{equation}
	{\Gamma}_{\mathrm{shp}}^{*}(x) \! :=
	\argmin_{\gammaShape \in {\Gamma}_{\mathrm{shp}, \mathrm{GS}}^*}
	\left\Vert \mapRToW\left( x, \gamma_{\mathrm{shp}}\right)
	\!-\!  \closestObs \! \left(  \mapRToW\left( x, \gamma_{\mathrm{shp}}\right) \right) \right\Vert_2\!.
	\label{eq:min-dist-obs-fine-search}
\end{equation}
The obstacle subset is then updated as follows:
\begin{equation}
	\begin{split}
		\mathcal{O}^{(j)}_{\mathrm{s}} \! = \!
		\mathcal{O}^{(j\shortminus 1)}_{\mathrm{s}} \cup \Bigl\{ \closestObs \left(  \mapRToW \!\left({x}, \gammaShape\right)\right) \mid \gammaShape \! \in \! {\Gamma}_{\mathrm{shp}}^{*}(x), \\
		\left\|\mapRToW\left( x, \gamma_{\mathrm{shp}}\right)
		\!-\! \closestObs \left(  \mapRToW\left( x, \gamma_{\mathrm{shp}}\right) \right) \right\|_2 \leq \epsilon_{\text{cl}}  \Bigr\},
	\end{split}
	\label{eq:update-obstacle-subset}
\end{equation}
where $\epsilon_{\text{cl}} > r_{\mathrm{shp}}$ is a predefined threshold for a point obstacle to be considered close to the polygon.

When the value of $\underline{\text{sd}}_{\mathrm{GS}}(x)$ is negative,
the polygon and the obstacles intersect.
In this case, the imposed constraints are that, for each $\gammaShape \in {\Gamma}_{\mathrm{shp}, \mathrm{GS}}^*$, the function $\signedDist(p; \mathcal{O}, \mathcal{O}_{\text{AS}})$ linearized at $p=\mapRToW\left( x, \gamma_{\mathrm{shp}}\right)$ should be no smaller than a small positive value, e.g., five times the point-obstacle resolution in our implementation.
Note that this does not affect local convergence properties since feasibility of the OCP implies that the Euclidean signed distance evaluates positively along the entire robot trajectory.

\begin{remark}
	\label{rem:inner-grid-search-complete-penetration}
   {For environments containing small or thin obstacles that might be completely contained within the polygon shape,
       the search space of the grid search can be extended to include grid points inside the polygon.
   While boundary grids must be fine enough to detect the grid with the minimum signed distance and thereby identify the closest obstacle,
      interior grids can be coarser.
   For interior grid intervals smaller than the obstacle size,
      detection of at least one point of the penetrating obstacle is guaranteed, enabling separation between the robot polygon and the obstacle.}
\end{remark}

\subsection{System Dynamics of a Mobile Robot}
\label{sec:system-dynamics-model}
For control of mobile robots,
    modeling system dynamics within the~\gls{ocp}, including for example dynamics with overshoot,
    facilitates fast and safe trajectory execution.
Consider a linear system model for the robot dynamics
    with its state denoted by $\nu \in \bbR^{n_{\nu}}$.
The forward and angular command velocities, denoted by $v_{\mathrm{cmd}}$ and $\omega_{\mathrm{cmd}}$, serve as the system inputs.
The nominal system dynamics is described as
\begin{equation}
	\dot{{\nu}} = A_{\nu} {\nu} + B_{\nu} \begin{bmatrix}
		v_{\mathrm{cmd}} \\ \omega_{\mathrm{cmd}}
	\end{bmatrix},
\label{eq:nu-system-dynamics}
\end{equation}
    where $A_{\nu} \in \bbR^{n_\nu \times n_\nu}$ and $B_{\nu} \in \bbR^{n_\nu \times 2}$.
The system measurements consist of the forward and angular velocities disturbed by measurement noise:
\begin{equation*}
    \begin{bmatrix}
			v_{\mathrm{real}} \\ \omega_{\mathrm{real}}
	\end{bmatrix}
	\! = \! C_{\nu} {\nu} + D_{\nu} \begin{bmatrix}
			v_{\mathrm{cmd}} \\ \omega_{\mathrm{cmd}}
	\end{bmatrix},\
	\begin{bmatrix}
		v_{\mathrm{meas}} \\ \omega_{\mathrm{meas}}
	\end{bmatrix} \! = \! \begin{bmatrix}
			v_{\mathrm{real}} \\ \omega_{\mathrm{real}}
	\end{bmatrix} + w_{\mathrm{meas}}.
\end{equation*}
The system matrices $A_{\nu}, B_{\nu}, C_{\nu}, D_{\nu}$ can be identified offline using, for instance, the N4SID subspace algorithm~\cite{VanOverschee1994}.

The whole system state consists of the robot kinematic state $x_{\text{kin}}$, the command velocity, and the robot dynamic state~$\nu$:
\begin{equation}
	x = \begin{bmatrix}
		x_{\text{kin}}^{\top} & v_{\mathrm{cmd}}& \omega_{\mathrm{cmd}}& {\nu}^{\top}
	\end{bmatrix}^{\top}.
\end{equation}
The system dynamics is given by
\begin{equation}
	\frac{\mathrm{d}x(t)}{\mathrm{d}t} = 
	\begin{bmatrix}
		f_{\text{kin}}\left( x_{\text{kin}}, C_{\nu}{\nu}  + D_{\nu} \begin{bmatrix}
			v_{\mathrm{cmd}} \\ \omega_{\mathrm{cmd}}
		\end{bmatrix}\right) \\ a \\ \alpha \\
		A_{\nu} \nu + B_{\nu} \begin{bmatrix}
			v_{\mathrm{cmd}} \\ \omega_{\mathrm{cmd}}
		\end{bmatrix}
	\end{bmatrix},
\end{equation}
where $a$ and $\alpha$ are the forward acceleration and angular acceleration, respectively,
and are considered as the system input, namely,
$ u = \begin{bmatrix}
		a & \alpha
	\end{bmatrix} ^{\top}$.

\subsection{Details of Software Implementation}
\label{sec:details-software-implementation}

In this subsection, we present several software implementation details.
Our implementation solves the \gls{nlp} subproblems~\eqref{eq:nominal-ocp-subproblem} and~\eqref{eq:robust-ocp-subproblem}  with an \gls{sqp}-type solver in acados~\cite{Verschueren2021} and uses HPIPM~\cite{Frison2020} as the QP solver.
For real-world experiments, the \gls{mpc} is implemented as a Nav2 controller plugin~\cite{Macenski2020} written in C++.

\paragraph{Warm starting of obstacle subset}
When computing the solution of the first \gls{ocp},
the obstacle set is initially empty.
For subsequent steps, we shift forward the part of the obstacle subset~$\mathcal{O}_{\mathrm{s}, k}$ whose constraints are active.

\paragraph{Termination at early iterates}
When solving \glspl{ocp} recursively in an \gls{mpc} scheme, the \gls{sqp} optimization process is often terminated early in the \gls{qp} iterations to reduce computation time~\cite{Diehl2005}.
Similarly, for real-time capability, the proposed controller limits the number of iterations, i.e., MAXITER in Algorithm~\ref{alg:nominal-collision-free-trajectory} and Algorithm~\ref{alg:robust-collision-free-trajectory}, to a relatively small number in real-world experiments,
see Table~\ref{table:mobile-robot-params-split} for the values used in practice.

\paragraph{Constraints with slack variables and exact penalty}
In practice, large disturbances can occasionally lead to infeasibility of state-dependent constraints of \glspl{ocp}.
Additionally, the linearization of collision-avoidance constraints (see~\eqref{eq:nominal-ocp-subproblem-linearized-constr} and~\eqref{eq:robust-ocp-subproblem-linearized-constr}) may lead to an infeasible subproblem when the \gls{ocp} is only barely feasible.
To ensure the feasibility of the subproblems, the collision-avoidance constraints are implemented as constraints with slack variables.
For example, the constraints in the nominal \gls{ocp} are given by
\begin{equation}
	s_i \geq  \check{h}_{\text{coll}}\left({x}_k;{x}_k^{(j)}, p_{\mathrm{o}}\right)
	\ \text{and} \ s_i \geq 0.
\end{equation}
Weighted $l_1$ and $l_2$ penalties are added to the objective function.
Note that such constraints are intrinsically different from smooth penalty functions.
When the penalties of the slack variables are sufficiently large and the problem is feasible,
    the solution of the problem with slackened constraints corresponds to that with hard constraints~\cite{Han1979}.
In the infeasible case, it will return a solution that minimizes constraint violation, which can be used as a soft recovery when the constraint violation is acceptable.

\begin{figure*}[th]
	\captionsetup{skip=-2pt}
	\centering
	\includegraphics[width=0.98\linewidth]{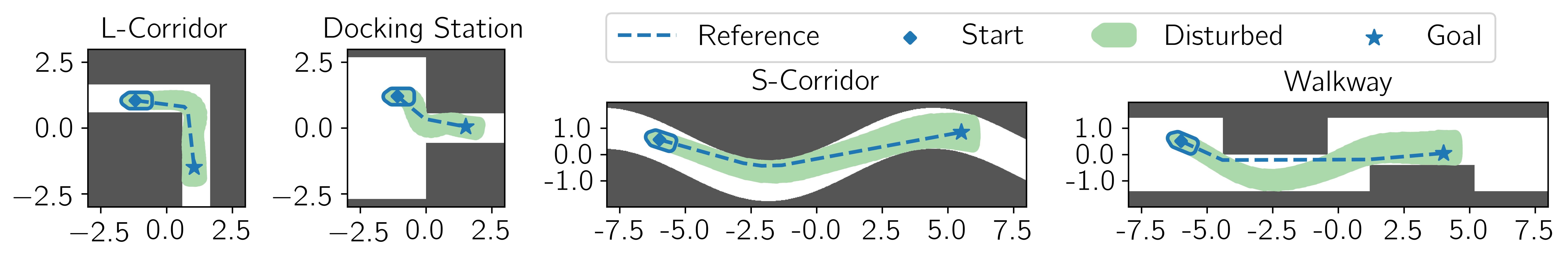}
	\caption{Four world maps used for numerical evaluation.
		The dashed lines are the reference paths.
		The green tube represents the union of the occupied space across all discrete time steps, as given by the uncertainty model in~\eqref{eq:robust-coll-constr-separateRotPos} and evaluated using a converged OCP solution.
		The resolution of the point obstacles is 0.02\,m.
		See Table~\ref{table:mobile-robot-config-split} for the number of points in each environment.}
	\label{fig:test_world_maps}
\end{figure*}

\paragraph{Initial State Uncertainty and Additive Disturbance}
The initial state uncertainty encompasses both the uncertainty in the kinematic states $x_{\text{kin}}$ and the robot dynamic state $\nu$.
The uncertainty of the kinematic state $x_{\text{kin}}$ is obtained from the localization module.
The robot dynamic state is estimated using a Kalman filter, from which we extract the state covariance matrix.
For the additive disturbance, we consider a model-plant mismatch in the system dynamics~\eqref{eq:nu-system-dynamics} as the primary source.
The corresponding covariance values are derived from the error covariance matrix of the system identification.
To achieve a desired level of robustness in constraint robustification, these covariance matrices can be scaled by a predefined factor.

	\section{Numerical Evaluation and Physical Experiments with Mobile Robot}
\label{sec:results}

This section presents the results of the numerical evaluation and real-world experiments of the mobile robot control problem presented in the previous section.
For the experiments, we consider a differential-drive robot:
\begin{equation} \label{eq:robot_system_dynamics}
	\begin{split}
	x = &\Big[
			\underbrace{p_x \quad p_y \quad \theta}_{x_{\text{kin}}} \quad v_{\mathrm{cmd}}\quad \omega_{\mathrm{cmd}}\quad {\nu}^{\top}
		\Big]^{\top},\\
    \frac{\mathrm{d}x_{\text{kin}}(t)}{\mathrm{d}t}
    = &\begin{bmatrix}
    	\cos\theta & 0 \\ \sin\theta & 0 \\ 0 & 1
    \end{bmatrix}\left( C_{\nu}{\nu}  + D_{\nu} \begin{bmatrix}
    v_{\mathrm{cmd}} \\ \omega_{\mathrm{cmd}}
\end{bmatrix}\right) .
\end{split}
\end{equation}
The continuous-time system dynamics~\eqref{eq:robot_system_dynamics} are discretized using an explicit fourth-order Runge-Kutta integrator.
The configuration of the robot shape is listed in Table~\ref{table:mobile-robot-config-split}.

\begin{figure*}[t]
	\begin{minipage}[t]{0.325\linewidth}
		\captionsetup{skip=-4pt}
		\centering
		\includegraphics[width=\linewidth]{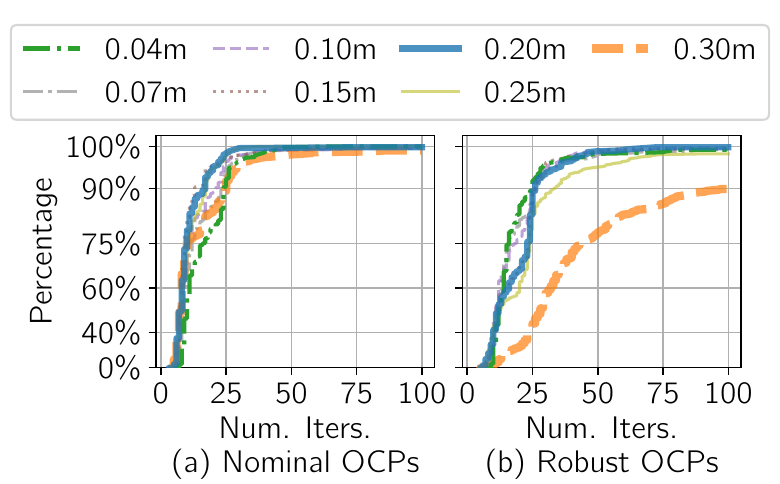}
		\caption{Optimal control of a mobile robot: The percentages of the OCPs that converge within a certain number of iterations.
			The radius of the padding circle $r_{\mathrm{shp}}$ is varied.
			The reference trajectories are carefully chosen such that the collision-avoidance constraints are active for different $r_{\mathrm{shp}}$.}
		\label{fig:ocp_number_iterations_varying_radius}
	\end{minipage}
    \hfill
    \begin{minipage}[t]{0.325\linewidth}
    	\captionsetup{skip=-4pt}
    	\centering
    	\includegraphics[width=\linewidth]{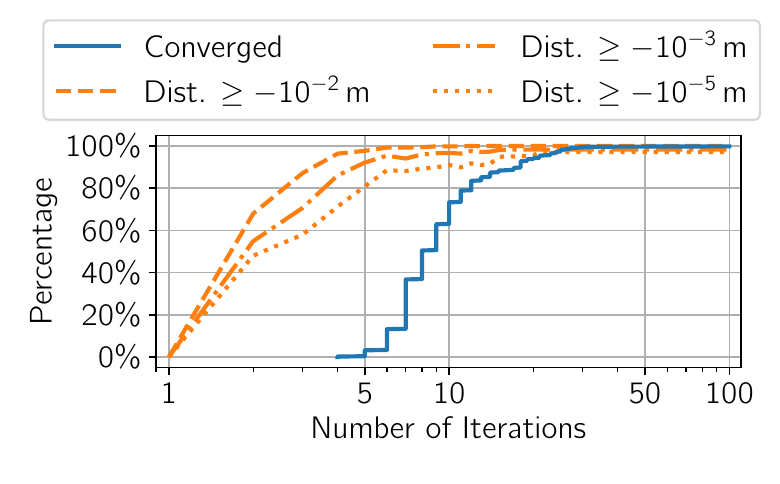}
    	\caption{The orange lines show the percentages of the nominal \glspl{ocp}
    		of which for all the states of the current iterate the signed distances from the robot to all obstacles are no small than certain thresholds.
    		The blue line plots the percentage of the converged \glspl{ocp} within a certain number of iterations.
    	}
    	\label{fig:nominal-ocp-num-iterations}
    \end{minipage}
    \hfill
    \begin{minipage}[t]{0.325\linewidth}
    	\captionsetup{skip=-4pt}
    	\centering
    	\includegraphics[width=\linewidth]{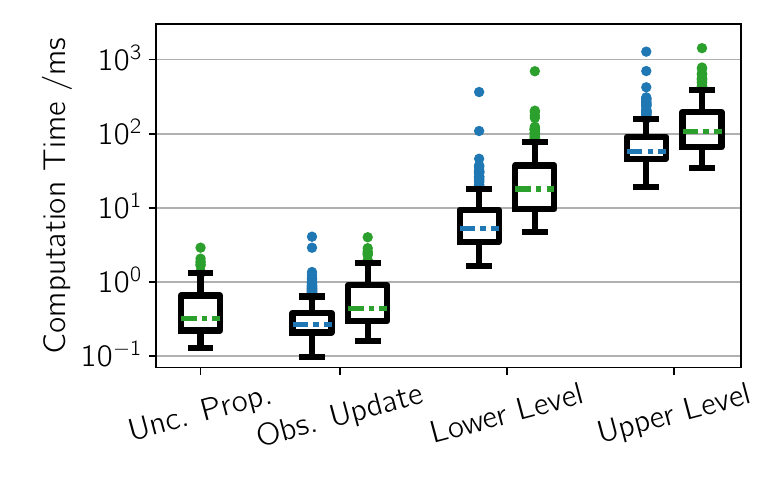}
    	\caption{Computation time of solving the OCPs for the mobile robot.
    		The blue colors correspond to the nominal OCPs and the green colors represent the robust OCPs.
    		The dashdotted line is the median.
    		`Unc. Prop.' refers to the uncertainty propagation.
    	}
    	\label{fig:dilated-polygon-ocp-computation-time}
    \end{minipage}
\end{figure*}

\subsection{Numerical Evaluation of Solving \glspl{ocp}}
\label{sec:results-numerical-evaluation-ocp}
The numerical evaluation is conducted on a laptop with an Intel i7-11850H processor and 32GB of RAM.
We use four maps for evaluation: `L-Corridor', `Docking Station', `S-Corridor', and `Walkway' (see Fig.~\ref{fig:test_world_maps}).
For each map, we have 135 test cases with different starting poses and velocities with the reference paths generated using the Theta* algorithm~\cite{Daniel2010}.
Examples of the starting poses and reference paths are plotted in Fig.~\ref{fig:test_world_maps}.
The padding radius is 0.2\,m unless otherwise specified.
The initialization of each \gls{ocp} is determined by the reference trajectory.

\paragraph{Iterations required for convergence with varying padding radius}
\label{sec:results-varying-padding-radius}
We evaluate the number of iterations required for convergence (see Fig.~\ref{fig:ocp_number_iterations_varying_radius}).
The convergence criterion is set as the $l_{\infty}$ norm of the change in the state-input trajectory getting below $10^{-6}$.
Seven different values of the padding radii ranging from 0.04\,m to 0.30\,m are tested with the polygon shape and the free space of the maps remaining constant.
The resolution of the point obstacles is 0.02\,m.
The reference trajectories are carefully chosen to be sufficiently close to the obstacles such that the collision-avoidance constraints are active for different values of the padding radius.

Overall, the robust \glspl{ocp} require more iterations to converge than the nominal \glspl{ocp}.
While the nominal \glspl{ocp} with a small padding radius of 0.04\,m shows an increase in the number of iterations,
    this is not evident in the robust \glspl{ocp}.
This difference can be attributed to the backoff in the robust \glspl{ocp}, which compensates for the rotational uncertainties and implicitly enlarges the padding.
A noticeable degradation in convergence is observed for a padding radius of 0.30\,m,
    especially for the robust \glspl{ocp}.
This decline in convergence performance occurs because the available free space becomes too narrow for the robot of that size to admit a robustly collision-free trajectory.

\paragraph{Progress of constraint satisfaction over the iterations}
For the nominal \glspl{ocp}, we evaluate the violation of collision-avoidance constraints with respect to all obstacles throughout the iterations.
Let $\zeta$ represent the right of the inequality constraint~\eqref{eq:nominal-ocp-collision-constraints} minus the padding radius $r_{\mathrm{shp}}$:
\begin{equation}
	\zeta({x}, \gammaShape, p_{\mathrm{o}}) := - r_{\mathrm{shp}} +
		\left\| p_{\mathrm{c}}({x}) + R\left({x}\right) \gammaShape - p_{\mathrm{o}} \right\|_2 \!.
\end{equation}
Given a robot state $x$, the minimum signed distance between the robot and all point obstacles is given by:
\begin{equation}
	\min_{\gammaShape \in \gammaSetShape, p_{\mathrm{o}} \in \staticObsSet}
	\zeta({x}, \gammaShape, p_{\mathrm{o}}).
	\label{eq:eval-nominal-constr-violation}
\end{equation}
At each iteration $j$, we compute the minimum value of~\eqref{eq:eval-nominal-constr-violation} for all states in the state trajectory of the current iterate.
The percentages of the \glspl{ocp} whose minimum values are no smaller than certain thresholds are plotted in Fig.~\ref{fig:nominal-ocp-num-iterations}.
After five iterations, approximately 92\% of the test cases achieve a constraint violation of at most $10^{\shortminus 3}$\,m.

\begin{remark}
In early terminations in \gls{mpc} schemes,
    a smaller degree of constraint violation can typically be expected compared to the results reported here, due to improved initialization of the~\glspl{ocp} using the (approximate) solutions at previous time instants.
\end{remark}

\begin{figure}[t]
	\captionsetup{skip=-3pt}
	\centering
	\includegraphics[width=0.9\linewidth]{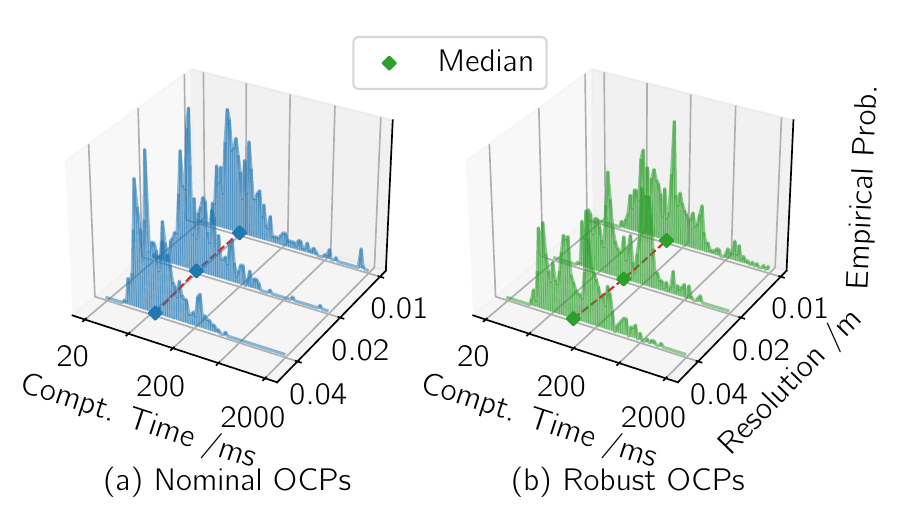}
	\caption{
		Empirical distribution of the total computation time for solving one \gls{ocp}
		    with different point-obstacle resolutions: 0.01\,m, 0.02\,m, and 0.04\,m.
	}
	\label{fig:ocp_computation_time_varying_resolution}
\end{figure}

\begin{figure}[t]
	\captionsetup{skip=-3pt}
	\centering
	\includegraphics[trim={0 0cm 0 0.45cm}, clip, width=0.9\linewidth]{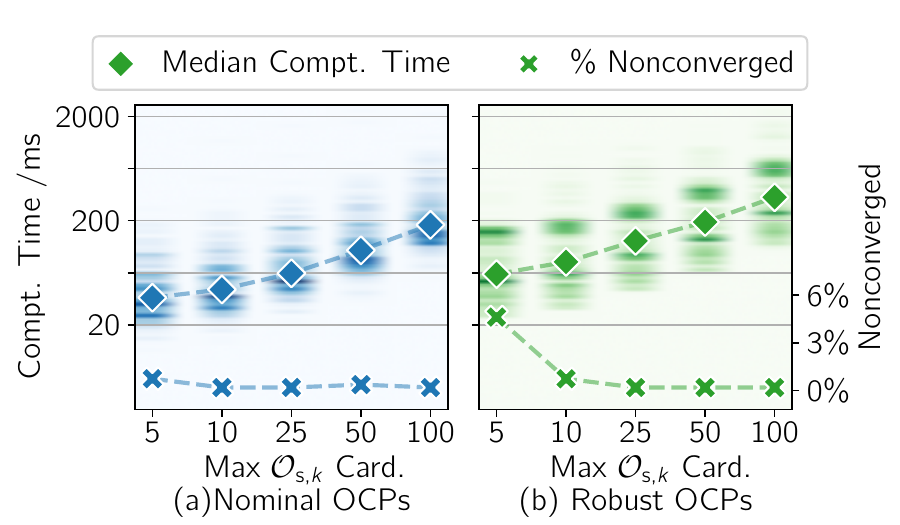}
	\caption{
		Empirical distribution of the computation time for solving one \gls{ocp} (heatmaps: darker shades indicate higher empirical probability)
		    and percentage of nonconverged cases (crosses)
		    for different maximum cardinalities of $\mathcal{O}_{\mathrm{s}, k}$.
	}
	\label{fig:ocp_computation_time_varying_num_constr}
\end{figure}

\vspace{-0.3\baselineskip}
\paragraph{Computation time}
The computation times taken to solve the \glspl{ocp} are plotted in Fig.~\ref{fig:dilated-polygon-ocp-computation-time}.
The upper-level optimization refers to solving the locally-reduced \gls{nlp} subproblems.
Across all four environments,
the lower level takes approximately 5\,ms at the median and 13\,ms at the 90th percentile for the nominal \glspl{ocp},
and 18\,ms at the median and 49\,ms at the 90th percentile for the robust \glspl{ocp}.
The upper level takes approximately 58\,ms at the median and up to 156\,ms at the 90th percentile for the nominal \glspl{ocp},
    and around 107\,ms at the median for the robust \glspl{ocp},
    with the 90th percentile computation time being around 253\,ms.
The computation time for updating the obstacle subsets and for disturbance propagation is negligible,
    each below  0.5\,ms at the median.

Solving the robust \glspl{ocp} requires more computation time compared to the nominal \glspl{ocp}
    as the upper level requires more iterations to converge,
    as shown in Fig.~\ref{fig:ocp_number_iterations_varying_radius}.
At the lower level,
    the increase is also due to the fact that solving a \gls{qcqp} problem is more time-consuming than solving a \gls{qp} problem.

{To evaluate the sensitivity of the computation time with respect to the point-obstacle resolution,
    we vary the resolution among 0.01\,m, 0.02\,m, and 0.04\,m, see Fig.~\ref{fig:ocp_computation_time_varying_resolution}.
For the nominal \glspl{ocp},
    the median computation time remains stable across the tested resolutions.
An increase in the occurrence of computation times exceeding 500\,ms can be observed at the tested finest resolution, i.e., 0.01\,m.
Regarding the robust \glspl{ocp},
    the median computation time increases from approximately 114\,ms to 129\,ms when the resolution changes from 0.04\,m to 0.02\,m.
The empirical probability of long computation times also becomes higher at the 0.01\,m resolution.}

We also examine how the maximum cardinality of the subset $\mathcal{O}_{\mathrm{s}, k}$ influences the solver performance.
The NLP subproblems contain a fixed number of collision-avoidance constraints equal to this limit, irrespective of the actual cardinality of the subsets.
As shown in Fig.~\ref{fig:ocp_computation_time_varying_num_constr}, raising this limit increases the solution time.
For the nominal \glspl{ocp}, increasing the maximum cardinality (for every time step $k$) from 25 to 50 raises the median computation time by roughly 66\%, and a further increase to 100 adds an additional 76\% overhead.
For the robust \glspl{ocp}, the corresponding increases are about 52\% and 72\%, respectively.
Lowering the maximum cardinality leads to a higher percentage of nonconverged cases.
The effect is especially pronounced for the robust \glspl{ocp} when the limit is reduced to as few as five point obstacles.

\subsection{Real-World MPC Experiments}
\label{sec:real-world-mpc}

\begin{figure*}[t]
	\begin{minipage}[b][][b]{0.325\linewidth}
    	\captionsetup{skip=-4pt}
		\captionsetup[subfloat]{captionskip=2pt}
		\centering
		\hfil
	\subfloat[L-corridor.
	]{\includegraphics[trim={0 0.28cm 0.0cm 0.0cm}, clip, height=2.5cm]{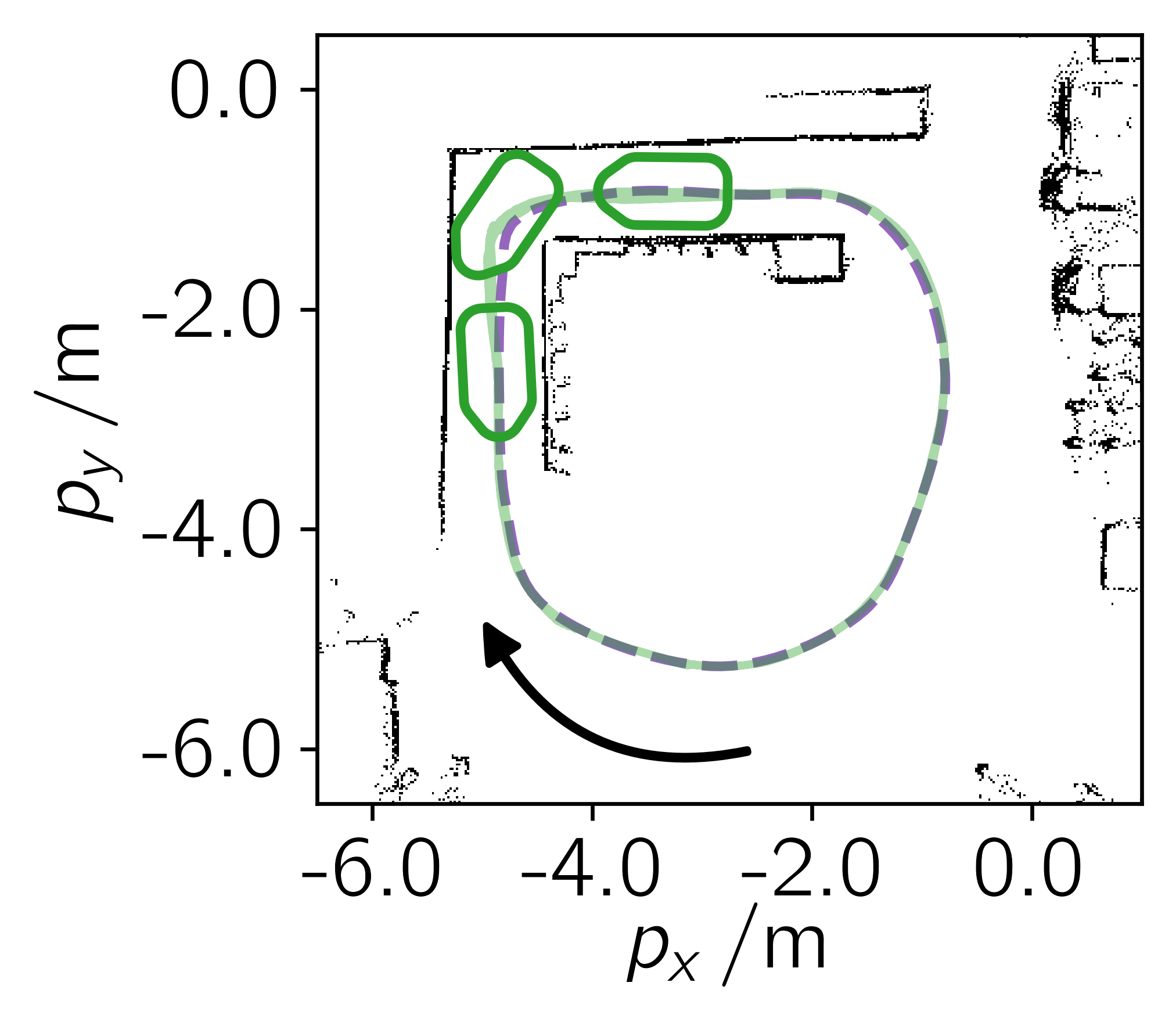}}
	\hfil
	\subfloat[S-corridor.
	]{\includegraphics[trim={1.0cm 0.25cm 0.0cm 0.0cm}, clip, height=2.5cm]{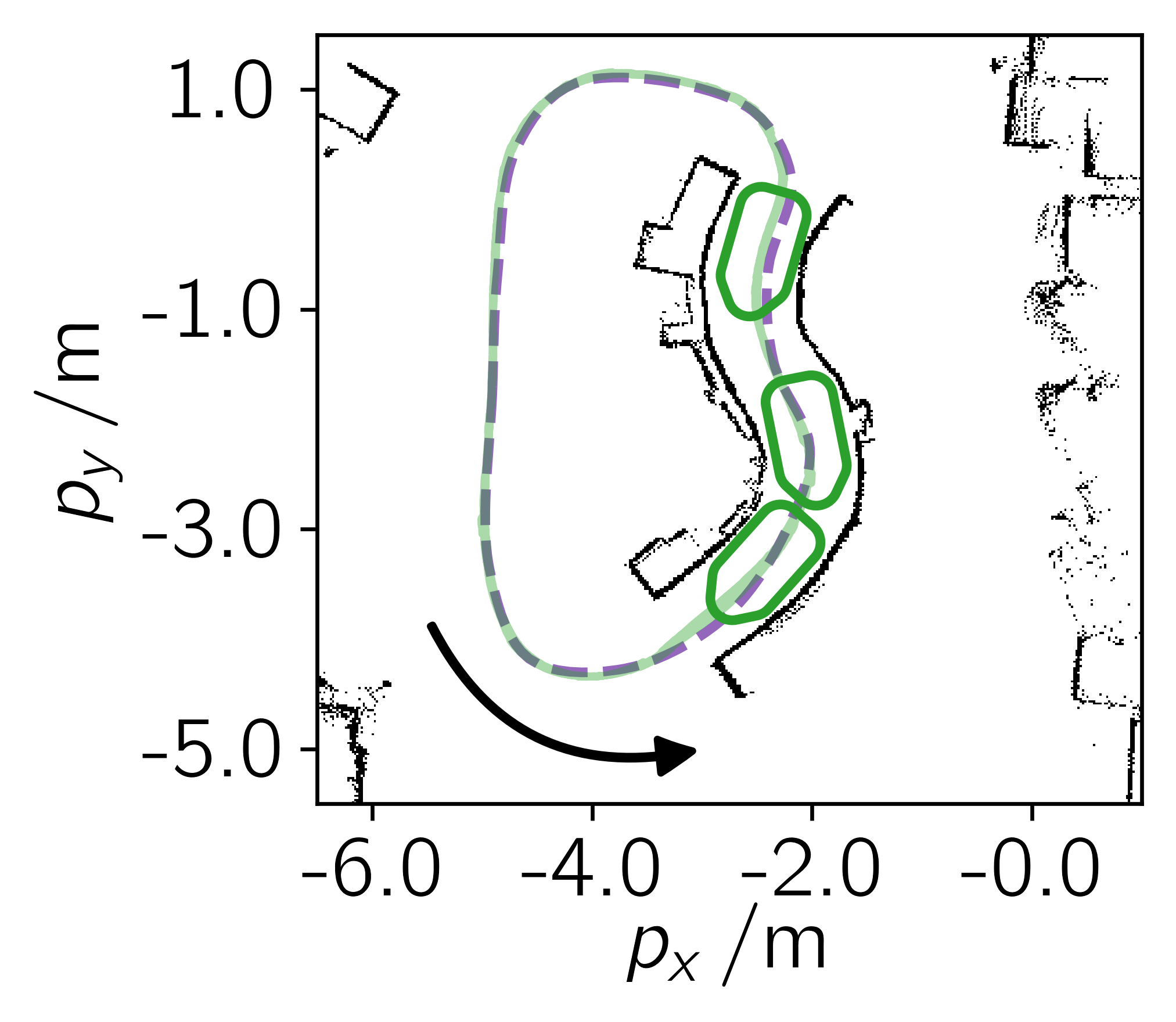}}
    \hspace{4ex}
	\caption{Real-world \gls{mpc} experiments.
    The robot is controlled by the robust controller.
    The dash lines are the reference paths and the semi-transparent lines are the robot trajectories over a four-minute experiment.
    Black arrows indicate the direction of motion.
	}
	\label{fig:real-robot-traj}
	\end{minipage}
    \hfill
    \begin{minipage}[b][][b]{0.325\linewidth}
    	\captionsetup{skip=-4pt}
    	\centering
	\includegraphics[width=0.98\linewidth]{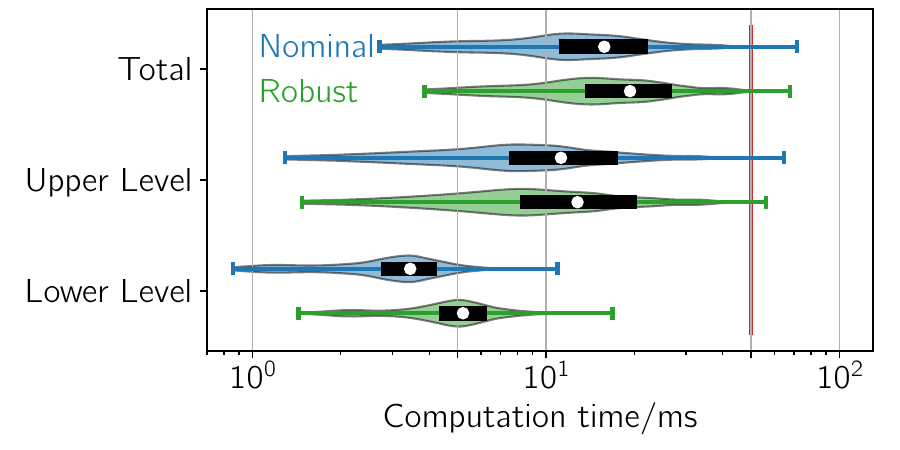}
	\caption{Real-world \gls{mpc} experiments: Computation time.
		The results include computation time of two environments and three velocity configurations.
		The white circle is the median. The black bar goes from the lower to the upper quartile.
		The sampling rate of the \gls{mpc} controllers is 50\,ms.}
	\label{fig:real_robot_nominal_and_robust_timings}
    \end{minipage}
	\hfill
    \begin{minipage}[b][][b]{0.325\linewidth}
    	\captionsetup{skip=-4pt}
    	\centering
	\includegraphics[width=0.98\linewidth]{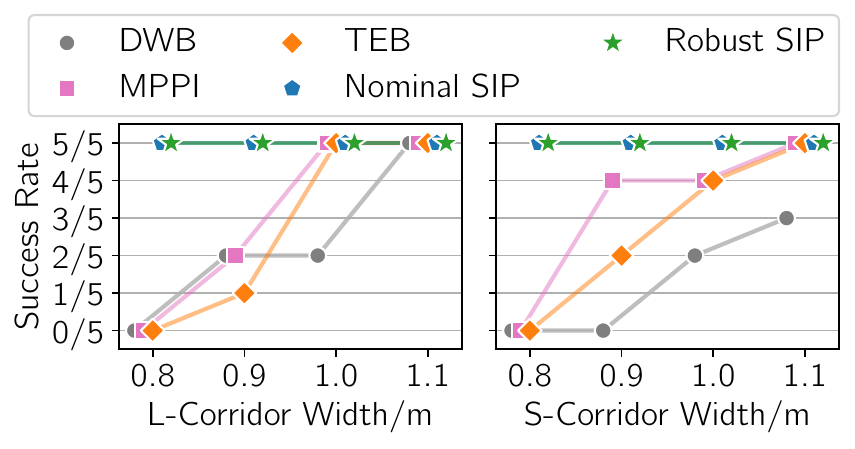}
	\caption{Real-world \gls{mpc} experiments: Success rates (out of five trials) for traversing corridors of varying widths.
	The plot compares our \gls{sip}-based nominal and robust \gls{mpc} controller with three state-of-the-art baselines, including the DWB, MPPI, and TEB controllers.}
	\label{fig:benchmark_success_passes}
    \end{minipage}
\end{figure*}

\begin{figure*}[t]
	\captionsetup{skip=-2pt}
	\includegraphics[width=0.98\linewidth]{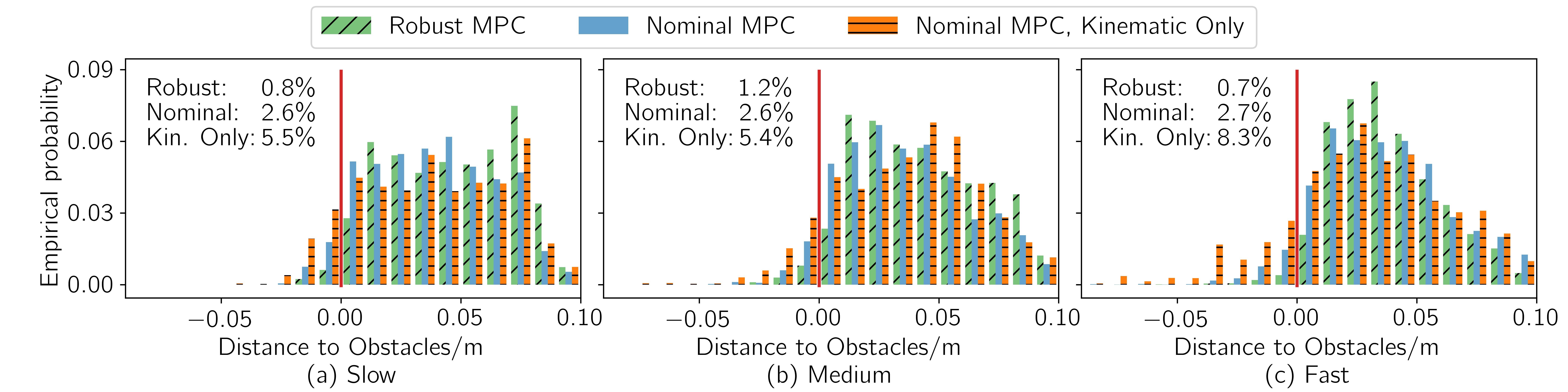}
	\caption{Real-world \gls{mpc} experiments: Distances to obstacles.
		The values shown in the upper left correspond to the percentages of negative distances.
		`Kinematic Only' refers to that system model does not include the system dynamics~\eqref{eq:nu-system-dynamics}.
		The empirical distribution of distances greater than 0.1\,m is truncated.
		See Table~\ref{table:varying-vel-acc-limits} for the detailed values of the velocity and acceleration bounds in the three settings: slow, medium, and fast.
	}
	\label{fig:min_dist_comparison}
\end{figure*}

% \begin{figure}[t]
% 	\captionsetup{skip=-6pt}
% 	\centering
	
% \end{figure}

We deploy the nominal and robust controllers within an \gls{mpc} framework in the real-world experiments.
The experiments are carried out on a Neobotix MP-500 differential-drive robot.
Its onboard computer is equipped with an Intel i7-7820EQ processor and 16GB of RAM.
We test the SIP-based MPC controllers in real-world L-corridor and S-corridor environments (see Fig.~\ref{fig:real-robot-traj}) and in three different settings: slow, medium, and fast settings (see Table~\ref{table:varying-vel-acc-limits} for the velocity and acceleration bounds).

\paragraph{Reference tracking and collision avoidance}
The robot tracks the reference paths and navigates through the narrow corridors\footnote{A video demonstrating the real-world experiment results is available as multimedia material.}.
At each solver call, the minimum signed distance to all point obstacles, as formulated in~\eqref{eq:eval-nominal-constr-violation}, is evaluated.
Fig.~\ref{fig:min_dist_comparison} shows the empirical distributions of the minimum signed distances for the nominal and robust controllers, each based on over four minutes of data in the two environments.
To the left of the red vertical line,
    the robust MPC demonstrates fewer and less severe constraint violations compared to the nominal MPC.
In the first bin to the right of the line, the robust MPC shows a notably lower occurrence frequency compared to the nominal MPC, which results from the additional safety margin gained through the constraint robustification.

\paragraph{Computation time}
The maximum number of iterations is set to six for the nominal controller and five for the robust controller.
The results reported in Fig.~\ref{fig:real_robot_nominal_and_robust_timings} include the computation times for the two environments and three speed settings.
The median of the total computation time is approximately 13\,ms for the nominal and 19\,ms for the robust controller.
The majority of the computation times are below 50\,ms, which is the sampling rate of the controller.
If the computation time exceeds 50\,ms, the controller will wait until a solution is computed,
    which is the default behavior in Nav2.
Alternatively, it is possible to manually implement alternatives such as utilizing the solutions of input trajectories obtained at previous time instants combined with multi-threading,
    which is more suitable for the robust setting.

\paragraph{Comparison to state-of-the-art controller}
We use the Nav2 implementation of the DWB controller\footnote{github.com/ros-navigation/navigation2/tree/main/nav2\_dwb\_controller} and the \gls{mppi} controller\footnote{github.com/ros-navigation/navigation2/tree/main/nav2\_mppi\_controller}, and the TEB controller\footnote{github.com/rst-tu-dortmund/teb\_local\_planner} as benchmark\footnote{The benchmarking controller parameters used in these experiments are archived at [https://doi.org/10.5281/zenodo.19116921] (located under sipoc\_mobile\_robot/sipoc\_mr\_nav2\_benchmark/params)}.
The DWB controller is an enhanced version of the \gls{dwa} and is the default option in the Nav2 stack.
The \gls{teb} controller~\cite{Roesmann2017} is an optimization-based method and uses quadratic penalties for constraint violations.
To evaluate the ability of different controllers to pass through narrow corridors,
    we vary the width of the corridor and test the number of successful passes.
A total of five tests are conducted for each setting.
The number of successful passes is reported in Fig.~\ref{fig:benchmark_success_passes}.
Although the DWB, \gls{mppi}, and \gls{teb} methods succeed in most test cases when the corridor is wide, they fail to pass through narrower corridors.
In contrast, the proposed method consistently manages to pass through the corridors of different widths.

% \begin{table}[h]
% 	\setlength{\tabcolsep}{8pt}
% 	\centering
% 	\caption{}
% 	\label{table:real-robot-state-of-the-art}
% 	\vspace{-5pt}
% 	\begin{tabular}{@{}l | l l  l l @{}}
%                 \hline \hline
% 		& \multicolumn{3}{l}{Width of the Corridor} \\
% 		\hline \hline
% 		L-Corridor & 1.1\,m & 1.0\,m& 0.9\,m& 0.8\,m \\
% 		\hline
% 		DWB &  5/5\,(\ding{52}) & 2/5 & 2/5  &  0/5\,(\ding{55})  \\
% 		MPPI & 5/5\,(\ding{52}) & 5/5\,(\ding{52}) & 2/5 & 0/5\,(\ding{55}) \\
% 		TEB & 5/5\,(\ding{52}) & 5/5\,(\ding{52}) & 1/5 & 0/5\,(\ding{55})\\
% 		Nominal SIP & 5/5\,(\ding{52}) &  5/5\,(\ding{52})& 5/5\,(\ding{52}) & 5/5\,(\ding{52})\\
% 		Robust SIP& 5/5\,(\ding{52}) &  5/5\,(\ding{52})& 5/5\,(\ding{52}) & 5/5\,(\ding{52})\\
% 		\hline \hline
% 		S-Corridor & 1.1\,m & 1.0\,m& 0.9\,m& 0.8\,m\\
% 		\hline
% 		DWB & 3/5 & 2/5 & 0/5\,(\ding{55}) &  0/5\,(\ding{55})  \\
% 		MPPI & 5/5\,(\ding{52}) & 4/5 & 4/5 & 0/5\,(\ding{55}) \\
% 		TEB & 5/5\,(\ding{52})  & 4/5 & 2/5 & 0/5\,(\ding{55}) \\
% 		Nominal SIP &5/5\,(\ding{52}) & 5/5\,(\ding{52})& 5/5\,(\ding{52})& 5/5\,(\ding{52})\\
% 		Robust SIP & 5/5\,(\ding{52})& 5/5\,(\ding{52})& 5/5\,(\ding{52})& 5/5\,(\ding{52})\\
% 		\hline \hline
% 	\end{tabular}
% \end{table}

\paragraph{Impact of incorporating the system dynamics}
The impact is evaluated by comparing two nominal controllers: one with and one without modeling the system dynamics in~\eqref{eq:nu-system-dynamics}, in both environments.
The latter case corresponds to setting $C_{\nu}$ and $D_{\nu}$ in~\eqref{eq:robot_system_dynamics} to a zero matrix and an identity matrix, respectively.
At every solver call,
    the minimum signed distance between the robot at the current time and all point obstacles, as formulated in~\eqref{eq:eval-nominal-constr-violation}, is evaluated.
As shown in Fig.~\ref{fig:min_dist_comparison},
    while the controller without the dynamics modeling shows slightly worse constraint violations at low speeds, its performance worsens significantly at higher velocities.
Not only do the constraint violations become more frequent, but the severity also increases, with maximum violations exceeding 0.08\,m.

	\section{Car Seat Placement in 3D}
\label{sec:car-seat-placement}

This section demonstrates the proposed method on a car seat placement task in Tesseract\footnote{github.com/tesseract-robotics/tesseract\_planning/blob/master/\\tesseract\_examples/src/car\_seat\_example.cpp}, focusing on \textit{nominal} collision avoidance at discrete time grids.
 We first consider an open-loop trajectory planning problem and compare the numerical performance with benchmark methods.
We then switch to an \gls{mpc} setting and show that the proposed method achieves a control sampling frequency of 10\,Hz.

A Yaskawa SIA20D robot arm (7-DOF) on a carriage rail is used to place a seat into a car, maintaining collision-free motion through the narrow car door (see Fig.~\ref{fig:car_seat_rviz}).
The seat is modeled as a union of ten polyhedra, each containing hundreds of faces.
For our experiment, we scale the polyhedra and pad each scaled polyhedron with a sphere of radius 0.08\,m.
The original seat geometry is completely contained within its scaled and padded version.
The car environment is represented by a point cloud sampled from the car mesh.

\subsection{Open-Loop Trajectory Planning}
\label{sec:car-seat-open-loop-planning}
Here we consider an open-loop trajectory planning problem.
Let $q \in \bbR^{8}$ denote the joint configuration, which concatenates the prismatic joint to the carriage rail and the revolute joints of the robot arm.
The system state is the joint configuration~$q$,
    and the system input is the velocity~$\dot{q}$.

\begin{figure}[t]
	\centering
	\captionsetup{skip=-2pt}
	\includegraphics[width=0.7\linewidth]{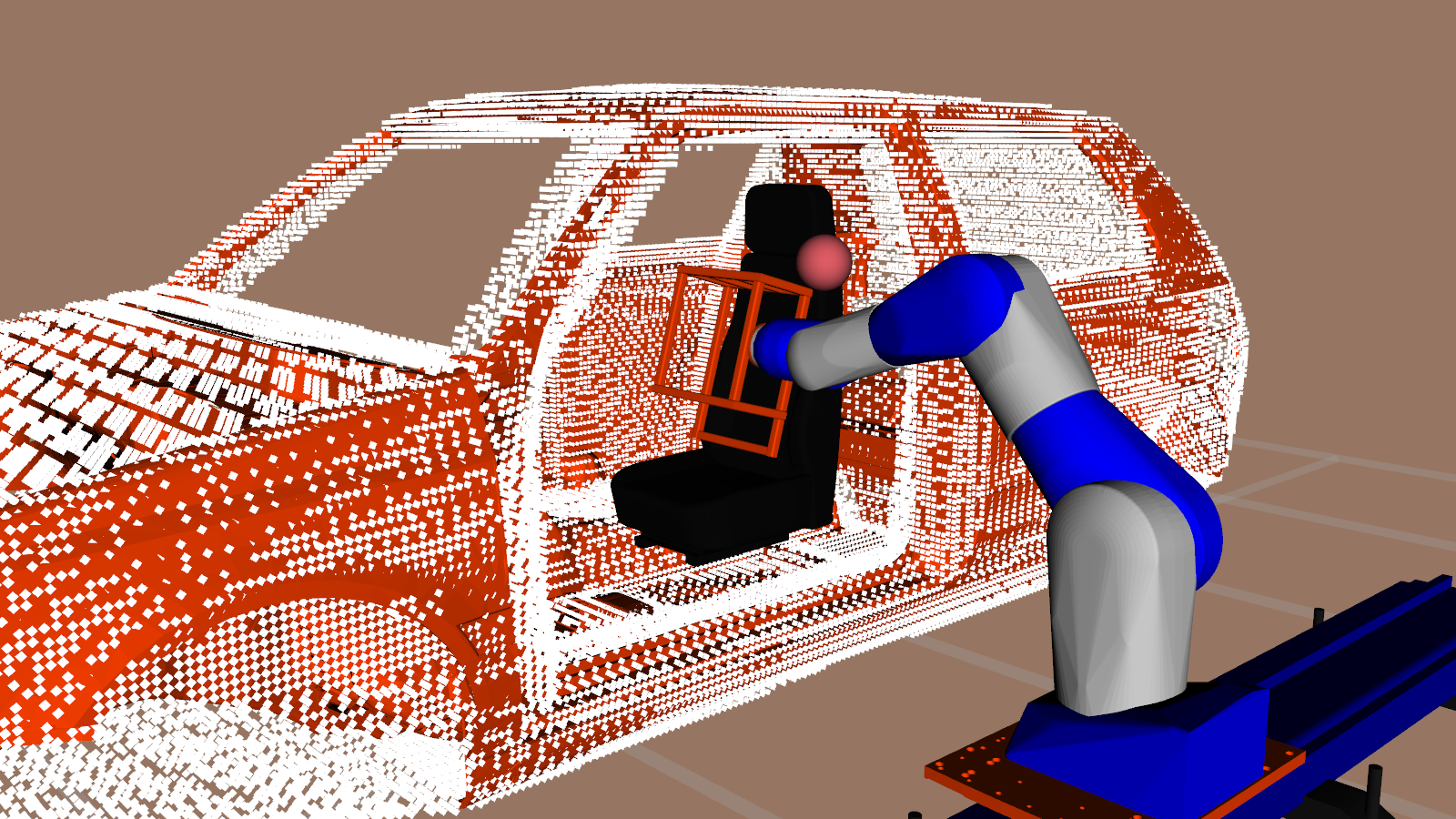}
    \caption{Car seat placement task in 3D.
		The car environment is represented as a point cloud (see the white dots),
		and the car seat is modeled as a union of ten padded polyhedra.
		The pink sphere depicts one padding sphere.
	}
	\label{fig:car_seat_rviz}
\end{figure}

\paragraph{Problem formulation}
Given start and target joint configurations, the goal is to find a trajectory that minimizes the sum of squared joint velocities.
In addition to this cost, we include in the objective function a small cost on the difference between the joint configuration at each time step $k$ and the target configuration to improve numerical stability.
Let $q_{\mathrm{tgt}}$ denote the target configuration.
The stage cost is given by
\begin{equation}
    L_k(x_k, u_k) := \left\|q_k - q_{\mathrm{tgt}}\right\|_{Q}^2 +  \left\|\dot{q}_k\right\|_{R}^2,
\end{equation}
with $Q, R \in \mathbb{S}^{8}_{+}$ being positive semi-definite.
The terminal equality constraint $q_N = q_{\mathrm{tgt}}$ ensures that the target is reached.
Collision avoidance between the seat and the car is ensured by imposing an infinite number of collision-avoidance constraints~\eqref{eq:nominal-ocp-collision-constraints} per polyhedron of the seat.
Affine constraints that limit the joint velocities are also enforced.

\begin{figure*}[t]
	\centering
	\captionsetup{skip=-6pt}
	\begin{minipage}[t]{.32\textwidth}
		\centering
		\includegraphics[width=\linewidth, height=0.15\textheight]{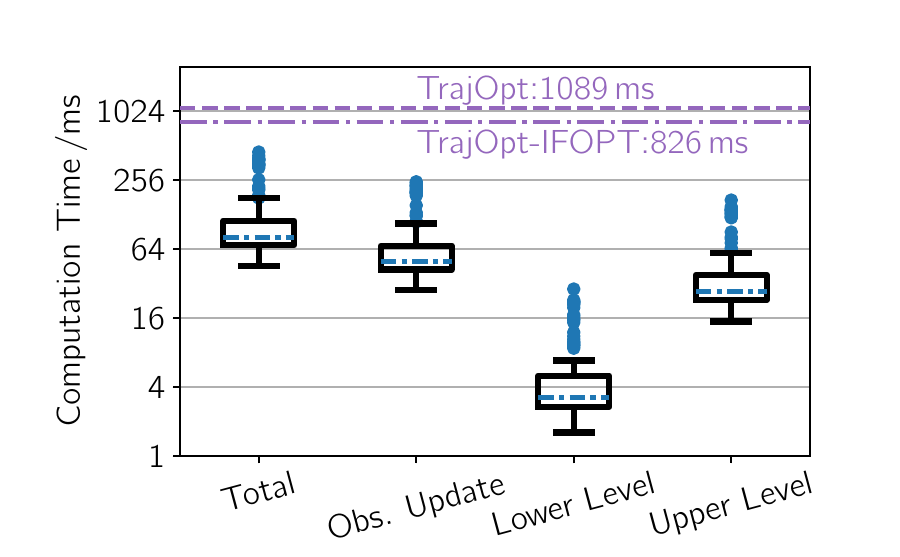}
		\caption{Open-loop planning for the car-seat placement: Computation time. The blue dashdotted lines are the median.
		}
		\label{fig:sia20d_car_seat_planner_computation_time}
	\end{minipage}
    \hfill
	\begin{minipage}[t]{.32\textwidth}
		\centering
		\includegraphics[width=\linewidth, height=0.15\textheight]{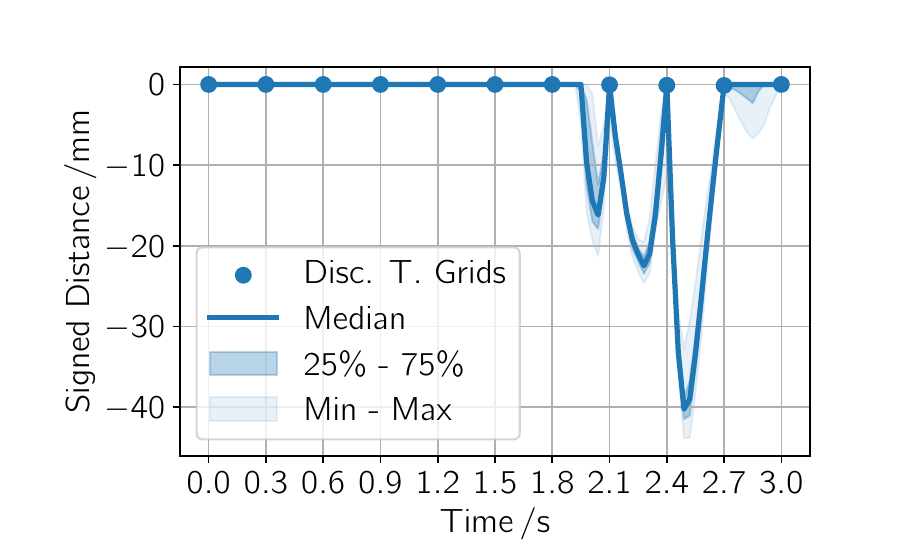}
		\caption{Open-loop planning: Signed distance of the seat to the car environment.
		Distances greater than zero are not computed and are plotted as zeros.}
		\label{fig:sia20d_car_seat_planner_signed_distance}
	\end{minipage}
    \hfill
    \begin{minipage}[t]{.32\textwidth}
    	\centering
    	\includegraphics[width=\linewidth]{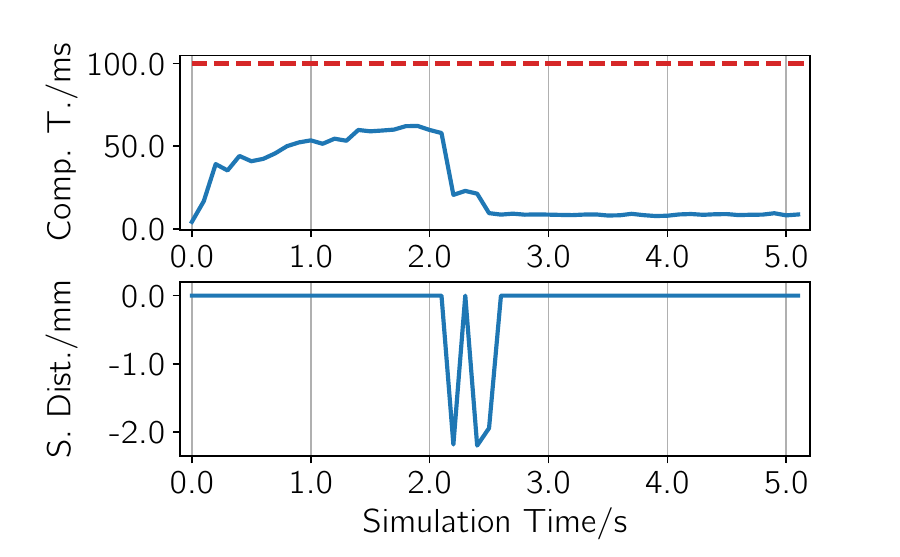}
    	\caption{MPC for the car-seat placement: computation time and signed distance to the car (evaluated at 10\,Hz as the control sampling frequency).}
    	\label{fig:sia20d_car_seat_mpc}
    \end{minipage}
\end{figure*}

\paragraph{Software implementation}
The Coal library~\cite{coalweb} is used for \mbox{\textsc{UpdateObsSubset}} in Algorithm~\ref{alg:nominal-collision-free-trajectory}.
It detects the point obstacles whose distances to the polyhedra are not greater than the padding radius.
For each polyhedron, the point obstacle with the minimum signed distance to the polyhedron is added to the obstacle subset $\mathcal{O}_{\mathrm{s}, k}$.
The cardinality of the subset is limited to 30.
If this limit is exceeded,  the obstacle corresponding to the minimum value of the linearized constraint $\check{h}_{\text{coll}}\left(\cdot \right)$ is removed.
Pinocchio~\cite{Carpentier2019} computes the forward kinematics and Jacobians.
The point cloud of the car is managed by OctoMap~\cite{Hornung2013} with a resolution of 0.015\,m.
The upper-level problem is solved with acados~\cite{Verschueren2021}, initialized by linear interpolation between start and target configurations.

\paragraph{Computation time and comparison to TrajOpt}
For evaluation of computation time, we vary the target seat pose and compute the corresponding joint configurations via inverse kinematics.
A total of 194 test cases are evaluated,
    and the results are reported in Fig.~\ref{fig:sia20d_car_seat_planner_computation_time}.
The median computation time is approximately 81\,ms, comprising 50\,ms for updating the obstacle subsets, 3\,ms for finding the lower-level maximizers, and 27\,ms for solving the upper-level \gls{nlp} subproblems.
To benchmark numerical performance, we compare the proposed method with TrajOpt~\cite{Schulman2014}.
For a fair comparison, we use the same number of discretization intervals,
    consider only the discrete-time collision-avoidance constraints,
    and include only the collision avoidance between the car and the seat,
    i.e., collision concerns between the robot joints and between the seat and the joints are neglected.
For the test case provided in the open-source code, the computation times for TrajOpt and TrajOpt-IFOPT are 1089\,ms and 826\,ms, respectively.

\paragraph{Collision-avoidance satisfaction at finer grids}
We evaluate the satisfaction of the collision-avoidance conditions on a grid that is ten times finer than the grid on which the constraints are enforced.
As shown in Fig.~\ref{fig:sia20d_car_seat_planner_signed_distance}, the maximum constraint violation is 48\,mm.
Distances greater than zero are not computed and are plotted as zeros.

\subsection{Model Predictive Control}
While the open-loop planning provides a trajectory for the robot arm to follow,
    real-world discrepancies (e.g., car pose or car configuration mismatches) require online adaptation.
The robot can use LiDAR to sense the environment and adapt its control inputs in an \gls{mpc} framework.
Compared to open-loop planning,
    the \gls{mpc} focuses on a shorter prediction horizon and employs a finer time grid to achieve improved collision-avoidance performance.

\paragraph{Problem formulation}
The system state of the~\gls{ocp} consists of the robot joint configuration~$q$ and the joint velocity~$\dot{q}$,
and the system input is the acceleration~$\ddot{q}$:
\begin{equation}
    x = \begin{bmatrix}
        q \\ \dot{q}
    \end{bmatrix} \in \bbR^{16},
    u = \begin{bmatrix}
        \ddot{q}
    \end{bmatrix} \in \bbR^{8},
    \dot{x} = \begin{bmatrix}
        \dot{q} \\ \ddot{q}
    \end{bmatrix}.
\end{equation}
The objective function of the \gls{ocp} is to track a reference trajectory defined in the joint space.
Denote the reference state trajectory by
    $x_{\mathrm{ref}, 0}, \cdots, x_{\mathrm{ref}, N}$.
The stage and terminal costs of the \gls{ocp} are given by
\begin{equation}
    \begin{split}
        L_k(x_k, u_k) &:= \left\|x_k - x_{\mathrm{ref}, k}\right\|_{Q}^2 +  \left\|u_k - u_{\mathrm{ref}, k}\right\|_{R}^2, \\
        L_N(x_N) &:= \left\|x_N - x_{\mathrm{ref}, N}\right\|_{Q_N}^2,
    \end{split}
\end{equation}
with $Q\in\mathbb{S}^{16}_{+}$, $R\in\mathbb{S}^{8}_{+}$, and $Q_N\in\mathbb{S}^{16}_{+}$ being positive semi-definite.
In addition to the collision-avoidance and velocity constraints enforced in the open-loop planning problem,
    affine constraints on the joint accelerations are imposed.

\paragraph{Implementation details}
The software implementation follows that of the open-loop planning problem.
To reduce computation time, we crop the point cloud to exclude the front and trunk of the car.
This also reflects the real-world scenario in which a LiDAR sensor mounted on the robot end effector is unlikely to perceive the entire environment.
The maximum number of iterations is set to eight, and both the control sampling and discretization intervals are 100\,ms.

\paragraph{Results}
The top of Fig.~\ref{fig:sia20d_car_seat_mpc} shows that the computation time for solving the \glspl{ocp} remains below the control sampling interval.
The computation time gradually increases and reaches its peak at approximately 62\,ms when the seat passes through the car door.
Similar to the open-loop planning, signed distances greater than zero are not computed and are plotted as zeros.
The signed distance, evaluated only at the control sampling frequency, has a minimum value of \mbox{-2.2\,mm}.

    \section{Conclusions}
    This paper presents an \gls{sip} formulation for collision-avoidance \glspl{ocp}.
    The nominal \gls{ocp} is efficiently solved without any approximations using the local reduction and external active-set method.
    The robust \gls{ocp}, through the approximate reformulation of the robust collision-avoidance constraints and the zero-order update of the uncertainty trajectories,
        is solved in approximation with only a slight increase in the computational cost compared to the nominal OCP.
    The nominal and robust controllers implemented based on the proposed method enable a real-world differential-drive robot to navigate tight spaces,
	    and the enhanced safety guarantee provided by the robust controller is confirmed in the experiments.
    The proposed method is also demonstrated on a 3D car-seat placement task.
    An important direction for future work is to extend the approach to continuous-time collision avoidance
		and to develop a benchmark for collision-avoidance methods with standardized interfaces and well-designed experimental settings.

	% \section*{Acknowledgments}
	% This should be a simple paragraph before the References to thank those individuals and institutions who have supported your work on this article.

    \vspace{-0.3\baselineskip}
	\bibliographystyle{IEEEtran}
	\bibliography{refs.bib} % Entries are in the refs.bib file

\begin{thebibliography}{10}
\providecommand{\url}[1]{#1}
\csname url@rmstyle\endcsname
\providecommand{\newblock}{\relax}
\providecommand{\bibinfo}[2]{#2}
\providecommand\BIBentrySTDinterwordspacing{\spaceskip=0pt\relax}
\providecommand\BIBentryALTinterwordstretchfactor{4}
\providecommand\BIBentryALTinterwordspacing{\spaceskip=\fontdimen2\font plus
\BIBentryALTinterwordstretchfactor\fontdimen3\font minus \fontdimen4\font\relax}
\providecommand\BIBforeignlanguage[2]{{%
\expandafter\ifx\csname l@#1\endcsname\relax
\typeout{** WARNING: IEEEtran.bst: No hyphenation pattern has been}%
\typeout{** loaded for the language `#1'. Using the pattern for}%
\typeout{** the default language instead.}%
\else
\language=\csname l@#1\endcsname
\fi
#2}}

\bibitem{Hettich1993}
R.~Hettich and K.~O. Kortanek, ``Semi-infinite programming: Theory, methods, and applications,'' \emph{SIAM Review}, vol.~35, no.~3, pp. 380--429, Sept. 1993.

\bibitem{Gramlich1995}
G.~Gramlich, R.~Hettich, and E.~W. Sachs, ``Local convergence of {SQP} methods in semi-infinite programming,'' \emph{SIAM J. on Optimization}, vol.~5, no.~3, pp. 641--658, Aug. 1995.

\bibitem{Stein2003}
O.~Stein, \emph{Bi-Level Strategies in Semi-Infinite Programming}.\hskip 1em plus 0.5em minus 0.4em\relax Springer US, 2003.

\bibitem{Lopez2007}
M.~A. L{\'o}pez and G.~Still, ``Semi-infinite programming,'' \emph{European J. of Operational Res.}, vol. 180, no.~2, pp. 491--518, July 2007.

\bibitem{Ratliff2009}
N.~Ratliff, M.~Zucker, J.~A. Bagnell, and S.~Srinivasa, ``{CHOMP}: Gradient optimization techniques for efficient motion planning,'' in \emph{IEEE Int. Conf. on Robotics and Automation (ICRA)}, May 2009.

\bibitem{Kalakrishnan2011}
M.~Kalakrishnan, S.~Chitta, E.~Theodorou, P.~Pastor, and S.~Schaal, ``{STOMP}: Stochastic trajectory optimization for motion planning,'' in \emph{IEEE Int. Conf. on Robotics and Automation (ICRA)}, May 2011.

\bibitem{Schulman2014}
J.~Schulman, Y.~Duan, J.~Ho, A.~Lee, I.~Awwal, H.~Bradlow, J.~Pan, S.~Patil, K.~Goldberg, and P.~Abbeel, ``Motion planning with sequential convex optimization and convex collision checking,'' \emph{The Int. J. of Robotics Res.}, vol.~33, pp. 1251--1270, Aug. 2014.

\bibitem{Zimmermann2022}
S.~Zimmermann, M.~Busenhart, S.~Huber, R.~Poranne, and S.~Coros, ``Differentiable collision avoidance using collision primitives,'' in \emph{IEEE/RSJ Int. Conf. on Intelligent Robots and Systems (IROS)}, 2022.

\bibitem{Zhang2021}
X.~Zhang, A.~Liniger, and F.~Borrelli, ``Optimization-based collision avoidance,'' \emph{IEEE Trans. on Control Systems Technology}, vol.~29, no.~3, pp. 972--983, May 2021.

\bibitem{Dietz2023}
C.~Dietz, S.~Albrecht, A.~Nurkanović, and M.~Diehl, ``Efficient collision modelling for numerical optimal control,'' in \emph{Eur. Control Conf. (ECC)}.\hskip 1em plus 0.5em minus 0.4em\relax IEEE, June 2023.

\bibitem{Thirugnanam2022a}
A.~Thirugnanam, J.~Zeng, and K.~Sreenath, ``Safety-critical control and planning for obstacle avoidance between polytopes with control barrier functions,'' in \emph{IEEE Int. Conf. on Robotics and Automation (ICRA)}, 2022.

\bibitem{Tracy2023}
K.~Tracy, T.~A. Howell, and Z.~Manchester, ``Differentiable collision detection for a set of convex primitives,'' in \emph{IEEE Int. Conf. on Robotics and Automation (ICRA)}, May 2023.

\bibitem{Oleynikova2016}
H.~Oleynikova, Z.~Taylor, M.~Fehr, R.~Siegwart, and J.~I. Nieto, ``{Voxblox}: Incremental {3D} {Euclidean} {Signed Distance Fields} for on-board mav planning,'' \emph{IEEE/RSJ Int. Conf. on Intelligent Robots and Systems (IROS)}, 2016.

\bibitem{Boyd2004}
S.~Boyd and L.~Vandenberghe, \emph{Convex Optimization}.\hskip 1em plus 0.5em minus 0.4em\relax Cambridge University Press, Mar. 2004.

\bibitem{Chung2009}
H.~Chung, E.~Polak, and S.~Sastry, ``An external active-set strategy for solving optimal control problems,'' \emph{IEEE Trans. on Automatic Control}, vol.~54, no.~5, pp. 1129--1133, May 2009.

\bibitem{Hauser2021}
K.~Hauser, ``Semi-infinite programming for trajectory optimization with non-convex obstacles,'' \emph{The Int. J. of Robotics Res.}, vol.~40, no. 10–11, pp. 1106--1122, Jan. 2021.

\bibitem{Zhang2025}
M.~Zhang, D.~K. Jha, A.~U. Raghunathan, and K.~Hauser, ``Simultaneous trajectory optimization and contact selection for contact-rich manipulation with high-fidelity geometry,'' \emph{IEEE Trans. on Robotics}, vol.~41, pp. 2677--2690, 2025.

\bibitem{Liang2025}
F.~Liang, Y.~Yang, and S.-L. Dai, ``Point cloud-based control barrier functions for model predictive control in safety-critical navigation of autonomous mobile robots,'' 2025.

\bibitem{Gao2014}
Y.~Gao, A.~Gray, H.~E. Tseng, and F.~Borrelli, ``A tube-based robust nonlinear predictive control approach to semiautonomous ground vehicles,'' \emph{Vehicle System Dynamics}, vol.~52, no.~6, pp. 802--823, Apr. 2014.

\bibitem{Koehler2021}
J.~Köhler, R.~Soloperto, M.~A. Müller, and F.~Allgöwer, ``A computationally efficient robust model predictive control framework for uncertain nonlinear systems,'' \emph{{IEEE} Trans. on Automatic Control}, vol.~66, no.~2, pp. 794--801, Feb. 2021.

\bibitem{Feng2020Adjoint}
X.~Feng, S.~Di~Cairano, and R.~Quirynen, ``Inexact adjoint-based {SQP} algorithm for real-time stochastic nonlinear {MPC},'' \emph{IFAC-PapersOnLine}, vol.~53, no.~2, pp. 6529--6535, 2020.

\bibitem{Zanelli2021zoRO}
A.~Zanelli, J.~Frey, F.~Messerer, and M.~Diehl, ``Zero-order robust nonlinear model predictive control with ellipsoidal uncertainty sets,'' \emph{{IFAC}-{PapersOnLine}}, vol.~54, no.~6, pp. 50--57, 2021.

\bibitem{Richards2002}
A.~Richards and J.~How, ``Aircraft trajectory planning with collision avoidance using mixed integer linear programming,'' in \emph{Proc. of the American Control Conf. (ACC)}, 2002.

\bibitem{Gao2024}
Y.~Gao, F.~Messerer, N.~v. Duijkeren, B.~Houska, and M.~Diehl, ``Real-time-feasible collision-free motion planning for ellipsoidal objects,'' in \emph{IEEE Conf. on Decision and Control (CDC)}, 2024.

\bibitem{Marcucci2023}
T.~Marcucci, M.~Petersen, D.~von Wrangel, and R.~Tedrake, ``Motion planning around obstacles with convex optimization,'' \emph{Science Robotics}, vol.~8, no.~84, Nov. 2023.

\bibitem{Li2024}
Y.~Li, C.~Zheng, K.~Chen, Y.~Xie, X.~Tang, M.~Y. Wang, and J.~Ma, ``Collision-free trajectory optimization in cluttered environments using sums-of-squares programming,'' \emph{IEEE Robotics and Automation Letters}, vol.~9, no.~12, pp. 11\,026--11\,033, Dec. 2024.

\bibitem{Deits2015}
R.~Deits and R.~Tedrake, \emph{Computing Large Convex Regions of Obstacle-Free Space Through Semidefinite Programming}.\hskip 1em plus 0.5em minus 0.4em\relax Springer, 2015, pp. 109--124.

\bibitem{Arrizabalaga2024}
J.~Arrizabalaga, Z.~Manchester, and M.~Ryll, ``Differentiable collision-free parametric corridors,'' in \emph{IEEE/RSJ Int. Conf. on Intelligent Robots and Systems (IROS)}.\hskip 1em plus 0.5em minus 0.4em\relax IEEE, Oct. 2024.

\bibitem{Li2020}
M.~Li, Z.~Ferguson, T.~Schneider, T.~Langlois, D.~Zorin, D.~Panozzo, C.~Jiang, and D.~M. Kaufman, ``Incremental potential contact: intersection-and inversion-free, large-deformation dynamics,'' \emph{ACM Trans. on Graphics}, vol.~39, no.~4, Aug. 2020.

\bibitem{Wang2022}
C.~Wang, H.-C. Lin, S.~Jin, X.~Zhu, L.~Sun, and M.~Tomizuka, ``{BPOMP}: A bilevel path optimization formulation for motion planning,'' in \emph{American Control Conf. (ACC)}, 2022.

\bibitem{Zhang2024a}
D.~Zhang, C.~Liang, X.~Gao, K.~Wu, and Z.~Pan, ``Provably feasible semi-infinite program under collision constraints via subdivision,'' \emph{IEEE Trans. on Robotics}, vol.~40, pp. 2602--2619, 2024.

\bibitem{Khatib1985}
O.~Khatib, ``Real-time obstacle avoidance for manipulators and mobile robots,'' in \emph{IEEE Int. Conf. on Robotics and Automation (ICRA)}, 1985.

\bibitem{Roesmann2017}
C.~Rösmann, F.~Hoffmann, and T.~Bertram, ``Integrated online trajectory planning and optimization in distinctive topologies,'' \emph{Robotics and Autonomous Systems}, vol.~88, pp. 142--153, Feb. 2017.

\bibitem{Liang2024}
C.~Liang, X.~Gao, K.~Wu, and Z.~Pan, ``Second-order convergent collision-constrained optimization-based planner,'' \emph{IEEE Robotics and Automation Letters}, vol.~9, no.~6, pp. 4950--4957, June 2024.

\bibitem{Nocedal2006}
J.~Nocedal and S.~J. Wright, \emph{Numerical optimization}, 2nd~ed.\hskip 1em plus 0.5em minus 0.4em\relax New York, NY: Springer, 2006.

\bibitem{Han1979}
S.~P. Han and O.~L. Mangasarian, ``Exact penalty functions in nonlinear programming,'' \emph{Mathematical Programming}, vol.~17, no.~1, pp. 251--269, Dec. 1979.

\bibitem{Howell2022}
T.~Howell, N.~Gileadi, S.~Tunyasuvunakool, K.~Zakka, T.~Erez, and Y.~Tassa, ``Predictive sampling: Real-time behaviour synthesis with {MuJoCo},'' Dec. 2022.

\bibitem{Fox1997}
D.~Fox, W.~Burgard, and S.~Thrun, ``The dynamic window approach to collision avoidance,'' \emph{IEEE Robotics \& Automation Magazine}, vol.~4, no.~1, pp. 23--33, 1997.

\bibitem{Williams2016}
G.~Williams, P.~Drews, B.~Goldfain, J.~M. Rehg, and E.~A. Theodorou, ``Aggressive driving with model predictive path integral control,'' in \emph{IEEE Int. Conf. on Robotics and Automation (ICRA)}, May 2016.

\bibitem{Williams2017}
G.~Williams, A.~Aldrich, and E.~A. Theodorou, ``Model predictive path integral control: From theory to parallel computation,'' \emph{J. of Guidance, Control, and Dynamics}, vol.~40, no.~2, pp. 344--357, 2017.

\bibitem{Chib2024}
P.~S. Chib and P.~Singh, ``Recent advancements in end-to-end autonomous driving using deep learning: A survey,'' \emph{IEEE Trans. on Intelligent Vehicles}, vol.~9, no.~1, Jan. 2024.

\bibitem{Chen2024}
L.~Chen, P.~Wu, K.~Chitta, B.~Jaeger, A.~Geiger, and H.~Li, ``End-to-end autonomous driving: Challenges and frontiers,'' \emph{IEEE Trans. on Pattern Analysis and Machine Intelligence}, 2024.

\bibitem{Yan2024}
S.~Yan, B.~Zhang, Y.~Zhang, J.~Boedecker, and W.~Burgard, ``Learning continuous control with geometric regularity from robot intrinsic symmetry,'' in \emph{IEEE Int. Conf. on Robotics and Automation (ICRA)}, 2024.

\bibitem{Bojarski2016}
M.~Bojarski, D.~W. del Testa, D.~Dworakowski, B.~Firner, B.~Flepp, P.~Goyal, L.~D. Jackel, M.~Monfort, U.~Muller, J.~Zhang, X.~Zhang, J.~Zhao, and K.~Zieba, ``End to end learning for self-driving cars,'' 2016.

\bibitem{Liu2021}
Z.~Liu, A.~Amini, S.~Zhu, S.~Karaman, S.~Han, and D.~L. Rus, ``Efficient and robust {LiDAR}-based end-to-end navigation,'' in \emph{IEEE Int. Conf. on Robotics and Automation (ICRA)}, May 2021.

\bibitem{Lee2022}
C.~Lee, Q.~Van~Tran, and J.~Kim, ``Robust path tracking and obstacle avoidance using tube-based model predictive control for surface vehicles,'' \emph{IFAC-PapersOnLine}, vol.~55, no.~31, pp. 301--306, 2022.

\bibitem{Majumdar2017}
A.~Majumdar and R.~Tedrake, ``Funnel libraries for real-time robust feedback motion planning,'' \emph{The Int. J. of Robotics Research}, vol.~36, no.~8, pp. 947--982, June 2017.

\bibitem{Singh2017}
S.~Singh, A.~Majumdar, J.-J. Slotine, and M.~Pavone, ``Robust online motion planning via contraction theory and convex optimization,'' in \emph{IEEE Int. Conf. on Robotics and Automation (ICRA)}.\hskip 1em plus 0.5em minus 0.4em\relax IEEE, 2017.

\bibitem{Garimella2018}
G.~Garimella, M.~Sheckells, J.~L. Moore, and M.~Kobilarov, ``Robust obstacle avoidance using tube {NMPC}.'' in \emph{Robotics: Science and Systems (RSS)}, 2018.

\bibitem{Gao2023}
Y.~Gao, F.~Messerer, J.~Frey, N.~van Duijkeren, and M.~Diehl, ``Collision-free motion planning for mobile robots by zero-order robust optimization-based {MPC},'' in \emph{European Control Conf. ({ECC})}, June 2023.

\bibitem{Zhang2024}
S.~Zhang, M.~Bos, B.~Vandewal, W.~Decré, J.~Gillis, and J.~Swevers, ``Robustified time-optimal collision-free motion planning for autonomous mobile robots under disturbance conditions,'' in \emph{IEEE Int. Conf. on Robotics and Automation (ICRA)}, May 2024.

\bibitem{Batkovic2021}
I.~Batkovic, U.~Rosolia, M.~Zanon, and P.~Falcone, ``A robust scenario {MPC} approach for uncertain multi-modal obstacles,'' \emph{IEEE Control Systems Letters}, vol.~5, no.~3, pp. 947--952, July 2021.

\bibitem{Calafiore2006}
G.~Calafiore and M.~Campi, ``The scenario approach to robust control design,'' \emph{IEEE Trans. on Automatic Control}, vol.~51, no.~5, pp. 742--753, May 2006.

\bibitem{Groot2021}
O.~de~Groot, B.~Brito, L.~Ferranti, D.~Gavrila, and J.~Alonso-Mora, ``Scenario-based trajectory optimization in uncertain dynamic environments,'' \emph{IEEE Robotics and Automation Letters}, vol.~6, no.~3, pp. 5389--5396, 2021.

\bibitem{Zagorowska2023}
M.~Zagorowska, P.~Falugi, E.~O’Dwyer, and E.~C. Kerrigan, ``Automatic scenario generation for efficient solution of robust optimal control problems,'' \emph{Int. J. of Robust and Nonlinear Control}, vol.~34, no.~2, pp. 1370--1396, Oct. 2023.

\bibitem{Jia2005}
D.~Jia, B.~H. Krogh, and O.~Stursberg, ``{LMI} approach to robust model predictive control,'' \emph{J. of Optimization Theory and Applications}, vol. 127, no.~2, pp. 347--365, Nov. 2005.

\bibitem{Wehbeh2025}
J.~Wehbeh and E.~C. Kerrigan, ``State-dependent uncertainty modeling in robust optimal control problems through generalized semi-infinite programming,'' 2025.

\bibitem{Wei2023}
T.~Wei, S.~Kang, W.~Zhao, and C.~Liu, ``Persistently feasible robust safe control by safety index synthesis and convex semi-infinite programming,'' \emph{IEEE Control Systems Letters}, vol.~7, pp. 1213--1218, 2023.

\bibitem{Zhang2024b}
L.~Zhang, Y.~Peng, W.~Yang, and Z.~Zhang, ``Semi-infinitely constrained markov decision processes and provably efficient reinforcement learning,'' \emph{IEEE Trans. on Pattern Analysis and Machine Intelligence}, vol.~46, no.~5, pp. 3722--3735, May 2024.

\bibitem{Bock1984}
H.~Bock and K.~Plitt, ``A multiple shooting algorithm for direct solution of optimal control problems,'' \emph{{IFAC} Proc. Volumes}, vol.~17, no.~2, pp. 1603--1608, July 1984.

\bibitem{Nagy2003a}
Z.~Nagy and R.~Braatz, ``{R}obust nonlinear model predictive control of batch processes,'' \emph{{AIChE} J.}, vol.~49, no.~7, pp. 1776--1786, 2003.

\bibitem{Diehl2006c}
M.~Diehl, H.~Bock, and E.~Kostina, ``An approximation technique for robust nonlinear optimization,'' \emph{Mathematical Programming}, vol. 107, pp. 213--230, 2006.

\bibitem{Messerer2021}
F.~Messerer and M.~Diehl, ``An efficient algorithm for tube-based robust nonlinear optimal control with optimal linear feedback,'' in \emph{IEEE Conf. on Decision and Control (CDC)}, 2021.

\bibitem{Koller2018}
T.~Koller, F.~Berkenkamp, M.~Turchetta, and A.~Krause, ``Learning-based model predictive control for safe exploration,'' in \emph{IEEE Conf. on Decision and Control (CDC)}, 2018.

\bibitem{Houska2011}
B.~Houska, ``Robust optimization of dynamic systems,'' Ph.D. dissertation, KU Leuven, 2011.

\bibitem{Leeman2023}
A.~P. Leeman, J.~Sieber, S.~Bennani, and M.~N. Zeilinger, ``Robust optimal control for nonlinear systems with parametric uncertainties via system level synthesis,'' in \emph{IEEE Conf. on Decision and Control (CDC)}, 2023.

\bibitem{Jittorntrum1984}
K.~Jittorntrum, \emph{Solution point differentiability without strict complementarity in nonlinear programming}.\hskip 1em plus 0.5em minus 0.4em\relax Springer, 1984, pp. 127--138.

\bibitem{Schittkowski1992}
K.~Schittkowski, ``Solving nonlinear programming problems with very many constraints,'' \emph{Optimization}, vol.~25, no. 2–3, pp. 179--196, 1992.

\bibitem{BenTal1998}
A.~Ben-Tal and A.~Nemirovski, ``Robust convex optimization,'' \emph{Mathematics of operations research}, vol.~23, no.~4, pp. 769--805, 1998.

\bibitem{Rawlings2017}
J.~B. Rawlings, D.~Q. Mayne, and M.~M. Diehl, \emph{Model Predictive Control: Theory, Computation, and Design}, 2nd~ed.\hskip 1em plus 0.5em minus 0.4em\relax Nob Hill, 2017.

\bibitem{Frey2024a}
J.~Frey, Y.~Gao, F.~Messerer, A.~Lahr, M.~N. Zeilinger, and M.~Diehl, ``Efficient zero-order robust optimization for real-time model predictive control with acados,'' in \emph{European Control Conf. (ECC)}, 2024.

\bibitem{Goulart2024}
P.~J. Goulart and Y.~Chen, ``Clarabel: An interior-point solver for conic programs with quadratic objectives,'' 2024.

\bibitem{Siciliano2009}
B.~Siciliano, L.~Sciavicco, L.~Villani, and G.~Oriolo, \emph{Robotics: Modelling, Planning and Control}.\hskip 1em plus 0.5em minus 0.4em\relax Springer London, 2009.

\bibitem{Sola2018}
J.~Solà, J.~Deray, and D.~Atchuthan, ``A micro {Lie} theory for state estimation in robotics,'' 2018.

\bibitem{Bobrow1985}
J.~E. Bobrow, S.~Dubowsky, and J.~S. Gibson, ``Time-optimal control of robotic manipulators along specified paths,'' \emph{The Int. J. of Robotics Res.}, vol.~4, no.~3, pp. 3--17, 1985.

\bibitem{Grevera2004}
G.~J. Grevera, ``The “dead reckoning” signed distance transform,'' \emph{Computer Vision and Image Understanding}, vol.~95, no.~3, pp. 317--333, Sept. 2004.

\bibitem{VanOverschee1994}
P.~Van~Overschee and B.~De~Moor, ``{N4SID}: subspace algorithms for the identification of combined deterministic-stochastic systems,'' \emph{Automatica}, vol.~30, no.~1, p. 75–93, Jan. 1994.

\bibitem{Verschueren2021}
R.~Verschueren, G.~Frison, D.~Kouzoupis, J.~Frey, N.~v. Duijkeren, A.~Zanelli, B.~Novoselnik, T.~Albin, R.~Quirynen, and M.~Diehl, ``acados—a modular open-source framework for fast embedded optimal control,'' \emph{Mathematical Programming Computation}, vol.~14, no.~1, pp. 147--183, Oct. 2021.

\bibitem{Frison2020}
G.~Frison and M.~Diehl, ``{HPIPM}: a high-performance quadratic programming framework for model predictive control,'' \emph{{IFAC}-{PapersOnLine}}, vol.~53, no.~2, pp. 6563--6569, 2020.

\bibitem{Macenski2020}
S.~Macenski, F.~Martin, R.~White, and J.~Ginés~Clavero, ``The marathon 2: A navigation system,'' in \emph{IEEE/RSJ Int. Conf. on Intelligent Robots and Systems (IROS)}, Oct. 2020.

\bibitem{Diehl2005}
M.~Diehl, H.~G. Bock, and J.~P. Schlöder, ``A real-time iteration scheme for nonlinear optimization in optimal feedback control,'' \emph{{SIAM} J. on Control and Optimization}, vol.~43, no.~5, pp. 1714--1736, Jan. 2005.

\bibitem{Daniel2010}
K.~Daniel, A.~Nash, S.~Koenig, and A.~Felner, ``Theta*: Any-angle path planning on grids,'' \emph{J. of Artificial Intelligence Res.}, vol.~39, pp. 533--579, Oct. 2010.

\bibitem{coalweb}
J.~Pan, S.~Chitta, D.~Manocha, F.~Lamiraux, J.~Mirabel, J.~Carpentier, L.~Montaut, \emph{et~al.}, ``Coal: an extension of the flexible collision library,'' 2015--2024.

\bibitem{Carpentier2019}
J.~Carpentier, G.~Saurel, G.~Buondonno, J.~Mirabel, F.~Lamiraux, O.~Stasse, and N.~Mansard, ``The {Pinocchio C++} library -- a fast and flexible implementation of rigid body dynamics algorithms and their analytical derivatives,'' in \emph{IEEE Int. Symp. on System Integrations (SII)}, 2019.

\bibitem{Hornung2013}
A.~Hornung, K.~M. Wurm, M.~Bennewitz, C.~Stachniss, and W.~Burgard, ``{OctoMap}: an efficient probabilistic {3D} mapping framework based on octrees,'' \emph{Autonomous Robots}, vol.~34, pp. 189--206, 2013.

\end{thebibliography}

	\vspace{-0.5\baselineskip}
\appendix[Parameter Settings and Robot Configurations]
\newcolumntype{Y}{>{\raggedleft\arraybackslash}X}

\vspace{-\baselineskip}
\begin{table}[h]
	\centering
	\caption{Configurations of the experiments with the mobile robot}
	\label{table:mobile-robot-config-split}
	\vspace{-3pt}
	\begin{tabularx}{\linewidth}{@{} >{\hsize=1.9\hsize}X >{\hsize=.35\hsize}Y >{\hsize=0.6\hsize}Y >{\hsize=1.15\hsize}Y @{}}
		\hline \hline
		Name & Unit & Symbol & Value \\
		\hline \hline
		\multicolumn{3}{@{}l}{\textbf{Robot Configuration}} &\\
		\hline
		polygon vertices & m & & (-0.18, $\pm$ 0.11)  \\
		&  & & (0.45, $\pm$ 0.11)\\
		% &  & & (0.45,  0.11)\\
		% &  & & (-0.18,  0.11)\\
		&  & & (-0.33,  0.00)\\
		polygon padding & m & $r_{\mathrm{shp}}$ & 0.20\\
		\multicolumn{3}{@{}l}{max. eigval. of $W_k$ for a 50\,ms disc. intvl.} & 2.5$\times 10^{\shortminus 4}$\\
		\multicolumn{3}{@{}l}{scale of cov. of kinematic state for $k=0$} & 4$\times10^{\shortminus 5}$\\
		\hline \hline
		\multicolumn{3}{@{}l}{\textbf{Environment Configuration}} &\\
		\hline
		point cloud resolution & m & & 0.02\\
		\multicolumn{3}{@{}l}{\# point obs., L-Corridor} & 821\\
		\multicolumn{3}{@{}l}{\# point obs., Docking Station} & 806 \\
		\multicolumn{3}{@{}l}{\# point obs., S-Corridor} & 1594 \\
		\multicolumn{3}{@{}l}{\# point obs., Walkway} & 1830 \\
		\hline \hline
	\end{tabularx}
\end{table}

\noindent
\begin{minipage}{\linewidth}
	\centering

	\setlength{\tabcolsep}{3.85pt}
	\captionof{table}{Configurations of the robot velocity and acceleration limits}
	\label{table:varying-vel-acc-limits}
	\vspace{-3pt}
	\begin{tabular}{@{}l r r r r @{}}
		\hline \hline
		& max.\ $v$\,/ms${}^{\shortminus1}$ & max.\ $\omega$\,/s${}^{\shortminus1}$ & max.\ $a$\,/ms${}^{\shortminus2}$ & max.\ $\alpha$\,/s${}^{\shortminus2}$ \\
		\hline
		Slow & 0.8 & 0.8 & 1.2 & 1.2 \\
		Med. & 1.2 & 1.2 & 1.6 & 1.6 \\
		Fast & 1.6 & 1.6 & 2.0 & 2.0 \\
		\hline \hline
	\end{tabular}

	\vspace{1.5\baselineskip}

	\captionof{table}{Parameters used in the experiments with the mobile robot}
	\label{table:mobile-robot-params-split}
	\begin{tabularx}{\linewidth}{@{} >{\hsize=1.9\hsize}X >{\hsize=.35\hsize}Y >{\hsize=0.6\hsize}Y >{\hsize=1.15\hsize}Y @{}}
		\hline \hline
		Name & Unit & Symbol & Value \\
		\hline \hline
		$\epsilon_{\text{inside}}$ in Algorithm~\ref{alg:identify-min-dist-obs} & m & $\epsilon_{\text{inside}}$ & 0.03\\
		$\discSet$ grid size in~\eqref{eq:min-sd-grid-search} & m & &0.016\\
		\multicolumn{3}{@{}l}{scaling of unc. matrices for robustification} & 1.0 \\
		\hline \hline
		\multicolumn{3}{@{}l}{\textbf{Numerical Evaluation}} & \\
		\hline
		\# disc. intervals & & N & 30 \\
		disc. interval & ms & & 200 \\
		\multicolumn{3}{@{}l}{max. cardinality of $\mathcal{O}_{\mathrm{s}, k}$} & 25 \\
		\multicolumn{3}{@{}l}{max. \# upper-level iters.} & 100 \\
		cvrg. criteria & & $\epsilon_{\text{cvg}}$ & $10^{\shortminus 6}$ \\
		\hline \hline
		\multicolumn{3}{@{}l}{\textbf{Real-robot MPC Experiments}} & \\
		\hline
		\multicolumn{2}{@{}l}{\# disc. intervals (non-uniform)} & N & 20 \\
		sampling rate & ms &  & 50 \\
		prediction horizon & ms & & 2400 \\
		\multicolumn{3}{@{}l}{max. cardinality of $\mathcal{O}_{\mathrm{s}, k}$} & 6 \\
		\multicolumn{3}{@{}l}{max. \# upper-level iters. (robust)} & 5 \\
		\multicolumn{3}{@{}l}{max. \# upper-level iters. (nominal)} & 6 \\
		cvrg. criteria & & $\epsilon_{\text{cvg}}$ & $10^{\shortminus 5}$ \\
		\hline \hline
	\end{tabularx}

	\vspace{1.5\baselineskip}

	\captionof{table}{Parameters used in the car seat placement task}
	\label{table:car-seat-params}
	\begin{tabularx}{\linewidth}{@{} >{\hsize=1.9\hsize}X >{\hsize=.35\hsize}Y >{\hsize=0.6\hsize}Y >{\hsize=1.15\hsize}Y @{}}
		\hline \hline
		Name & Unit & Symbol & Value \\
		\hline \hline
		\textbf{Open-Loop Planning} & & &\\
		\hline
		point cloud resolution & m && 0.015 \\
		\# octomap nodes &&& 491171\\
		\# disc. intervals & & N & 10 \\
		disc. interval & ms & & 300 \\
		max. cardinality of $\mathcal{O}_{\mathrm{s}, k}$ & & & 30 \\
		max. \# upper-level iters. & & & 100 \\
		cvrg. criteria & & $\epsilon_{\text{cvg}}$ & $10^{\shortminus 6}$ \\
		\hline \hline
		\textbf{MPC in Simulation} & & &\\
		\hline
		point cloud resolution & m && 0.015 \\
		\# octomap nodes &&& 311442\\
		\# disc. intervals & & N & 20 \\
		sampling rate & ms & & 100 \\
		max. cardinality of $\mathcal{O}_{\mathrm{s}, k}$ & & & 30 \\
		max. \# upper-level iters. & & & 10 \\
		cvrg. criteria & & $\epsilon_{\text{cvg}}$ & $10^{\shortminus 5}$ \\
		\hline \hline
	\end{tabularx}
\end{minipage}

	\begin{IEEEbiography}
	[{\includegraphics[width=1in,height=1.25in,clip,keepaspectratio]{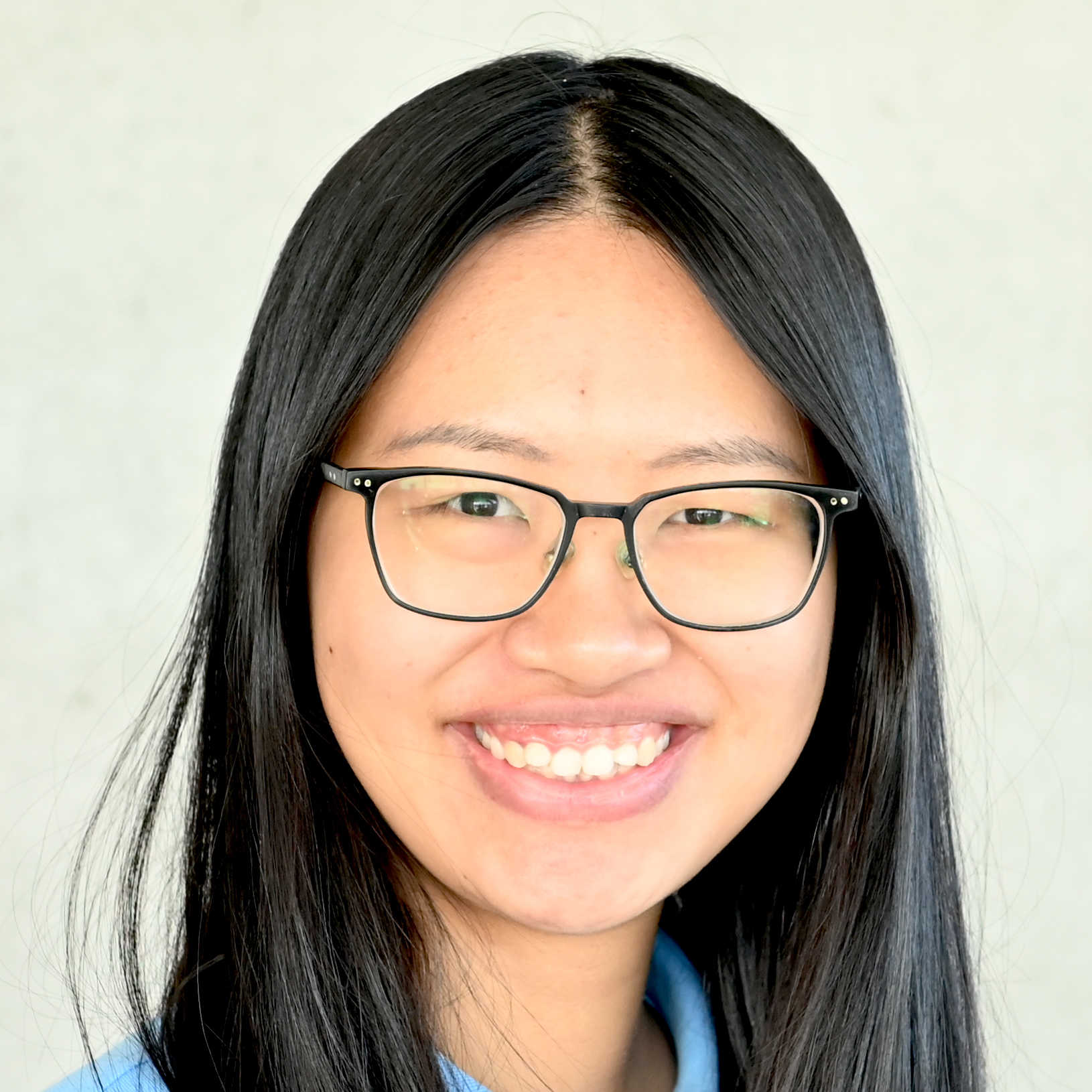}}]
	{Yunfan Gao} completed her master's degree in Robotics, Systems, and Control at ETH Zurich, Switzerland, in 2022.
	Since March 2022, she is a PhD student at the Systems Control and Optimization Laboratory, University of Freiburg, under the academic supervision of Prof. Moritz Diehl.
	Simultaneously, until September 2025, she was an industrial PhD student at Bosch Corporate Research, under the industrial supervision of Dr. Niels van Duijkeren.
	Her research interests include motion planning and optimal control, with a current focus on robust collision avoidance for mobile robots operating under uncertainty.
	\end{IEEEbiography}

    \vspace{-1.5\baselineskip}
	\begin{IEEEbiography}
		[{\includegraphics[width=1in, height=1.25in, clip,keepaspectratio]{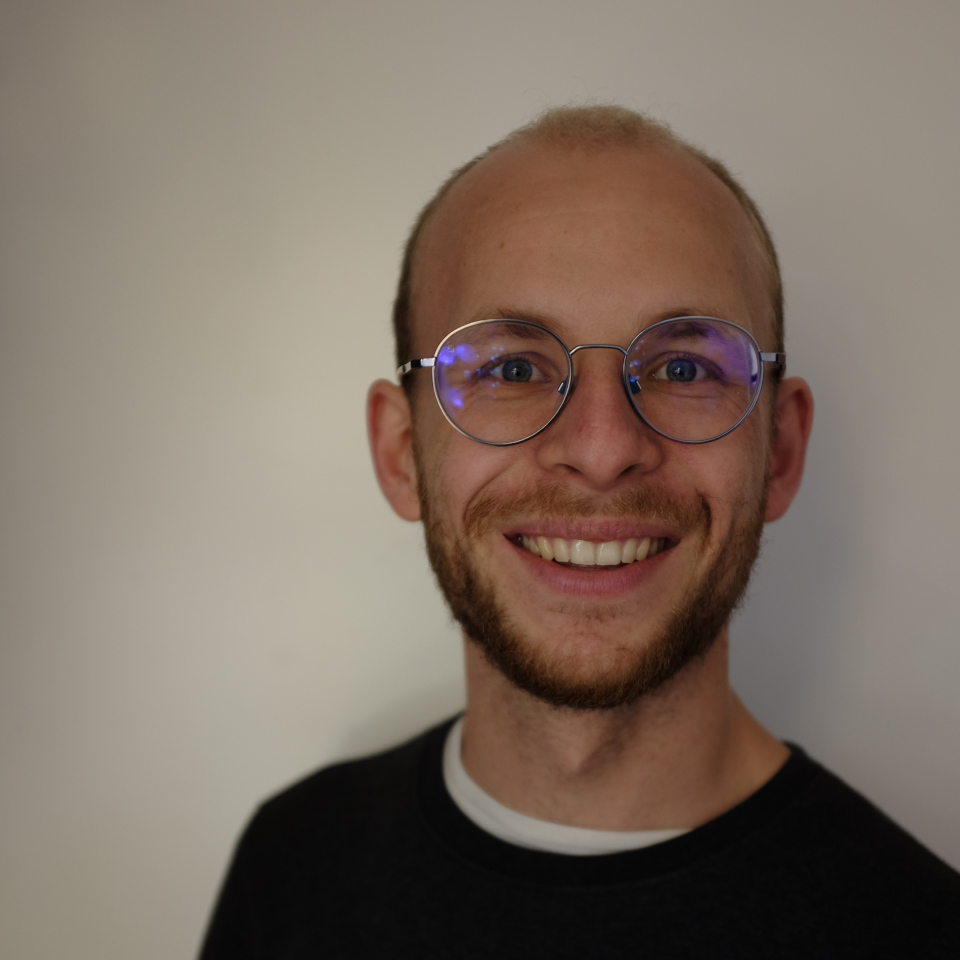}}]
		{Florian Messerer} received a B.Sc. degree in Microsystems Engineering from the University of Freiburg, Germany, in 2016 and an M.Sc. degree in Mathematical Engineering from KU Leuven, Belgium, in 2018.
		He is currently pursuing a PhD degree at the Systems Control and Optimization Laboratory, Department of Microsystems Engineering, University of Freiburg, under the supervision of Prof. Moritz Diehl.
		His research interests include numerical optimization and model predictive control, with a focus on optimal control under uncertainty.
	\end{IEEEbiography}

    \vspace{-1.5\baselineskip}
	\begin{IEEEbiography}
		[{\includegraphics[width=1in,height=1.25in,clip,keepaspectratio]{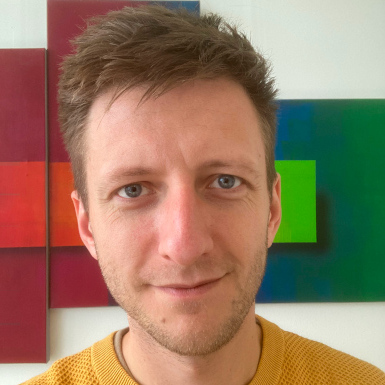}}]{Niels van Duijkeren}
	received his M.Sc. degree in Systems and Control from Delft University of Technology, The Netherlands and his Ph.D. degree from the Motion Estimation Control and Optimization group, Mechanical Engineering, KU Leuven, Belgium.
	His former research primarily focuses on collission-free geometric motion control for robot manipulators and mobile robots.
	He currently works on data-driven methods for decision-making and control, estimation, robust motion planning in dynamic environments, and system identification for adaptive robot control.
	He has co-authored papers in e.g., T-RO, TAC, CDC, IROS, and RSS on topics spanning model predictive control, optimization methods, and machine learning.
	\end{IEEEbiography}

    \vspace{-1.5\baselineskip}
	\begin{IEEEbiography}
	[{\includegraphics[width=1in,height=1.25in,clip,keepaspectratio]{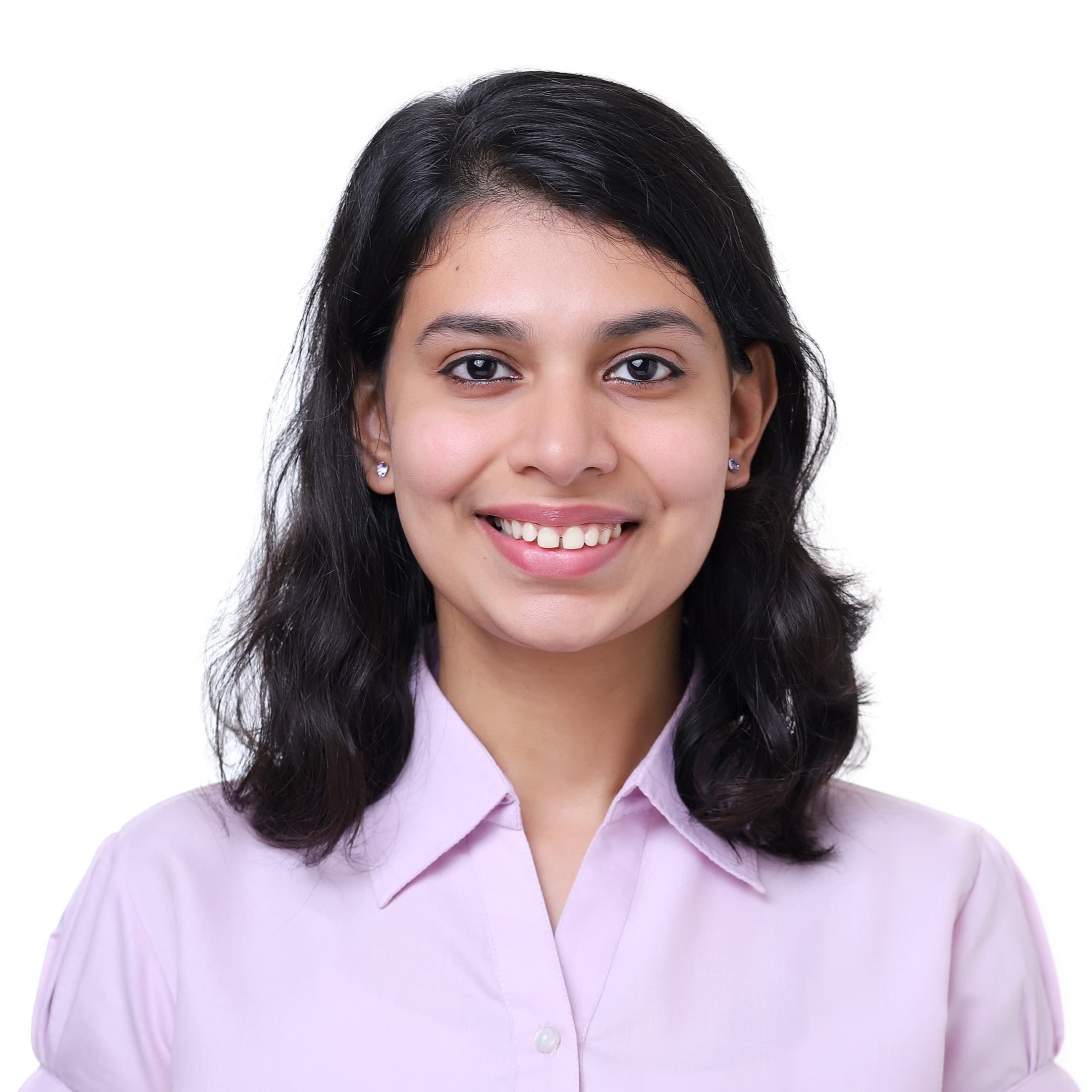}}]{Rashmi Dabir}
	is a research associate at the Institute of Automatic Control, RWTH Aachen University, Germany.
	She completed her M.Sc. degree in Embedded Systems Engineering from University of Freiburg, Germany.
	Her research interests are optimization-based uncertainty-aware trajectory planning and control.
    \end{IEEEbiography}

    \vspace{-1.5\baselineskip}
	\begin{IEEEbiography}
		[{\includegraphics[width=1in,height=1.25in,clip,keepaspectratio]{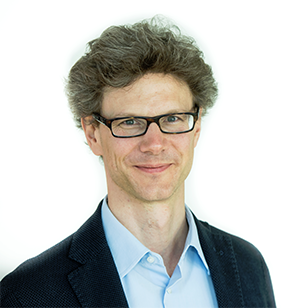}}]
		{Moritz Diehl} is Professor of Systems Control and Optimization at the University of Freiburg, Germany, where he serves as director of the Department of Microsystems Engineering (IMTEK) and as director of the university's Center for Renewable Energy (ZEE).
		He studied physics and mathematics at Heidelberg University, Germany, and Cambridge University, U.K., in 1993-1999, and received the Ph.D. degree from Heidelberg University in 2001.
		From 2006 to 2013, he was a Professor at the Department of Electrical Engineering, KU Leuven, Belgium.
		Since 2013, he is full professor at the Department of Microsystems Engineering in Freiburg, where he is also affiliated with the Department of Mathematics.
		His research interests are in optimization and control, ranging from numerical method development to applications in various branches of engineering, with a focus on embedded real-time implementations and renewable energy systems.
	\end{IEEEbiography}

	\vfill

\end{document}